\newtheorem{thm}{Theorem}%[section]
\newtheorem{lem}{Lemma}
\newtheorem{prop}{Proposition}
\newtheorem{proposition}{Proposition}
\title{HCPO: Hierarchical Conductor-Based Policy Optimization in Multi-Agent Reinforcement Learning}
\author{
    %Authors
    % All authors must be in the same font size and format.
    Zejiao Liu\textsuperscript{\rm 1}\equalcontrib, Junqi Tu\textsuperscript{\rm 2}\equalcontrib, Yitian Hong\textsuperscript{\rm 2}\equalcontrib, Luolin Xiong\textsuperscript{\rm 2}, Yaochu Jin\textsuperscript{\rm 3}\footnotemark[2], Yang Tang\textsuperscript{\rm 2}\thanks{Corresponding authors.}, Fangfei Li\textsuperscript{\rm 1}\footnotemark[2]\\%^\dagger\thanks{Corresponding authors.}\\
    % AAAI Style Contributions by Pater Patel Schneider,
    % Sunil Issar,\\
    % J. Scott Penberthy,
    % George Ferguson,
    % Hans Guesgen,
    % Francisco Cruz\equalcontrib,
    % Marc Pujol-Gonzalez\equalcontrib
}
\title{My Publication Title --- Single Author}
\author {
    Author Name
}
\title{My Publication Title --- Multiple Authors}
\author {
    % Authors
    First Author Name\textsuperscript{\rm 1,\rm 2},
    Second Author Name\textsuperscript{\rm 2},
    Third Author Name\textsuperscript{\rm 1}
}
\begin{document}

\maketitle

\begin{abstract}
In cooperative Multi-Agent Reinforcement Learning (MARL), efficient exploration is crucial for optimizing the performance of joint policy. However, existing methods often update joint policies via independent agent exploration, without coordination among agents, which inherently constrains the expressive capacity and exploration of joint policies. To address this issue, we propose a conductor-based joint policy framework that directly enhances the expressive capacity of joint policies and coordinates exploration. In addition, we develop a Hierarchical Conductor-based Policy Optimization (HCPO) algorithm that instructs policy updates for the conductor and agents in a direction aligned with performance improvement. A rigorous theoretical guarantee further establishes the monotonicity of the joint policy optimization process. By deploying local conductors, HCPO retains centralized training benefits while eliminating inter-agent communication during execution. Finally, we evaluate HCPO on three challenging benchmarks: StarCraftII Multi-agent Challenge, Multi-agent MuJoCo, and Multi-agent Particle Environment. The results indicate that HCPO outperforms competitive MARL baselines regarding cooperative efficiency and stability.
\end{abstract}

\section{Introduction}\label{introsec}

Cooperative Multi-Agent Reinforcement Learning (MARL) methods have driven significant progress across various fields, including autonomous driving~\cite{chen2025hierarchical}, robot cooperative control~\cite{gu2023safe}, and smart grid~\cite{zhang2022multistep}. However, the increasing number of agents in the environment leads to the exponential growth of the state space and the joint action space, which brings the scalability challenge in MARL. A widely adopted solution to tackle this challenge is the Centralized Training with Decentralized Execution (CTDE) paradigm~\cite{feng2024hierarchical,na2024lagma}. It updates agents’ policies with global information during training, while ensuring that agents make decisions only based on their own local information during execution. Typical CTDE algorithms such as MADDPG~\cite{lowe2017multi}, QMIX~\cite{rashid2020monotonic}, and MAPPO~\cite{yu2022surprising} attract widespread attention for their enhanced coordination and overall effectiveness.

Under the CTDE paradigm, efficient exploration is important in MARL~\cite{zheng2021episodic, xu2023exploration,zhang2023self}. Since parameter sharing restricts the behavioral diversity among agents, consequently impairing their exploration capabilities and impeding task completion~\cite{likaleidoscope, li2025toward}, researchers have developed heterogeneous MARL algorithms~\cite{kuba2022trust, likaleidoscope}. Specifically, to tackle the non-stationarity problem arising from simultaneous agent decisions in heterogeneous settings, sequential update algorithms such as Heterogeneous-Agent Trust Region Policy Optimisation (HATRPO)~\cite{kuba2022trust} and Agent-by-agent Policy Optimization (A2PO)~\cite{wang2023order} have been proposed. This technique allows subsequent agents to integrate the action and policy information of previous agents within each training iteration. However, most of the existing CTDE algorithms, such as HATRPO and A2PO, presume that joint policy is expressed as the product of individual policies. This limits the expressive capacity of joint policy, making it difficult for multi-agent systems to explore the optimal joint policy during training. Therefore, we propose a hierarchical joint policy framework to address the limitation in expression and enhance exploration for better performance.

\begin{figure}[t]
	\centering
	\includegraphics[width=1\columnwidth]{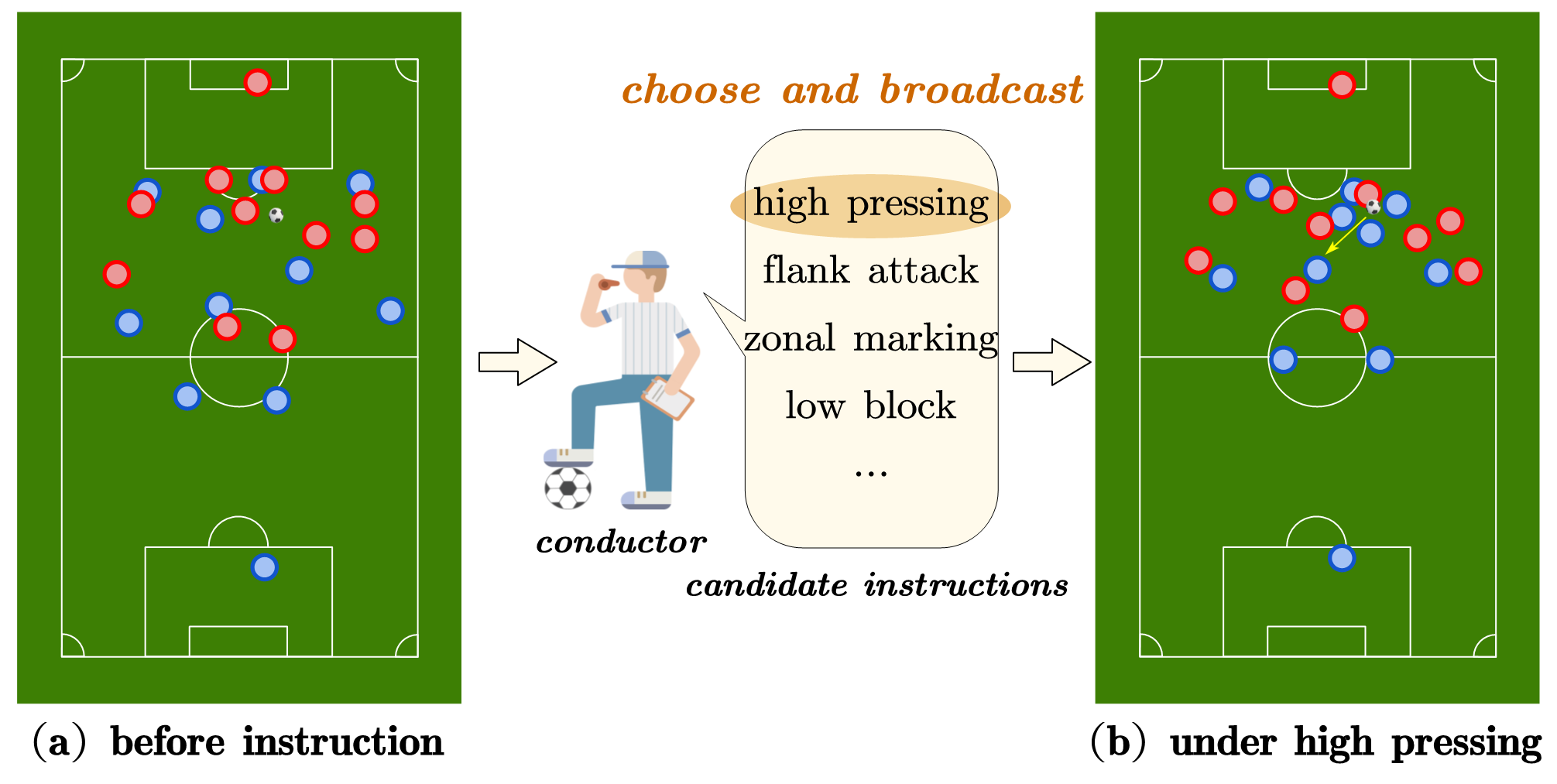} 
	\caption{Visualization of conductor-based instructional impact in multi-agent learning: Blue and red dots represent players from opposing teams, with the blue team conductor providing strategic instructions.}
	\label{conductorfig}
\end{figure}

Inspired by soccer training and competition, we propose a conductor-based method to enhance the ability of joint policy expression and exploration, as shown in Figure~\ref{conductorfig}. A centralized conductor provides instructions to the entire team by considering the on-field status of both sides. Specifically, the team can be instructed to adopt offensive instructions such as high pressing, flank attack, or defensive instructions such as zonal marking and low block. Players make decisions according to the instruction of the conductor and their own observations. For example, after the conductor chooses and broadcasts the ``high pressing'' instruction to all blue teammates, each agent produces its own action by synthesizing this global instruction with its local observation. Consequently, some players drop back to consolidate defensive coverage, while others advance to compress space, jointly generating a coordinated pressing pattern.

In this work, we develop a heterogeneous sequential MARL framework, termed \textbf{H}ierarchical \textbf{C}onductor-based \textbf{P}olicy \textbf{O}ptimization (HCPO). Firstly, a joint policy model based on the conductor's instructions is constructed for strong expressive capacity, which is inspired by a Gaussian mixture model. Subsequently, we establish a theoretically grounded update mechanism with strict monotonic improvement guarantees. By decomposing the joint policy optimization into conductor-level instruction and agent-level execution, we derive dual trust region constraints that ensure robust policy evolution. To enable practical decentralized execution, we further deploy local conductors to all agents and adapt their policies through the cross-entropy method with the centralized conductor, eliminating dependency on inter-agent communication. In addition, we perform extensive empirical validation across standardized benchmarks, including StarCraft II Multi-Agent Challenge (SMAC), Multi-agent MuJoCo (MA-MuJoCo), and Multi-agent Particle Environment (MPE) environments. The results show that HCPO outperforms existing competitive MARL algorithms. The main contributions are summarized as follows.
\begin{itemize}
	\item \textbf{Hierarchical Conductor-based Policy Expression:} To enhance policy expressive capacity and guide multi-agent exploration, we propose a conductor-based joint policy framework: \(\boldsymbol{\pi }_{\textup{mar}}(\boldsymbol{a}|s)\triangleq \mathbb{E} _{M\sim w(\cdot |s)}\boldsymbol{\pi }(\boldsymbol{a}|s,M)\). 
	\item \textbf{HCPO Algorithm and Monotonic Improvement Guarantee:} With the above conductor-based joint policy expression, we derive a new decomposition approach that breaks down the joint policy's KL divergence into two components: the conductor policies' KL divergence and the agent policies' KL divergence. Then, we present a policy improvement inequality and design a two-level policy update mechanism for the conductor and agents. Finally, we prove that HCPO can ensure a monotonic improvement of the joint policy. 
	\item \textbf{Extensive Experimental Validation:} Comprehensive evaluations on MARL benchmarks demonstrate the superior performance of HCPO over strong MARL baselines. The results show improvements in cooperative efficiency, policy stability, and exploration.
\end{itemize}

\section{Related Work}
\textbf{Exploration:} In MARL, the exponential expansion of joint state and action spaces as the number of agents increases severely challenges their ability to efficiently identify high-value states and actions~\cite{liu2021cooperative}. To improve the performance of MARL algorithm, research on the exploration of state and policy spaces is crucial. In terms of state space exploration, the classical work Multi-Agent Variational Exploration (MAVEN)~\cite{mahajan2019maven} guides multi-agent systems to learn diverse exploration patterns by maximizing the mutual information between agent trajectories and latent variables. To further improve exploration efficiency, methods like~\cite{jo2024fox} and~\cite{li2025toward} are proposed. Regarding policy space exploration, the current research focuses on constructing policy diversity incentive mechanisms~\cite{dou2024measuring}. For example, the paper~\cite{xu2023exploration} proposes an exploration method based on joint policy diversity for sparse-reward multi-agent tasks. It drives agents to explore new policies by maximizing the cross-entropy between the current joint policy and previous joint policies. However, most existing methods, either explicitly or implicitly, assume that the joint policy is the product of individual policies~\cite{kuba2022trust, dou2024measuring, jo2024fox}. This decoupled method limits the expressive capacity of joint policy, and sometimes hampers agents' exploration during training. To address this, we design a conductor-based framework that enhances the joint policy's expressive capacity. Specifically, our method provides instructions (latent variables~\cite{mahajan2019maven, ibrahim2022hierarchical}) to guide agents' exploration in the policy space during training, enabling them to explore new and potentially high-value policies that traditional methods might overlook. 

\textbf{Hierarchical mechanism:} In recent years, hierarchical mechanisms have been increasingly adopted~\cite{vezhnevets2017feudal, ahilan2019feudal, paolo2025tag}. MAVEN~\cite{mahajan2019maven} embeds a latent space for hierarchical control within the CTDE framework, which alleviates the expressive limitations introduced by the monotonicity hypothesis in QMIX~\cite{rashid2020monotonic}. Through MAVEN, agents condition their behavior on the shared latent variable to enhance exploration and mitigate issues related to suboptimal policies. Furthermore, HAVEN~\cite{xu2023haven} proposes a QMIX-style policy optimization framework with a dual coordination mechanism between layers and agents. Skill discovery represents another key direction in hierarchical MARL~\cite{he2020skill, zhang2023discovering}, enabling agents to autonomously learn diverse teamwork skills without requiring manually designed rewards~\cite{liu2022heterogeneous, liulearning}. For example, hierarchical learning with skill discovery method~\cite{yang2020hierarchical} is a two-level MARL algorithm that uses latent skill variables and intrinsic rewards for unsupervised skill discovery. The hierarchical multi-agent skill discovery~\cite{yang2023hierarchical} further extends the research on skill discovery by introducing team and individual skills. By employing the probabilistic graphical model, it formulates multi-agent skill discovery as an inference problem and leverages transformer structure to assign skills for coordination. To handle dynamic team composition, COPA~\cite{liu2021coach} proposes a coach-player hierarchy where a centralized coach periodically broadcasts strategies derived from global information. This design, however, retains the monotonicity constraint of QMIX and relies on communication during execution. Consequently, existing methods mainly use the monotonicity hypothesis imposed by QMIX and optimize hierarchical policies through variational inference. In contrast, our HCPO provides a theoretical guarantee of monotonic improvement in joint policy performance, without requiring any hypothesis regarding the joint action-value function. Additionally, our approach combines a conductor-based framework with trust region and sequential update methods, diverging from the mutual information and variational inference approaches commonly employed.

\section{Background }
\subsection{Cooperative MARL Problem Formulation}
In this paper, we consider fully cooperative multi-agent task as a Decentralized Markov Decision Process (DEC-MDP)~\cite{bernstein2002complexity, kuba2022trust, wang2023order}, which is usually modeled as a tuple \(G=\langle \mathcal{N},\mathcal{S},\mathcal{A},\mathcal{P},r,\gamma\rangle\). Here,  \(\mathcal{N}=\{1, 2, \cdots, N\}\) is the set of agents.  \(\mathcal{S}\) and  \(\mathcal{A}\) represent the state space of the environment and the whole joint action space, respectively. Each agent \(i\) takes action \(a^i \in \mathcal{A}^i\), and \(\mathcal{A}=\prod_{i=1}^N{\mathcal{A}^i}\).
\(\mathcal{P}:\mathcal{S}\times{\mathcal{A}\times\mathcal{S}}\to[0,1]\) is the transition probability function. \(r: \mathcal{S}\times \mathcal{A}\rightarrow \mathbb{R}\) is the reward function shared by all agents and \(\gamma \in [0, 1)\) is the discount factor. To improve coordinated exploration, we propose a hierarchical conductor-based framework that adapts agent policies with the instructions from the conductor. Specifically, let \(M\) denote the conductor’s instructional decision, sampled from a \textit{centralized} instruction preference distribution \(w(\cdot |s)\). The multi-agent joint policy \(\boldsymbol{\pi }_{\textup{mar}}\) is formulated as the expectation over all possible instructions \(M\): \(\boldsymbol{\pi }_{\textup{mar}}(\boldsymbol{a}|s)\triangleq \mathbb{E} _{M\sim w(\cdot |s)}\boldsymbol{\pi }(\boldsymbol{a}|s,M)\), where \(\boldsymbol{a}=(a^1, a^2, \cdots, a^N)\in \mathcal{A}\) is the joint action. To facilitate decentralized execution, we equip each agent with an independent \textit{local} conductor \(w^i(\cdot|o^i)\), sharing the same instruction space as the \textit{centralized} conductor. For clarity, the term ``conductor” is primarily used to denote the centralized conductor unless otherwise specified. Hereafter, we assume that the conductor has \(K\) discrete instructions available to choose at each time. For any given instruction \(M\), the corresponding instruction-conditional joint policy is defined as the product of individual agent policies conditioned on \(M\), \textit{i.e.} \( \boldsymbol{\pi }(\boldsymbol{a}|s,M)=\prod_{i=1}^N{\pi ^i(a^i|s,M)}\). Therefore, under the conductor-based framework, our goal is to maximize the expected cumulative reward:
\begin{equation}\label{maxJ}
	J(\boldsymbol{\pi }_{\textup{mar}})\triangleq  \mathbb{E} _{\mathfrak{s}_{\rho_{\boldsymbol{\pi }_{\textup{mar}}}}^{0:\infty}, \mathfrak{M}_{w}^{0:\infty},\mathfrak{a}_{\boldsymbol{\pi }}^{0:\infty} } \left[ \sum_{t=0}^{\infty}{\gamma ^tr(s_t,\boldsymbol{a}_t)} \right].
\end{equation}
In the above equation, we denote ``\(s_{0:\infty}\sim \rho _{\boldsymbol{\pi }_{\textup{mar}}}^{0:\infty}\)'' as ``\(\mathfrak{s}_{\rho_{\boldsymbol{\pi }_{\textup{mar}}}}^{0:\infty}\)'', ``\(M_{0:\infty}\sim w_{0:\infty}\)'' as ``\(\mathfrak{M}_{w}^{0:\infty}\)'',  and ``\(\boldsymbol{a}_{0:\infty}\sim \boldsymbol{\pi }_{0:\infty}\left( M_{0:\infty} \right)\)'' as ``\(\mathfrak{a}_{\boldsymbol{\pi }}^{0:\infty}\)'' for the sake of brevity. Hereafter, we use this notation wherever no ambiguity arises. Consequently, the state value function \(V_{\boldsymbol{\pi}_{\textup{mar}}}(s)\) is defined as the expected cumulative return under the multi-agent joint policy \(\boldsymbol{\pi}_{\textup{mar}}\): 
\begin{equation}\label{Vtot}
	V_{\boldsymbol{\pi }_{\textup{mar}}}(s)\triangleq \mathbb{E} _{\mathfrak{M}_{w}^{0:\infty},\mathfrak{a}_{\boldsymbol{\pi }}^{0:\infty}, \mathfrak{s}_{\rho_{\boldsymbol{\pi }_{\textup{mar}}}}^{1:\infty} } \left. [\sum_{t=0}^{\infty}{\gamma ^t}r_t|s_0=s \right. ].
\end{equation}
Similarly, we define the state-action value function~\(   Q_{\boldsymbol{\pi }_{\textup{mar}}}(s,\boldsymbol{a})\triangleq \mathbb{E} _{\mathfrak{s}_{\rho_{\boldsymbol{\pi }_{\textup{mar}}}}^{1:\infty}, \mathfrak{M}_{w}^{1:\infty},\mathfrak{a}_{\boldsymbol{\pi }}^{1:\infty} }\left[\sum_{t=0}^{\infty}{\gamma ^t}r_t|s_0=s,\boldsymbol{a}_0=\boldsymbol{a}\right]\).~The joint advantage function is written as:~\(A_{\boldsymbol{\pi }_{\textup{mar}}}(s,\boldsymbol{a})\triangleq Q_{\boldsymbol{\pi }_{\textup{mar}}}(s,\boldsymbol{a})-V_{\boldsymbol{\pi }_{\textup{mar}}}(s).\)

\subsection{Sequential Policy Update Mechanism}
To address non-stationarity issues in MARL, the sequential update for agents has been widely investigated~\cite{kuba2022trust,wang2023order, dou2024measuring,wan2025srsv}. HATRPO extends the single-agent Trust Region Policy Optimization (TRPO) to multi-agent domain, utilizing multi-agent advantage decomposition to facilitate the sequential updates of agent policies. In HATRPO, each agent updates its policy parameters through the following~protocol:
\begin{align}
	\theta_{k+1}^{i_n} &= \arg \max_{\theta^{i_n}} \mathbb{E}_{s, \boldsymbol{a}^{i_{1:n-1}}, a^{i_n}} \left[ A_{\boldsymbol{\pi}_{\boldsymbol{\theta}_k}}^{i_n} \left( s, \boldsymbol{a}^{i_{1:n-1}}, a^{i_n} \right) \right], \label{eq3}\\
	&\text{subject to } \mathbb{E}_{s \sim \rho_{\boldsymbol{\pi}_{\boldsymbol{\theta}_k}}} \left[ \mathrm{D}_{\mathrm{KL}} \left( \pi^{i_{n}}_{\theta_k^{i_n}}(\cdot | s), \pi^{i_{n}}_{\theta^{i_n}}(\cdot | s) \right) \right] \leq \delta.
\end{align}
In~\eqref{eq3}, the compact notation \(\mathbb{E}_{s, \boldsymbol{a}^{i_{1:n-1}}, a^{i_n}}\) stands for the full expectation \(\mathbb{E}_{s \sim \rho_{\boldsymbol{\pi}_{\boldsymbol{\theta}_k}}, \boldsymbol{a}^{i_{1:n-1}} \sim \boldsymbol{\pi}^{i_{1:n-1}}_{\boldsymbol{\theta}_{k+1}^{i_{1:n-1}}}, a^{i_n} \sim \pi_{\theta_k^{i_n}}}\). Here, \(\theta_{k+1}^{i_n}\) denotes the policy parameter for agent \(i_n\) in episode \(k+1\). \(\mathrm{D}_{\mathrm{KL}} \left(\cdot , \cdot \right)\) is the KL-divergence between two policies, and \(\delta\) is a hyperparameter. It is important to note that during the \(k+1\)-th episode, policy \(\theta_{k+1}^{i_n}\) leverages the updated policies \(\boldsymbol{\pi}^{i_{1:n-1}}_{\boldsymbol{\theta}_{k+1}^{i_{1:n-1}}}\) from the preceding agents \(i_{1:n-1}\). This is the core idea behind the sequential update mechanism.

\section{Methods}

\begin{figure*}[t]%aaai官方模板
	\centering
	\includegraphics[width=0.7\textwidth]{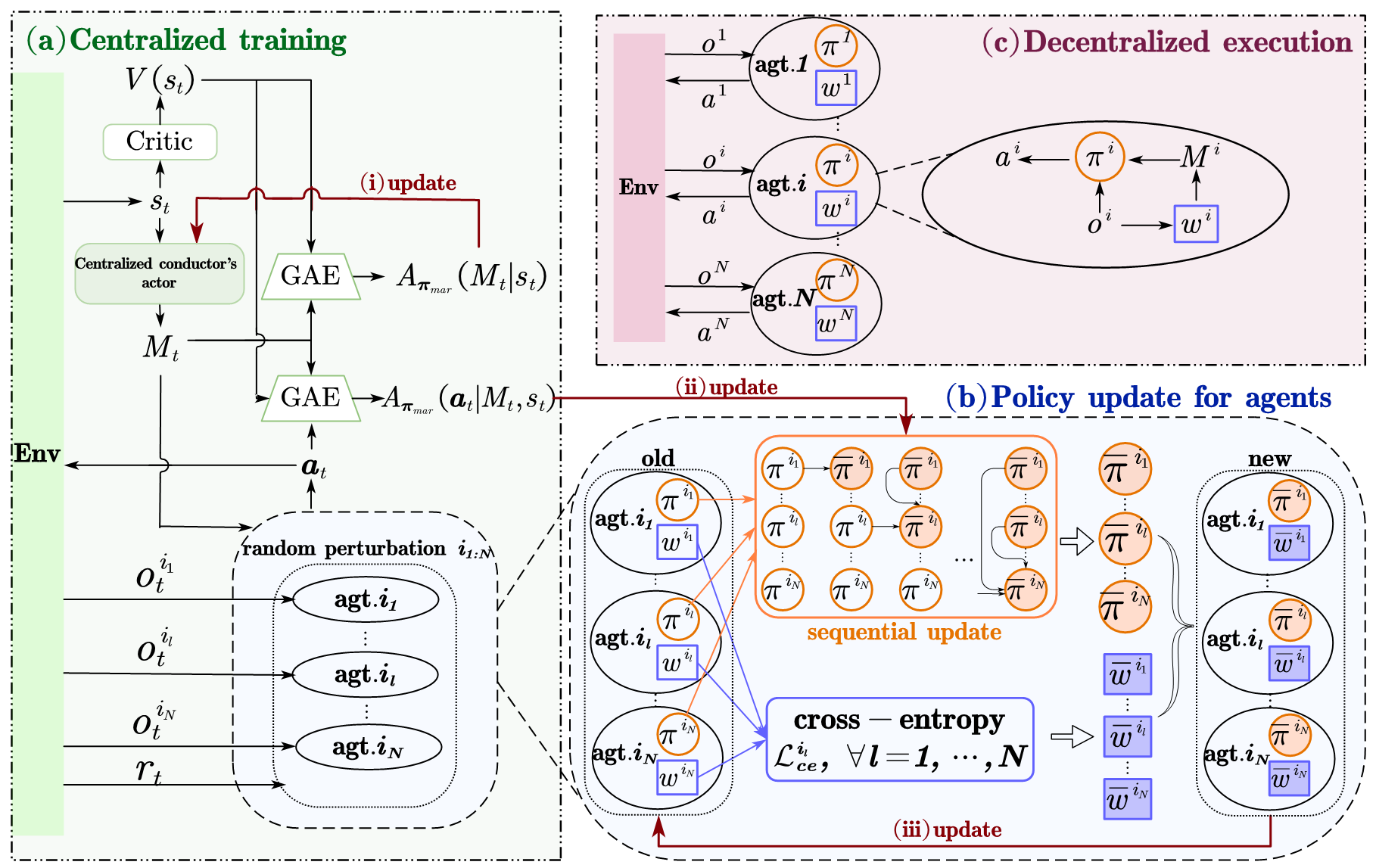} % Reduce the figure size so that it is slightly narrower than the column.
	\caption{The overall framework of HCPO. \textbf{(a) Centralized training:} A two-level policy update mechanism with a virtual centralized conductor is proposed, leveraging well-designed advantage functions. \textbf{(b) Policy update for agents:} Here, local agents' policies denoted by orange ellipses are optimized through sequential updates, and local conductors' policies denoted by blue rectangles are optimized through the cross-entropy method. During this iteration, policies with shaded outlines represent updated versions, while those without shading indicate unmodified ones. \textbf{(c) Decentralized execution:} HCPO enables agents to make decisions based only on local information.}
	\label{figHCPO}
\end{figure*}

\subsection{Value Functions}
In the previous section, we presented a conductor-based framework that enhances learning through instruction guidance. To evaluate this framework, we now need quantitative metrics to assess both the learning process and policy performance.
First, we define the value function for the conductor's instruction \(M\) as:~\( Q_{\boldsymbol{\pi }_{\textup{mar}}}(M|s) \triangleq \mathbb{E} _{\mathfrak{a}_{\boldsymbol{\pi }}^{0:\infty}, \mathfrak{s}_{\rho_{\boldsymbol{\pi }_{\textup{mar}}}}^{1:\infty}, \mathfrak{M}_{w}^{1:\infty}  } \left. \big[\sum_{t=0}^{\infty}{\gamma ^t}r_t|s_0=s,M_0=M \right.\big].
\)
The instruction advantage function is written as: 
\begin{equation}
	A_{\boldsymbol{\pi }_{\textup{mar}}}(M|s)\triangleq Q_{\boldsymbol{\pi }_{\textup{mar}}}(M|s)-V_{\boldsymbol{\pi }_{\textup{mar}}}(s).
\end{equation}
Here, \(A_{\boldsymbol{\pi }_{\textup{mar}}}(M|s)\) measures the relative benefit of instruction \(M\) compared to all other instructions \(M^\prime\sim w\left( \cdot |s \right) \). By choosing \(M\) to maximize \(A_{\boldsymbol{\pi }_{\textup{mar}}}(M|s)\) with generalized advantage estimation (GAE)~\cite{schulman2018high}, we optimize the conductor's policy \(w\), as illustrated in Figure~\ref{figHCPO}(i). Additionally, we define a joint action advantage function for agents' joint actions as follows:
\begin{equation}
	A_{\boldsymbol{\pi }_{\textup{mar}}}(\boldsymbol{a}|s,M)\triangleq Q_{\boldsymbol{\pi }_{\textup{mar}}}(s,\boldsymbol{a})-Q_{\boldsymbol{\pi }_{\textup{mar}}}(M|s).
\end{equation}
The advantage function \(A_{\boldsymbol{\pi }_{\textup{mar}}}(\boldsymbol{a}|s,M)\) evaluates the joint action \(\boldsymbol{a}\) over all other possible joint actions \(\boldsymbol{a}^\prime\sim \boldsymbol{\pi }(\cdot|s,M)\). As illustrated in Figure~\ref{figHCPO}(ii), maximizing this advantage function enables the optimization of the instruction-conditional joint policy \(\boldsymbol{\pi }(\cdot|s,M)\), thereby favoring actions that yield superior expected returns. Based on the above definitions, we can find that:
\begin{equation}
	A_{\boldsymbol{\pi}_{\textup{mar}}}(s,\boldsymbol{a})= A_{\boldsymbol{\pi}_{\textup{mar}}}(M|s) + A_{\boldsymbol{\pi}_{\textup{mar}}}(\boldsymbol{a}|s,M).
\end{equation}

After the conductor chooses an instruction \(M^j, j\in \left\{1,2,\cdots, K \right\} \), we update each individual agent's policy~\(\pi ^{i_l}\left( a^{i_l}|s,M^j \right)\) in the order determined by the random perturbation set \(i_{1:N}=\left\{ i_1, i_2, \cdots , i_N \right\}\). In addition, at the state \(s\), based on the conductor's any instruction \(M^j\), after the previous agents \(i_{1:l}\) take joint action \(\boldsymbol{a}^{i_{1:l}}\), we define the expected value of \(\boldsymbol{a}^{i_{1:l}}\) as: 
\begin{align}
	Q_{\boldsymbol{\pi }_{\textup{mar}}}^{i_{1:l}}(\boldsymbol{a}^{i_{1:l}}&|s,M^j)\triangleq \mathbb{E} _{\boldsymbol{a}_{0}^{-i_{1:l}}\sim \boldsymbol{\pi }_{0}^{-i_{1:l}}\left( \cdot |s,M^j \right) ,\mathfrak{s}_{\rho_{\boldsymbol{\pi }_{\textup{mar}}}}^{1:\infty}, \mathfrak{M}_{w}^{1:\infty},\mathfrak{a}_{\boldsymbol{\pi }}^{1:\infty}}  \notag \\
	& \left[\sum_{t=0}^{\infty}{\gamma ^t}r_t|s_0=s,M_0=M^j,\boldsymbol{a}_{0}^{i_{1:l}}=\boldsymbol{a}_{}^{i_{1:l}}\right],
\end{align}
where \(-i_{1:l}\) is the complement of \(i_{1:l}\).

\begin{lem}
	\label{prop1}
	~For any instruction \(M^j, j\in \left\{ 1,2,...,K \right\} \) chosen by the conductor, the conditional \(Q\)-function for agents \(i_{1:l}\) satisfies:
	\begin{equation}
		Q_{\boldsymbol{\pi }_{\textup{mar}}}^{i_{1:l}}(\boldsymbol{a}^{i_{1:l}}|s,M^j)=\mathbb{E} _{\boldsymbol{a}^{-i_{1:l}}\sim \boldsymbol{\pi }^{-i_{1:l}}\left( \cdot |s,M^j \right)}\left[ Q_{\boldsymbol{\pi }_{\textup{mar}}}^{}\left( s,\boldsymbol{a} \right) \right], 
	\end{equation}
	where \(\boldsymbol{a}=\left( \boldsymbol{a}^{i_{1:l}},\boldsymbol{a}^{-i_{1:l}} \right)\).   
\end{lem}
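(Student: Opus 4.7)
The plan is to start from the definition of $Q_{\boldsymbol{\pi}_{\textup{mar}}}^{i_{1:l}}(\boldsymbol{a}^{i_{1:l}}|s,M^j)$ displayed just before the lemma and apply the tower property of expectations to isolate the sampling of the unfixed initial actions $\boldsymbol{a}_0^{-i_{1:l}}$. Concretely, I would rewrite the defining expectation as an outer expectation with respect to $\boldsymbol{a}_0^{-i_{1:l}} \sim \boldsymbol{\pi}_0^{-i_{1:l}}(\cdot|s,M^j)$ whose integrand is the conditional expectation of the discounted return given that the initial joint action has been fully specified as $\boldsymbol{a}_0 = (\boldsymbol{a}^{i_{1:l}},\boldsymbol{a}_0^{-i_{1:l}})$. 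The factorization of $\boldsymbol{\pi}^{-i_{1:l}}(\cdot|s,M^j)$ into a product $\prod_{k \notin i_{1:l}} \pi^k(a^k|s,M^j)$, inherited from the assumption $\boldsymbol{\pi}(\boldsymbol{a}|s,M)=\prod_{i}\pi^i(a^i|s,M)$, makes this separation legitimate.

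The key step is then to identify that inner conditional expectation with $Q_{\boldsymbol{\pi}_{\textup{mar}}}(s,\boldsymbol{a})$. This requires showing that once $s_0 = s$ and $\boldsymbol{a}_0 = \boldsymbol{a}$ are both fixed, further conditioning on $M_0 = M^j$ is redundant. I would argue this from the Markov structure of the DEC-MDP: the transition kernel $\mathcal{P}(s_1|s_0,\boldsymbol{a}_0)$ does not depend on $M_0$, and subsequent instructions $M_t$ for $t \geq 1$ are resampled afresh via $w(\cdot|s_t)$ with no memory of $M_0$. Hence $M_0$ influences the trajectory only through $\boldsymbol{a}_0$, and once $\boldsymbol{a}_0$ is fully specified, the remaining conditional distribution of $(s_{1:\infty},\boldsymbol{a}_{1:\infty},M_{1:\infty})$ depends only on $(s_0,\boldsymbol{a}_0)$. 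This matches precisely the definition of $Q_{\boldsymbol{\pi}_{\textup{mar}}}(s,\boldsymbol{a})$ given in the background section, so the inner expectation collapses to $Q_{\boldsymbol{\pi}_{\textup{mar}}}(s,\boldsymbol{a})$, and combining the two displays yields the claim.

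The main obstacle is notational rather than mathematical. One must be meticulous about which expectation operator binds which variables, especially given that the compact symbols $\mathfrak{s}_{\rho_{\boldsymbol{\pi}_{\textup{mar}}}}^{1:\infty}$, $\mathfrak{M}_{w}^{1:\infty}$, $\mathfrak{a}_{\boldsymbol{\pi}}^{1:\infty}$ each collapse an entire infinite sequence into a single glyph, and about the distinction between the agent-index complement $-i_{1:l}$ and the time-index $0$. Beyond this bookkeeping, the argument reduces to a clean application of the tower rule together with the Markov property, and requires no additional structure on the instruction-conditional policies; it can be viewed as the natural extension to the conductor-augmented setting of the familiar identity that marginalizing a multi-agent $Q$-function over a subset of agents' actions (drawn from their policies) recovers the $Q$-function of the remaining agents.
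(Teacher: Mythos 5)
Your proposal is correct and follows essentially the same route as the paper's proof: the paper likewise rewrites the defining expectation as an outer expectation over \(\boldsymbol{a}_0^{-i_{1:l}}\sim \boldsymbol{\pi}_0^{-i_{1:l}}(\cdot|s,M^j)\) via the tower rule, then drops the conditioning on \(M_0=M^j\) in the inner expectation (the step you justify explicitly via the Markov structure, which the paper performs silently), and identifies the result with \(Q_{\boldsymbol{\pi}_{\textup{mar}}}(s,\boldsymbol{a})\). No substantive differences.
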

The proof is proposed in Appendix~A.1. This lemma provides the basis for the subsequent definitions of advantage functions, which are crucial for evaluating the relative advantages of specific actions compared to average actions. Specifically, at the state \(s\), based on the conductor's any action \(M^j\), after the agents \(i_{1:l}\) first take joint action \(\boldsymbol{a}^{i_{1:l}}\), and agents \(-i_{1:l}\) take joint action \(\boldsymbol{a}^{-i_{1:l}}\sim \boldsymbol{\pi }^{-i_{1:l}}\left( \cdot |s,M^j \right) \), we define the advantage function for \(\boldsymbol{a}^{i_{1:l}}\) as:
\begin{equation}\label{defn_adv1}
	A_{\boldsymbol{\pi }_{\textup{mar}}}^{i_{1:l}}\left( \boldsymbol{a}^{i_{1:l}}|s,M^j \right) \triangleq Q_{\boldsymbol{\pi }_{\textup{mar}}}^{i_{1:l}}(\boldsymbol{a}^{i_{1:l}}|s,M^j)-Q_{\boldsymbol{\pi }_{\textup{mar}}}(M^j|s).
\end{equation}
It is noted that when \(l=0\), we have \(A_{\boldsymbol{\pi }_{\textup{mar}}}^{i_{1:l}}\left( \boldsymbol{a}^{i_{1:l}}|s,M^j \right) =0\). 
Besides, at the state \(s\), based on the conductor's any instruction \(M^j\), for any individual agent \(i_l\), we define the advantage of its individual action \(a^{i_l}\) over all actions \({a^{i_l}}^\prime \sim {\pi _{}^{i_l}}^\prime(\cdot |s,M^j)\):
\begin{align}\label{defn_adv2}
		A_{\boldsymbol{\pi }_{\textup{mar}}}^{i_l}(\boldsymbol{a}^{i_{1:l-1}},a^{i_l}|s,M^j)&\triangleq Q_{\boldsymbol{\pi }_{\textup{mar}}}^{i_{1:l}}(\boldsymbol{a}^{i_{1:l}}|s,M^j) \notag\\&-Q_{\boldsymbol{\pi }_{\textup{mar}}}^{i_{1:l-1}}(\boldsymbol{a}^{i_{1:l-1}}|s,M^j).
\end{align}

In the subsequent part, we introduce the conductor-based multi-agent advantage function decomposition lemma. As shown in Figure~\ref{figHCPO}(ii), the lemma is designed to facilitate the transition from updating the instruction-conditional joint policy \(\boldsymbol{\pi }(\boldsymbol{a}|s,M^j)\) to updating individual agents' policies~\(\pi ^{i_l}\left( a^{i_l}|s,M^j \right), l\in\mathcal{N} \).

\begin{lem}\label{lemma1}
	(Conditional Advantage Decomposition) Consider a cooperative Markov game with a joint policy \(\boldsymbol{\pi }_{\textup{mar}}\). For any state \(s\), any instruction \(M^j\), and any subset of agents \(i_{1:n}=\left\{ i_1, i_2, \cdots , i_n \right\}\subseteq \mathcal{N}\), the following equation holds for all states \(s\), joint actions \(\boldsymbol{a}^{i_{1:n}}\), and \(M^j\sim w\):
	\begin{equation}\label{eq14}
		A_{\boldsymbol{\pi }_{\textup{mar}}}^{i_{1:n}}\left( \boldsymbol{a}^{i_{1:n}}|s,M^j \right) =\sum_{l=1}^n{A_{\boldsymbol{\pi }_{\textup{mar}}}^{i_l}\left( \boldsymbol{a}^{i_{1:l-1}},a^{i_l}|s,M^j \right)}. 
	\end{equation}
\end{lem}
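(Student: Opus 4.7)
The plan is to recognize the right-hand side as a telescoping sum and reduce the identity to a boundary-term identification, using Lemma~1 to handle the $l=0$ case. Unfolding the definition of $A_{\boldsymbol{\pi}_{\textup{mar}}}^{i_l}$ in~\eqref{defn_adv2}, every summand on the right-hand side of~\eqref{eq14} is the difference between two consecutive conditional $Q$-values, so almost all terms cancel in pairs.

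Concretely, I would proceed in three steps. First, I would substitute the definition in~\eqref{defn_adv2} directly into $\sum_{l=1}^{n} A_{\boldsymbol{\pi}_{\textup{mar}}}^{i_l}(\boldsymbol{a}^{i_{1:l-1}},a^{i_l}|s,M^j)$ to obtain
\begin{equation*}
\sum_{l=1}^{n}\bigl[ Q_{\boldsymbol{\pi}_{\textup{mar}}}^{i_{1:l}}(\boldsymbol{a}^{i_{1:l}}|s,M^j) - Q_{\boldsymbol{\pi}_{\textup{mar}}}^{i_{1:l-1}}(\boldsymbol{a}^{i_{1:l-1}}|s,M^j) \bigr],
\end{equation*}
and then telescope this sum to $Q_{\boldsymbol{\pi}_{\textup{mar}}}^{i_{1:n}}(\boldsymbol{a}^{i_{1:n}}|s,M^j) - Q_{\boldsymbol{\pi}_{\textup{mar}}}^{i_{1:0}}(\cdot|s,M^j)$. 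Second, I would pin down the boundary term: for $l=0$ the index set $i_{1:0}$ is empty, so in the definition of $Q_{\boldsymbol{\pi}_{\textup{mar}}}^{i_{1:l}}$ the complementary set $-i_{1:l}$ is the whole of $\mathcal{N}$; by Lemma~1 (or directly by the definition), marginalizing over all agents' actions at time $0$ under $\boldsymbol{\pi}(\cdot|s,M^j)$ yields exactly $Q_{\boldsymbol{\pi}_{\textup{mar}}}(M^j|s)$, consistent with the convention stated just after~\eqref{defn_adv1} that $A^{i_{1:0}}=0$. Third, I would substitute this boundary value into the telescoped expression and invoke the definition in~\eqref{defn_adv1} with $l=n$ to rewrite the result as $A_{\boldsymbol{\pi}_{\textup{mar}}}^{i_{1:n}}(\boldsymbol{a}^{i_{1:n}}|s,M^j)$, which closes the identity.

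There is no real obstacle here beyond bookkeeping: the argument is an algebraic telescoping identity, and the only subtle step is justifying the $l=0$ boundary by recognizing $Q_{\boldsymbol{\pi}_{\textup{mar}}}^{i_{1:0}}(\cdot|s,M^j) = Q_{\boldsymbol{\pi}_{\textup{mar}}}(M^j|s)$ through Lemma~1. Because the identity holds pointwise in $s$, $M^j$, and $\boldsymbol{a}^{i_{1:n}}$, no integration over the state visitation measure or the instruction distribution $w(\cdot|s)$ is needed, and the ordering $i_{1:n}$ plays no role beyond fixing which conditional $Q$-functions appear in the telescope.
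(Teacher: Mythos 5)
Your proposal is correct and follows essentially the same route as the paper's proof: substitute the definition in~\eqref{defn_adv2}, telescope the sum to \(Q_{\boldsymbol{\pi}_{\textup{mar}}}^{i_{1:n}}(\boldsymbol{a}^{i_{1:n}}|s,M^j)-Q_{\boldsymbol{\pi}_{\textup{mar}}}(M^j|s)\), and identify the result via~\eqref{defn_adv1}. Your explicit justification of the \(l=0\) boundary term through Lemma~\ref{prop1} is a detail the paper leaves implicit, but the argument is otherwise identical.
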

The proof is proposed in Appendix~A.1.%%%%%The proof is proposed in Appendix~\ref{uselema}.

\subsection{Quantifying Policy Updates for HCPO}
TRPO~\cite{schulman2015trust} is a reinforcement learning algorithm that improves learning stability by constraining policy update magnitude. In this section, we explore a conductor-based mechanism that incorporates TRPO to measure the difference in expected returns between the new policy and its predecessor. This analysis serves as the foundation for designing effective policy update algorithms.
\begin{proposition}\label{prop22}
	As defined in Equation~\eqref{maxJ}, the relationship between the expected return of the new policy \(\bar{\boldsymbol{\pi}}_{\textup{mar}}\) and the old policy \(\boldsymbol{\pi}_{\textup{mar}}\) is expressed as:
	\begin{equation}~\label{eq15}
		J(\bar{\boldsymbol{\pi}}_{\textup{mar}})=J(\boldsymbol{\pi }_{\textup{mar}})+\mathbb{E} _{\tau \sim \bar{\boldsymbol{\pi}}_{\textup{mar}}}\left[ \sum_{t=0}^{\infty}{\gamma ^t}A_{\boldsymbol{\pi }_{\textup{mar}}}(s_t,\boldsymbol{a}_t) \right],
	\end{equation}
	where \(\tau:=(s_0, M_0, \boldsymbol{a}_0, s_1, M_1, \boldsymbol{a}_1, \cdots)\).
\end{proposition}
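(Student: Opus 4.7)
The plan is to mimic the classical Kakade–Langford performance-difference identity, promoted to the conductor-augmented setting. The only novelty is bookkeeping: trajectories now carry instructions $M_t$ in addition to states and joint actions, but the joint advantage $A_{\boldsymbol{\pi}_{\textup{mar}}}(s,\boldsymbol{a})$ is already defined in terms of $Q_{\boldsymbol{\pi}_{\textup{mar}}}(s,\boldsymbol{a})$ and $V_{\boldsymbol{\pi}_{\textup{mar}}}(s)$, both of which have the conductor-choice $M$ marginalised out. So I expect the instruction variables to play no role beyond being passively integrated over, and the proof to reduce to a telescoping argument.

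Concretely, I would proceed in three steps. First, use the definitions to obtain the Bellman identity $Q_{\boldsymbol{\pi}_{\textup{mar}}}(s,\boldsymbol{a}) = r(s,\boldsymbol{a}) + \gamma\,\mathbb{E}_{s'\sim\mathcal{P}(\cdot|s,\boldsymbol{a})}[V_{\boldsymbol{\pi}_{\textup{mar}}}(s')]$; this follows from the definition of $Q_{\boldsymbol{\pi}_{\textup{mar}}}$ by conditioning on $s_0{=}s,\boldsymbol{a}_0{=}\boldsymbol{a}$ and noticing that the subsequent instruction $M_1\sim w(\cdot|s_1)$ is exactly the one absorbed into $V_{\boldsymbol{\pi}_{\textup{mar}}}(s_1)$. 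Substituting into the definition of $A_{\boldsymbol{\pi}_{\textup{mar}}}$ gives
\begin{equation*}
A_{\boldsymbol{\pi}_{\textup{mar}}}(s,\boldsymbol{a}) = r(s,\boldsymbol{a}) + \gamma\,\mathbb{E}_{s'}\!\bigl[V_{\boldsymbol{\pi}_{\textup{mar}}}(s')\bigr] - V_{\boldsymbol{\pi}_{\textup{mar}}}(s).
\end{equation*}
Second, take the expectation of $\sum_{t\ge 0}\gamma^t A_{\boldsymbol{\pi}_{\textup{mar}}}(s_t,\boldsymbol{a}_t)$ under $\tau\sim\bar{\boldsymbol{\pi}}_{\textup{mar}}$. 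Since the trajectory law of $\bar{\boldsymbol{\pi}}_{\textup{mar}}$ samples $\boldsymbol{a}_t$ after marginalising $M_t\sim\bar{w}(\cdot|s_t)$, and $A_{\boldsymbol{\pi}_{\textup{mar}}}$ depends only on $(s_t,\boldsymbol{a}_t)$, the inner expectation over $\bar{w}$ factors out cleanly. Third, push the expectation inside the sum, apply the Bellman substitution term by term, and telescope: the $\gamma V(s_{t+1})$ term from index $t$ cancels the $-V(s_t)$ term from index $t{+}1$, leaving only $-\mathbb{E}_{s_0\sim\rho^0}[V_{\boldsymbol{\pi}_{\textup{mar}}}(s_0)]=-J(\boldsymbol{\pi}_{\textup{mar}})$, while $\sum_t\gamma^t r_t$ integrates to $J(\bar{\boldsymbol{\pi}}_{\textup{mar}})$. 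Rearranging yields \eqref{eq15}.

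The main obstacle, and the one worth dwelling on, is justifying that the telescoping is legitimate under the \emph{new} trajectory distribution while the value and advantage functions are evaluated under the \emph{old} policy, with an additional hidden variable $M_t$ floating around at each time step. I would handle this by writing the trajectory density of $\bar{\boldsymbol{\pi}}_{\textup{mar}}$ explicitly, showing by a Fubini argument that $\mathbb{E}_{\tau\sim\bar{\boldsymbol{\pi}}_{\textup{mar}}}[f(s_t,\boldsymbol{a}_t)] = \mathbb{E}_{s_t,\boldsymbol{a}_t\sim\bar{\boldsymbol{\pi}}_{\textup{mar}}}[f(s_t,\boldsymbol{a}_t)]$ for any function $f$ not depending on the instructions, and verifying absolute convergence of $\sum_t \gamma^t \lVert A_{\boldsymbol{\pi}_{\textup{mar}}}\rVert_\infty$ (finite by $\gamma<1$ and boundedness of $r$) so that the sum and expectation may be interchanged and the telescoping tail $\gamma^{T+1}\mathbb{E}[V_{\boldsymbol{\pi}_{\textup{mar}}}(s_{T+1})]\to 0$ as $T\to\infty$. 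Once this is in place, the calculation is essentially the standard one, and the identity transfers verbatim to the conductor-based setting — a fact I would highlight because it will serve as the cornerstone for the subsequent surrogate objective and dual trust-region analysis of HCPO.
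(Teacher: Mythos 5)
Your proposal is correct and follows essentially the same route as the paper's own proof: rewrite \(A_{\boldsymbol{\pi}_{\textup{mar}}}(s_t,\boldsymbol{a}_t)\) via the Bellman identity as \(r_t+\gamma V_{\boldsymbol{\pi}_{\textup{mar}}}(s_{t+1})-V_{\boldsymbol{\pi}_{\textup{mar}}}(s_t)\), telescope under the trajectory law of \(\bar{\boldsymbol{\pi}}_{\textup{mar}}\), and identify the remaining terms with \(J(\bar{\boldsymbol{\pi}}_{\textup{mar}})-\mathbb{E}_{s_0}[V_{\boldsymbol{\pi}_{\textup{mar}}}(s_0)]\). The only difference is that you make explicit the interchange-of-limits and the marginalisation of the instruction variables \(M_t\), which the paper leaves implicit.
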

The proof is proposed in Appendix~A.1. Similar to HATRPO, we introduce the approximation function \(L_{\boldsymbol{\pi }_{\textup{mar}}}(\bar{\boldsymbol{\pi}}_{\textup{mar}})\), which serves as an alternative objective function for the new policy's performance function \(J(\bar{\boldsymbol{\pi}}_{\textup{mar}})\): %%The proof is proposed in Appendix~\ref{uselema}.
\begin{equation}\label{alterfun}
	L_{\boldsymbol{\pi }_{\textup{mar}}}(\bar{\boldsymbol{\pi}}_{\textup{mar}})
	=J(\boldsymbol{\pi }_{\textup{mar}})+\mathbb{E} _{s\thicksim \rho _{ \boldsymbol{\pi }_{\textup{mar}}},M\sim\bar{w},\boldsymbol{a}\thicksim \bar{\boldsymbol{\pi}}} \left[ A_{\boldsymbol{\pi }_{\textup{mar}}}(s,\boldsymbol{a}) \right],
\end{equation}
where the compact notation \(\mathbb{E} _{s\thicksim \rho _{ \boldsymbol{\pi }_{\textup{mar}}},M\sim\bar{w},\boldsymbol{a}\thicksim \bar{\boldsymbol{\pi}}}\) stands for the full expectation \(\mathbb{E} _{s\thicksim \rho _{ \boldsymbol{\pi }_{\textup{mar}}},M\sim\bar{w}\left( \cdot |s \right) ,\boldsymbol{a}\thicksim \bar{\boldsymbol{\pi}}\left( \cdot |s,M \right)}\). Therefore, we can derive the following theorem.
\begin{thm}\label{thm11}
	Under the proposed conductor-based framework, a significant policy improvement inequality holds for the joint policy \(\boldsymbol{\pi }_{\textup{mar}}\):
	\begin{align}\label{eq27}
	&J(\bar{\boldsymbol{\pi}}_{\textup{mar}}) \geqslant J(\boldsymbol{\pi }_{\textup{mar}}) + \mathbb{E} _{\mathrm{s}\thicksim \rho _{ \boldsymbol{\pi }_{\textup{mar}}}}\left.\Big[ 
	\mathbb{E} _{M\sim \bar{w}\left( \cdot |s \right)}A_{ \boldsymbol{\pi }_{\textup{mar}}}(M|s) \right.	\notag \\
	&\left.  - C\mathrm{D}_{\mathrm{KL}}^{\max}\left( w ,\overline{w} \right) + \mathbb{E} _{M\sim \bar{w}\left( \cdot |s \right) ,\boldsymbol{a}\thicksim \bar{\boldsymbol{\pi}}\left( \cdot |s,M \right)}A_{ \boldsymbol{\pi }_{\textup{mar}}}(\boldsymbol{a}|s,M)\right.	\notag \\
	&\left. -\max C\sum_{j=1}^K{w\left( M^j|s \right) \mathrm{D}_{\mathrm{KL}}\left( \boldsymbol{\pi }(\cdot |s,M^j),\overline{\boldsymbol{\pi }}(\cdot |s,M^j) \right)} \right.\Big],
	\end{align}
	where \(C = \frac{4\gamma \max_{s,a} |A_{\boldsymbol{\pi }_{\textup{mar}}}(s,\boldsymbol{a})|}{(1-\gamma)^2}\),~\(\mathrm{D}_{\mathrm{KL}}^{\max} \left( w_k, \bar{w} \right)=\max_s\mathrm{D}_{\mathrm{KL}}\left( w_k\left( \cdot |s \right), \bar{w}\left( \cdot |s \right) \right)\).
\end{thm}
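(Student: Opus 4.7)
The plan is to reduce Theorem~\ref{thm11} to the classical TRPO monotonic-improvement bound applied to the flattened joint policy \(\boldsymbol{\pi}_{\textup{mar}}(\boldsymbol{a}|s)=\sum_{M}w(M|s)\,\boldsymbol{\pi}(\boldsymbol{a}|s,M)\), and then split the resulting surrogate and KL penalty along the two levels of the hierarchy using identities already available in the paper. The argument proceeds in three steps: a single-agent-style TRPO bound on the flattened joint, followed by two independent decompositions of its two ingredients.

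First I would apply the TRPO inequality to \(\boldsymbol{\pi}_{\textup{mar}}\) and \(\bar{\boldsymbol{\pi}}_{\textup{mar}}\), starting from the exact identity of Proposition~\ref{prop22} and invoking the standard TV-to-KL coupling argument via Pinsker. This yields
\[
J(\bar{\boldsymbol{\pi}}_{\textup{mar}}) \;\ge\; J(\boldsymbol{\pi}_{\textup{mar}}) + \mathbb{E}_{s\sim\rho_{\boldsymbol{\pi}_{\textup{mar}}},M\sim\bar{w},\boldsymbol{a}\sim\bar{\boldsymbol{\pi}}}\!\bigl[A_{\boldsymbol{\pi}_{\textup{mar}}}(s,\boldsymbol{a})\bigr] - C\,\mathrm{D}_{\mathrm{KL}}^{\max}(\boldsymbol{\pi}_{\textup{mar}},\bar{\boldsymbol{\pi}}_{\textup{mar}}),
\]
with \(C=4\gamma\max_{s,\boldsymbol{a}}|A_{\boldsymbol{\pi}_{\textup{mar}}}(s,\boldsymbol{a})|/(1-\gamma)^2\). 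Into the surrogate I substitute the conductor-based advantage decomposition \(A_{\boldsymbol{\pi}_{\textup{mar}}}(s,\boldsymbol{a})=A_{\boldsymbol{\pi}_{\textup{mar}}}(M|s)+A_{\boldsymbol{\pi}_{\textup{mar}}}(\boldsymbol{a}|s,M)\), which reproduces the two positive advantage terms of the theorem. Into the KL penalty I substitute the exact chain rule
\[
\mathrm{D}_{\mathrm{KL}}\!\bigl(w(\cdot|s)\boldsymbol{\pi}(\cdot|s,\cdot),\bar{w}(\cdot|s)\bar{\boldsymbol{\pi}}(\cdot|s,\cdot)\bigr)=\mathrm{D}_{\mathrm{KL}}\!\bigl(w(\cdot|s),\bar{w}(\cdot|s)\bigr)+\sum_{j=1}^{K}w(M^j|s)\,\mathrm{D}_{\mathrm{KL}}\!\bigl(\boldsymbol{\pi}(\cdot|s,M^j),\bar{\boldsymbol{\pi}}(\cdot|s,M^j)\bigr),
\]
and then take \(\max_s\) using subadditivity of the maximum over a sum of nonnegative summands to obtain the two separate KL penalties of the theorem.

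The main obstacle is verifying that the Schulman coupling argument carries over intact to the conductor-augmented MDP. Because the instruction \(M\) influences the environment only through \(\boldsymbol{a}\), the induced trajectory law on \((s_t,\boldsymbol{a}_t)\) coincides with that of a standard MDP driven by the marginal policy \(\sum_M w(M|s)\boldsymbol{\pi}(\cdot|s,M)\); consequently the state-visitation TV bound of \((2\gamma/(1-\gamma)^2)\) times the maximum joint-policy TV, followed by Pinsker, transfers without modification. A secondary technicality is legitimizing the \(\max_s\) reaching across the sum over \(j\) in the agent penalty without loosening the inequality, which follows from monotonicity of the maximum and nonnegativity of the KL summands. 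Once these points are in place, collecting the four terms reproduces the right-hand side of~\eqref{eq27}, completing the proof.
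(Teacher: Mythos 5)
Your proposal follows essentially the same route as the paper's proof: a TRPO-style surrogate bound applied to the flattened joint policy (the paper gets this by invoking Theorem~1 of Kuba et al.), the advantage split \(A_{\boldsymbol{\pi}_{\textup{mar}}}(s,\boldsymbol{a})=A_{\boldsymbol{\pi}_{\textup{mar}}}(M|s)+A_{\boldsymbol{\pi}_{\textup{mar}}}(\boldsymbol{a}|s,M)\), a conductor/agent decomposition of the KL penalty, and a final \(\max_s\). The one imprecision is your claim that the KL penalty admits an \emph{exact} chain rule: the penalty delivered by the TRPO bound is \(\mathrm{D}_{\mathrm{KL}}\left(\boldsymbol{\pi}_{\textup{mar}}(\cdot|s),\bar{\boldsymbol{\pi}}_{\textup{mar}}(\cdot|s)\right)\), a divergence between \emph{marginals} over \(\boldsymbol{a}\), whereas your chain rule decomposes the divergence between the \emph{joint} laws over \((M,\boldsymbol{a})\); passing from the former to the latter costs an additional inequality (marginalization can only decrease KL --- equivalently the log-sum inequality the paper applies in Appendix~A.3 to obtain its bound~\eqref{eq26}). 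Since the penalty enters with a minus sign the inequality points the right way and your argument still closes, but that step is a relaxation rather than a substitution of an identity, and should be stated as such.
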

For proof see Appendix~A.2. The inequality quantifies the expected return difference between the new policy \(\bar{\boldsymbol{\pi}}_{\textup{mar}}\) and the existing policy \(\boldsymbol{\pi }_{\textup{mar}}\)
under the conductor-based mechanism. This finding offers precise guidance for the subsequent policy update process.%%For proof see Appendix~\ref{prothm}.

\subsection{Guaranteed Monotonic Joint Policy Optimization}
In this section, we formulate the policy update mechanisms for both the centralized conductor's policy and individual agents' policies. Our theoretical analysis aims to establish the monotonic improvement guarantee for the conductor-based joint policy \(\boldsymbol{\pi }_{\textup{mar}}\) through a two-level optimization framework. This guarantee constitutes a critical theoretical foundation, ensuring progressive performance improvement through successive policy iterations while validating the conductor's instructions. The specific two-level policy update mechanisms in episode \(k+1\) are as follows:

(i)~The conductor's policy \(w\left(\cdot|s \right) =w_k\left(\cdot|s \right) \) is updated first according to the rule:
\begin{align}\label{eq34}
	w_{k+1} = \underset{\bar{w}}{\mathrm{arg\,max}}&~\Bigg[
		\mathbb{E}_{s\sim \rho_{\boldsymbol{\pi}_{\textup{mar},k}}, M \sim \bar{w}} A_{\boldsymbol{\pi}_{\textup{mar},k}}(M | s)  \notag \\&  - C \mathrm{D}_{\mathrm{KL}}^{\max} \left( w_k, \bar{w} \right)   \Bigg].
\end{align}
	
(ii)~For each \( M^j\) where \(j\in \left\{ 1,2,...,K \right\} \), the agents sequentially update their policies in accordance with the order \(i_{1:N}\), following the update rule:
\begin{align}\label{eq35}
	&\pi _{k+1}^{i_l}(\cdot |s,M^j)\notag \\&=\underset{\bar{\pi}^{i_l}\left( \cdot |s,M^j \right)}{arg\max}\left[ w_{k+1}\left( M^j|s \right)  L_{ \boldsymbol{\pi }_{\textup{mar}, k}}^{i_{1:l}}\left( \boldsymbol{\pi }_{k+1}^{i_{1:l-1}},\bar{\pi}^{i_l}|s,M^j \right) \right. \notag \\&     \left.-\max C w_k\left( M^j|s \right) \mathrm{D}_{\mathrm{KL}}\left( \pi _{k}^{i_l}(\cdot |s,M^j),\bar{\pi}^{i_l}(\cdot |s,M^j) \right)  \right],
\end{align}
	where \(L_{ \boldsymbol{\pi }_{\textup{mar},k}}^{i_{1:l}}\left( \boldsymbol{\pi }_{k+1}^{i_{1:l-1}},\bar{\pi}^{i_l}|s,M^j \right) \triangleq \mathbb{E} _{\boldsymbol{a}^{i_{1:l-1}}\sim \boldsymbol{\pi }_{k+1}^{i_{1:l-1}},a^{i_l}\sim \bar{\pi}^{i_l}}  \left[ A_{ \boldsymbol{\pi }_{\textup{mar},k}}^{i_l}\left( \boldsymbol{a}^{i_{1:l-1}},a^{i_l}|s,M^j \right) \right]\).~For proof see Appendix~A.4. Equation~\eqref{eq35} presents a policy update mechanism using \(w_k\) to regulate the update magnitude between the agents' new and existing policies. This approach optimizes the exploration-exploitation trade-off, helps avoid local optima, and enhances the adaptability and effectiveness of the policy update process. In HCPO's practical implementation, we employ the CTDE framework to mitigate the limitations posed by communication interference, as shown in Figure~\ref{figHCPO}(c). During training, we incorporate a virtual \textit{centralized} conductor \(w\) parameterized by \(\Psi\), and each individual agent is equipped with a \textit{local} conductor network \(w^i\) (parameterized by \(\psi^i\)) and an actor network \(\pi^i\) (parameterized by~\(\theta^i\)). To maintain the theoretical assumption that agents update their policies using only local observations and the broadcast instruction \(M\), we employ a two-stage training protocol. First, we collect experience with the centralized conductor and optimize its policy parameter \(\Psi\). Then, every agent’s actor network \(\theta^i\) is sequentially updated to improve the policy. Second, we fix \(\Psi\) and distill its policy into~\(\psi^i \) via cross-entropy loss~\cite{chen2024ptde}, enabling fully decentralized execution at evaluation. We summarize our HCPO as Algorithm~1 of Appendix~B. 
\begin{figure*}[htbp] %有*代表占满2栏
	\centering
	\begin{subfigure}{0.27\textwidth}
		\includegraphics[width=\linewidth]{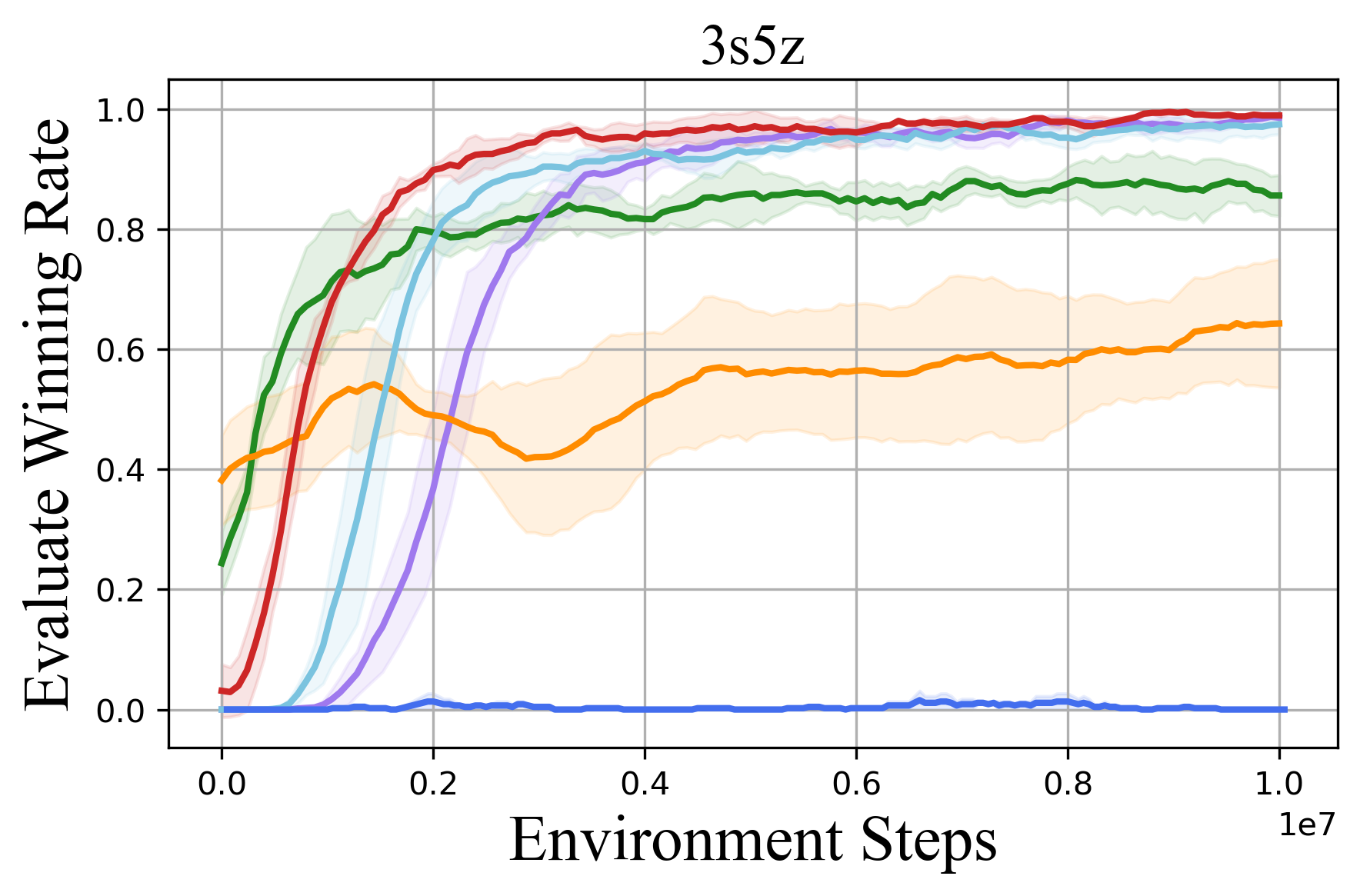}
	\end{subfigure}%
	\begin{subfigure}{0.27\textwidth}
		\includegraphics[width=\linewidth]{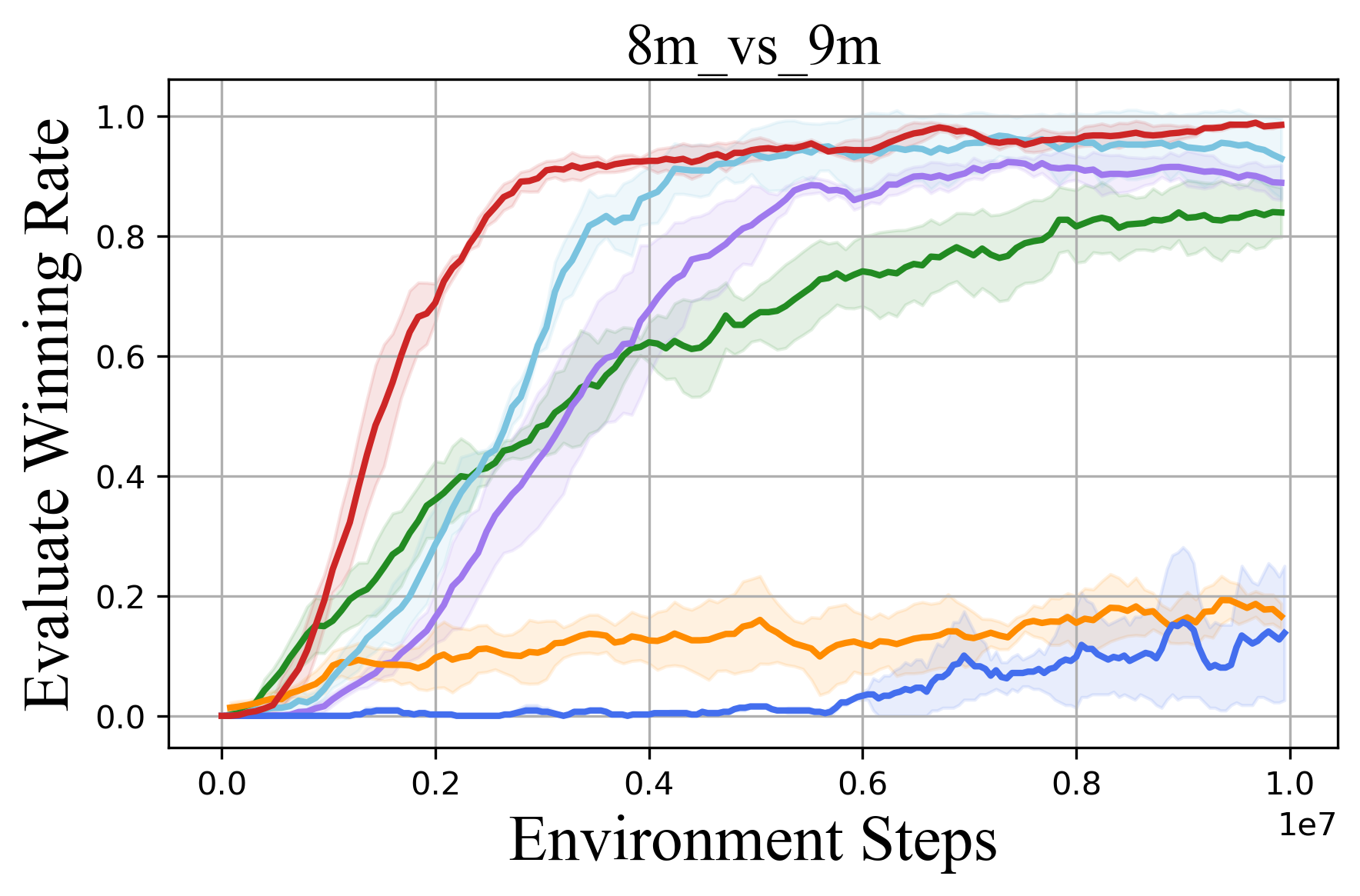}
	\end{subfigure}%
	\begin{subfigure}{0.27\textwidth}
		\includegraphics[width=\linewidth]{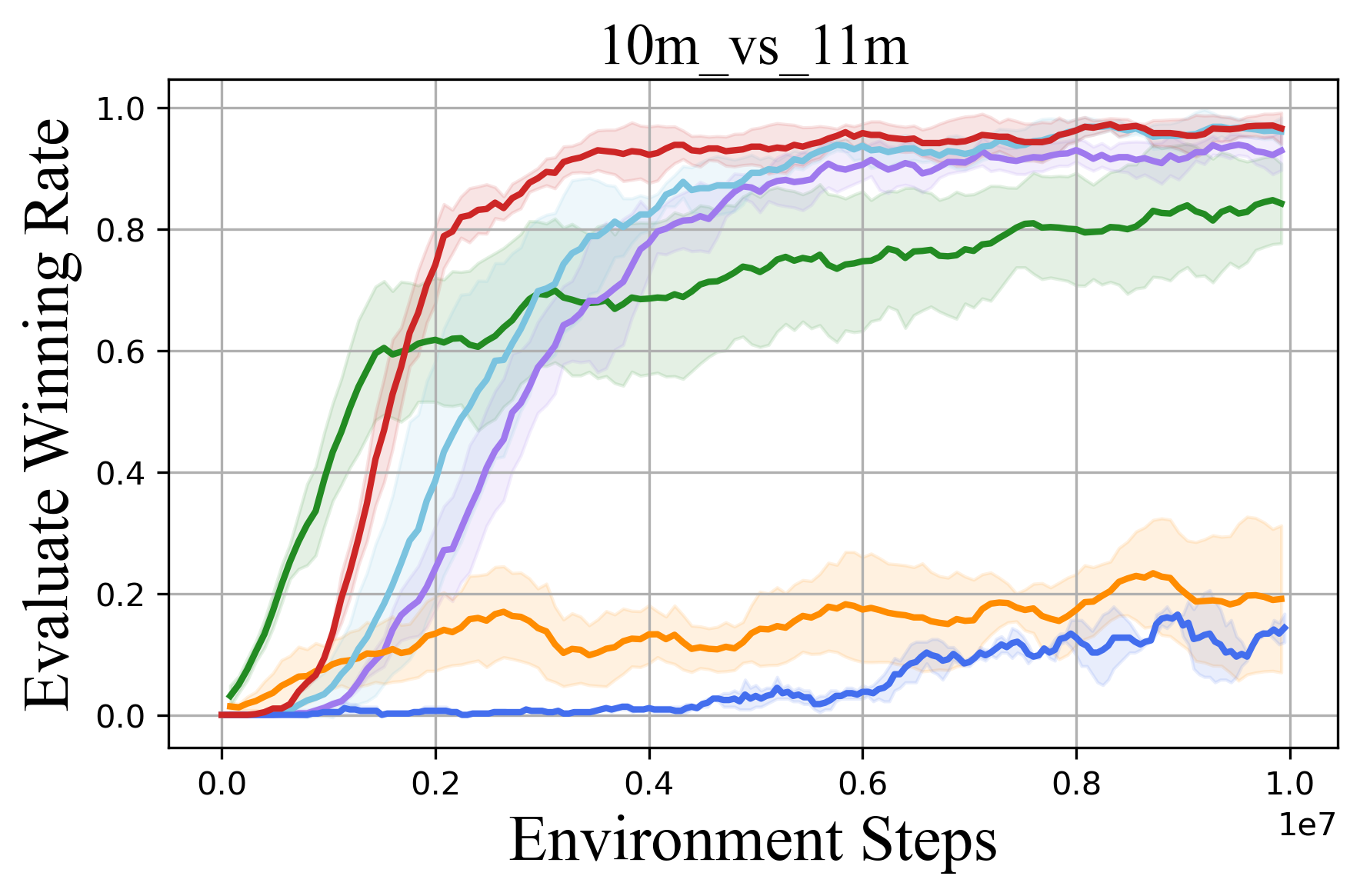}
	\end{subfigure}
	
	\begin{subfigure}{0.27\textwidth}
		\includegraphics[width=\linewidth]{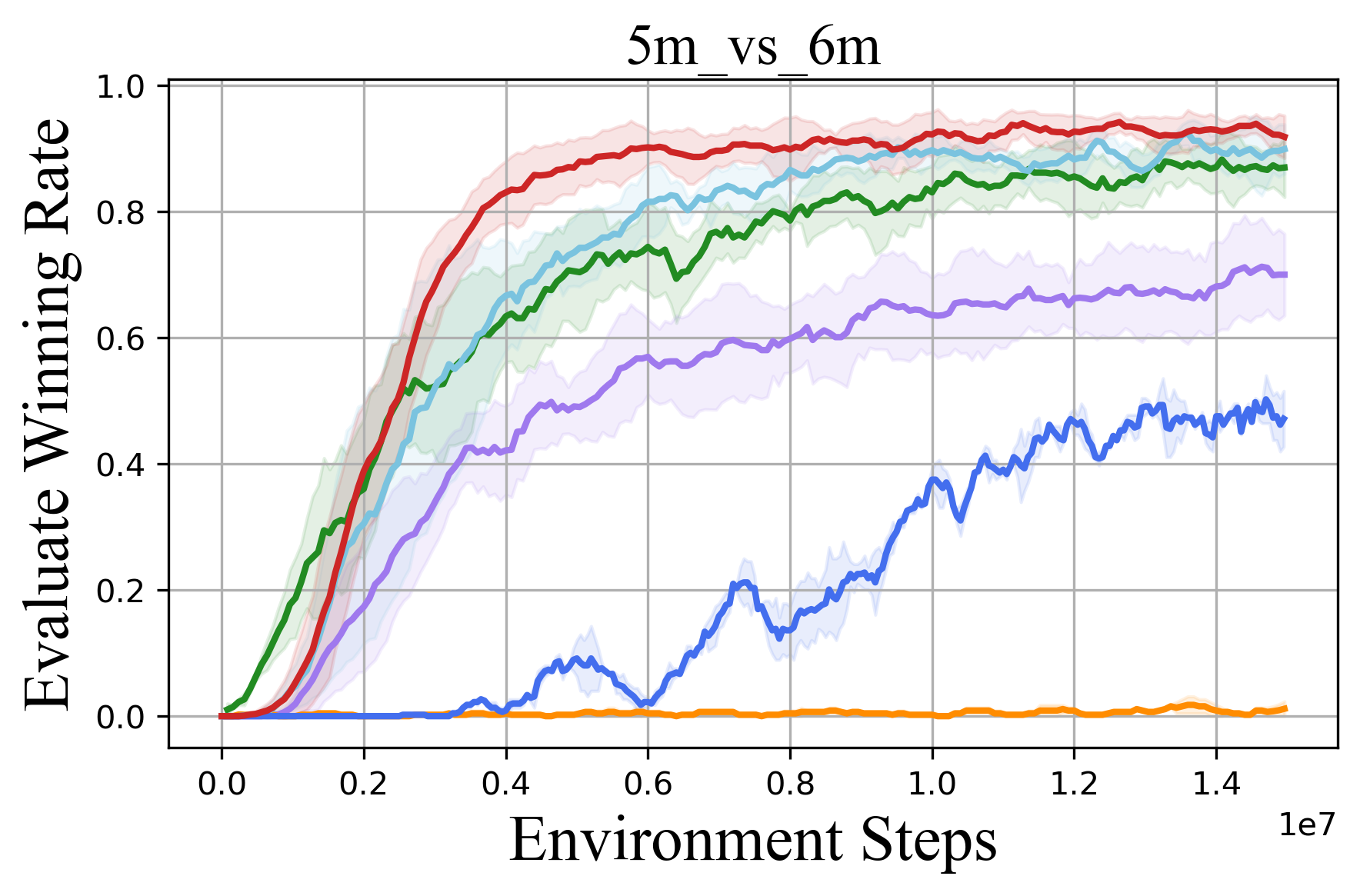}
	\end{subfigure}%
	\begin{subfigure}{0.27\textwidth}
		\includegraphics[width=\linewidth]{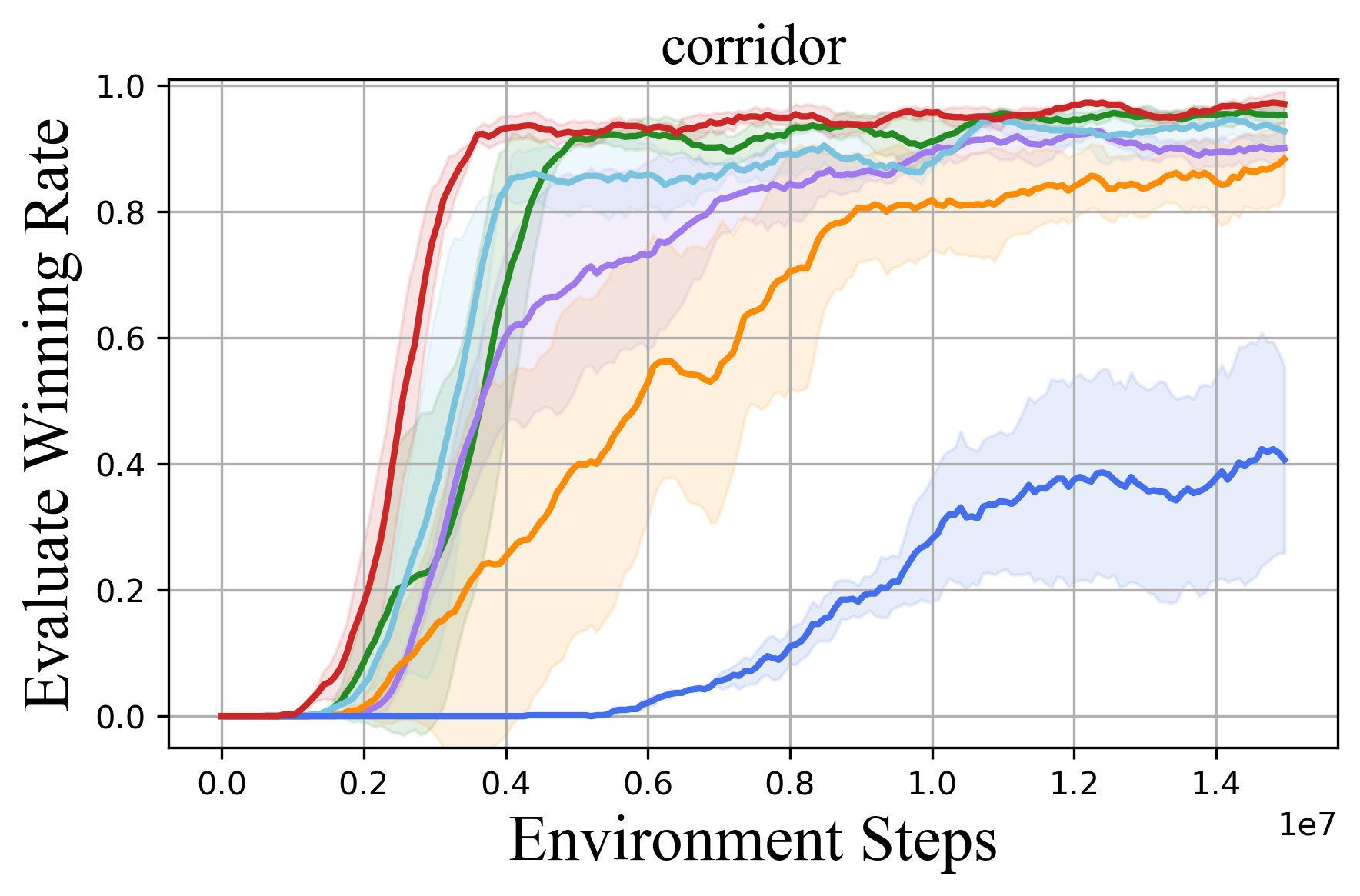}
	\end{subfigure}
	\begin{subfigure}{0.13\textwidth}
		\includegraphics[width=\linewidth]{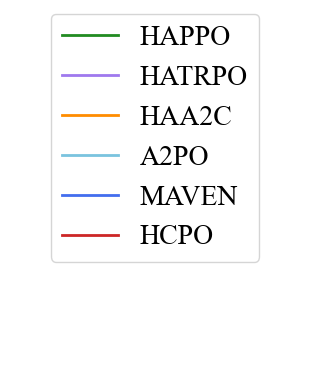}
	\end{subfigure}
	\caption{Performance comparison on SMAC. With the conductor-based joint policy enhancing learning efficiency, HCPO reliably outperforms all baselines.}\label{smacfigs}
\end{figure*}
\section{Experiments}
In this section, we evaluate HCPO on standard MARL benchmarks including SMAC~\cite{whiteson2019starcraft}, MA-MuJoCo~\cite{likaleidoscope}, and MPE~\cite{lowe2017multi}. It is compared against strong baselines: HATRPO~\cite{kuba2022trust}, HAPPO~\cite{kuba2022trust}, A2PO~\cite{wang2023order}, HAA2C~\cite{zhong2024heterogeneous}. Detailed experimental setup and results are presented below. 
\begin{figure}[htbp]
	\centering
	\begin{subfigure}{0.49\columnwidth}
		\includegraphics[width=\linewidth]{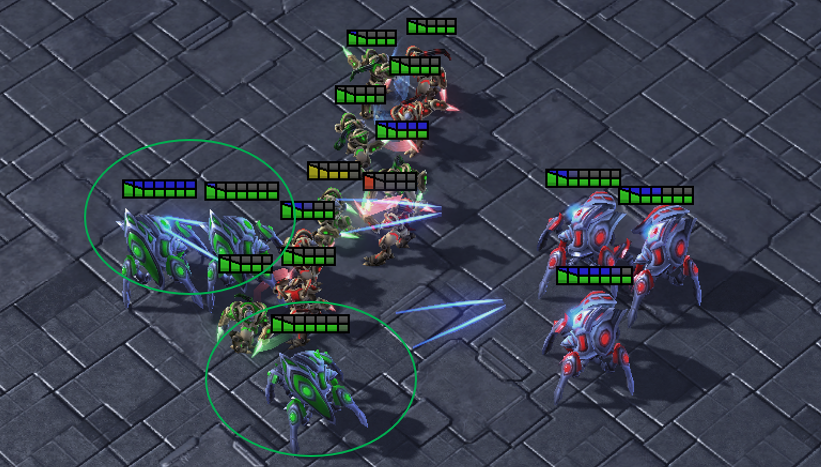}
		\caption{}
		\label{subfig4b}
	\end{subfigure}
	\begin{subfigure}{0.49\columnwidth}
		\includegraphics[width=\linewidth]{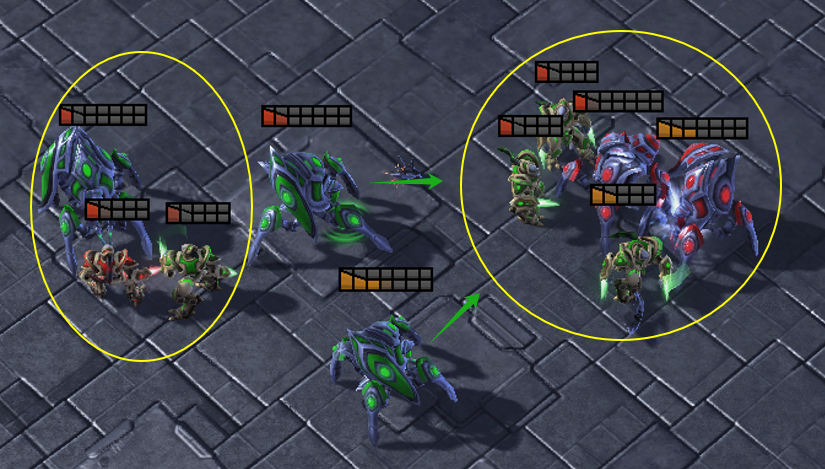}
		\caption{}
		\label{subfig4c}
	\end{subfigure}
		\caption{Effective coordination in SMAC on the 3s5z map: A visual analysis of agent strategies.}\label{SMACvisfigsmain}
\end{figure}
\subsection{Settings and Performance}\label{setandperform}

\textbf{SMAC:} We evaluate algorithms on five SMAC maps, including the widespread hierarchical algorithm MAVEN~\cite{mahajan2019maven}. Each algorithm is tested with five different random seeds to ensure the robustness and reliability. The results demonstrate that our HCPO achieves outstanding performance in all test maps, as illustrated in Figure~\ref{smacfigs}, with shadows showing standard deviation across different runs. Specifically, HCPO is the first to achieve a 90\% winning rate on all maps, enhancing exploration and learning efficiency. Furthermore, HCPO exhibits the lowest standard deviation, demonstrating its high stability. In addition, we visualize the gameplay scenarios on the \textit{3s5z} map in Figure~\ref{SMACvisfigsmain}. Figure~\ref{SMACvisfigsmain}(a) shows the early game strategy where our team uses one stalker to draw enemy fire, while two other stalkers attack from behind the zealots. Figure~\ref{SMACvisfigsmain}(b) presents the overall force division into two groups, with two stalkers providing support. These coordinated actions highlight the agents' effective collaboration, driven by the latent instructions of HCPO, which ultimately leads to victory. Detailed analysis and the evaluation method are presented in~Appendix~C.1.

\begin{figure}[htbp]
	\centering
	\begin{subfigure}{0.48\columnwidth}
		\includegraphics[width=\linewidth]{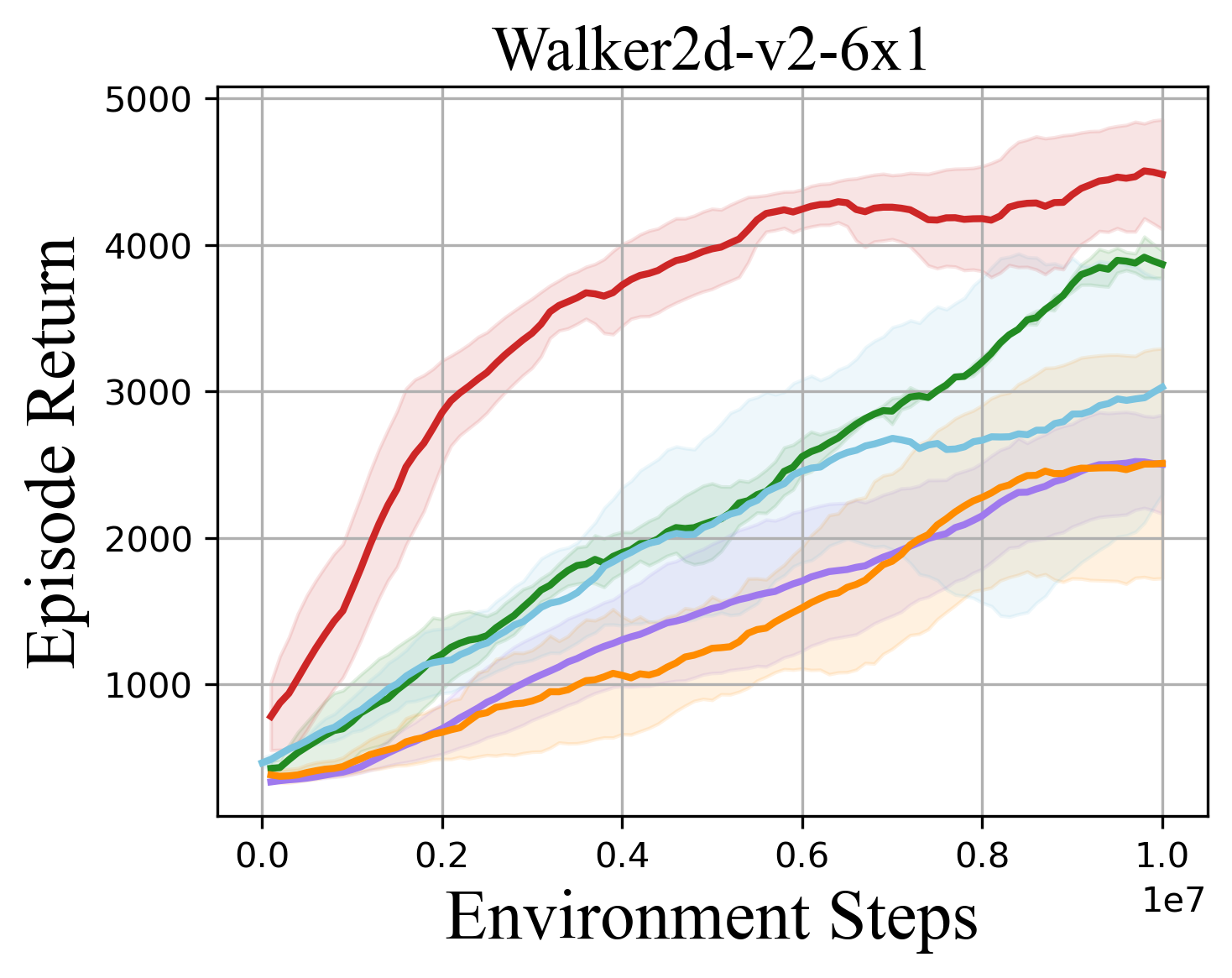}
	\end{subfigure}%
	\begin{subfigure}{0.48\columnwidth}
		\includegraphics[width=\linewidth]{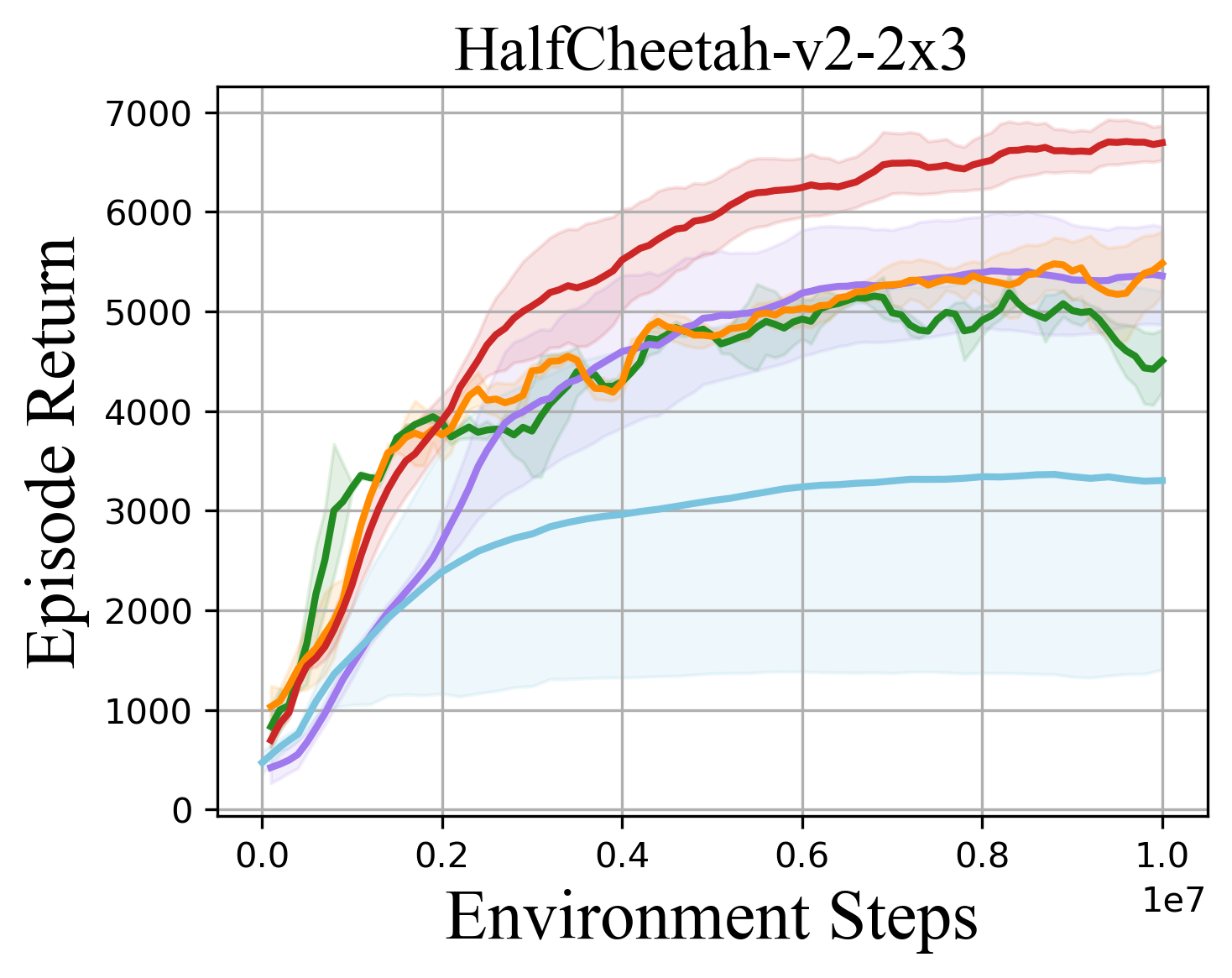}
	\end{subfigure}
	\begin{subfigure}{0.48\columnwidth}
		\includegraphics[width=\linewidth]{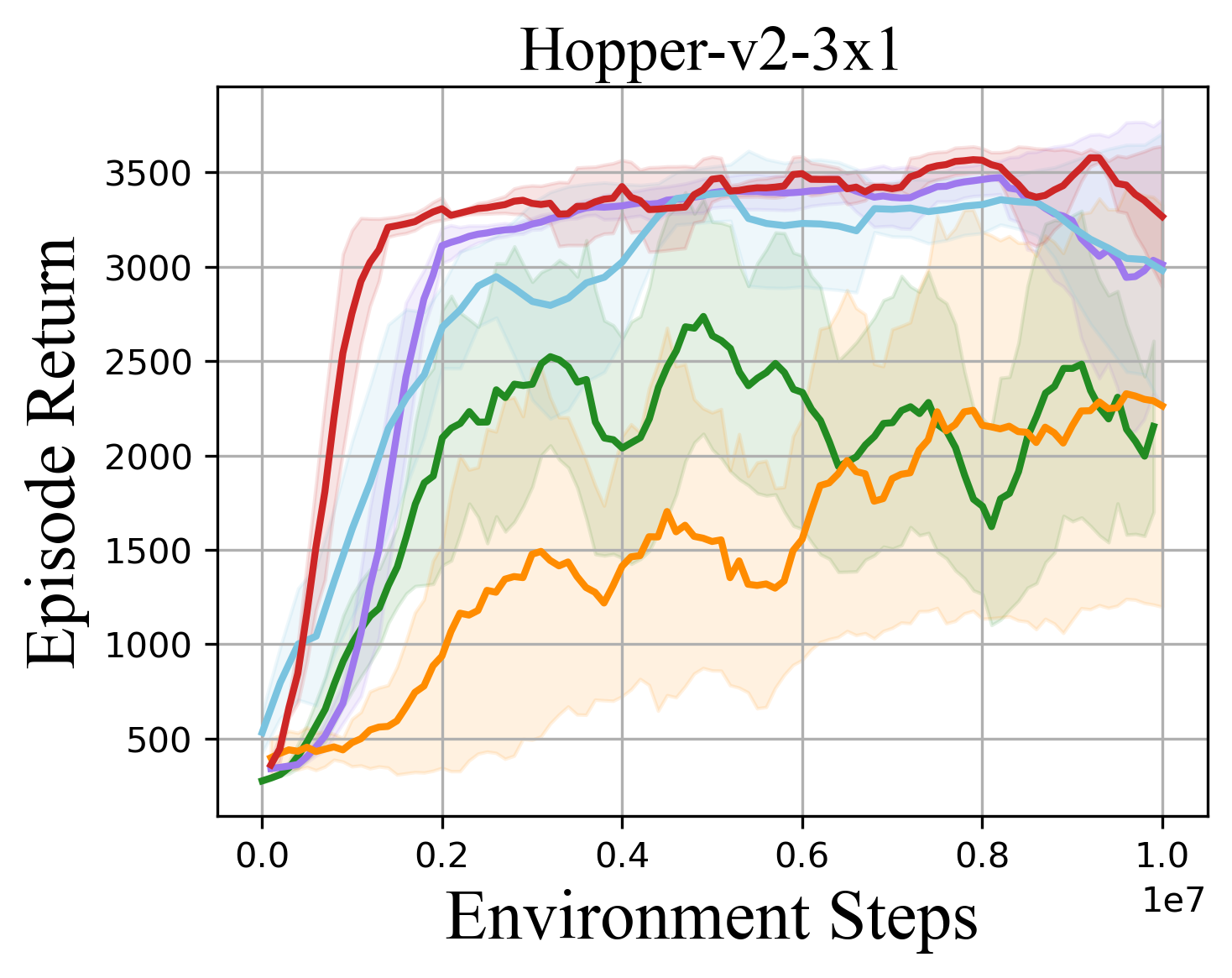}
	\end{subfigure}
	\begin{subfigure}{0.48\columnwidth}
		\includegraphics[width=\linewidth]{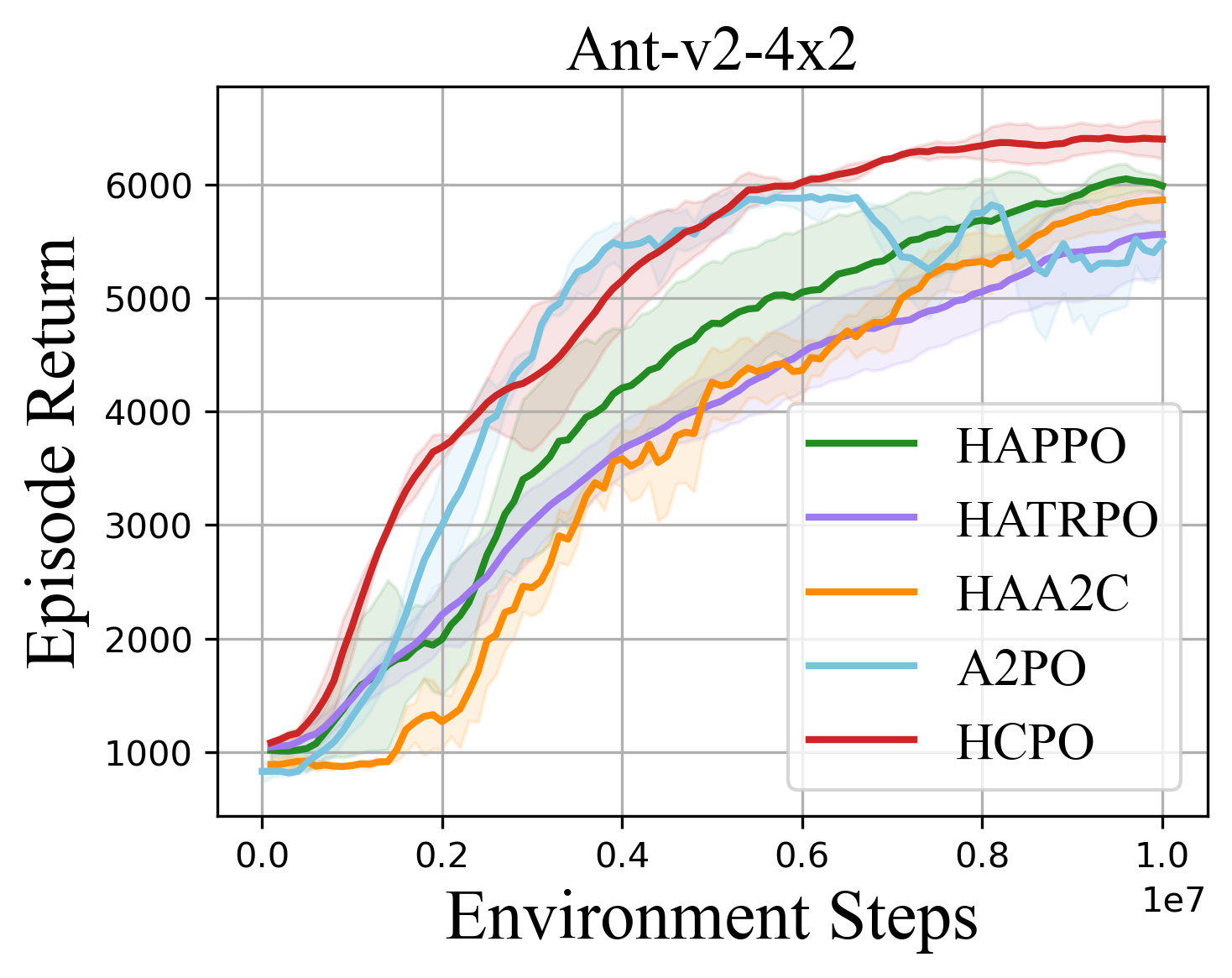}
	\end{subfigure}
	\caption{Comparative evaluation on MA-MuJoCo.}\label{mujocofigs}
\end{figure}
\begin{figure*}[t]
	\centering
	\includegraphics[width=0.72\textwidth]{"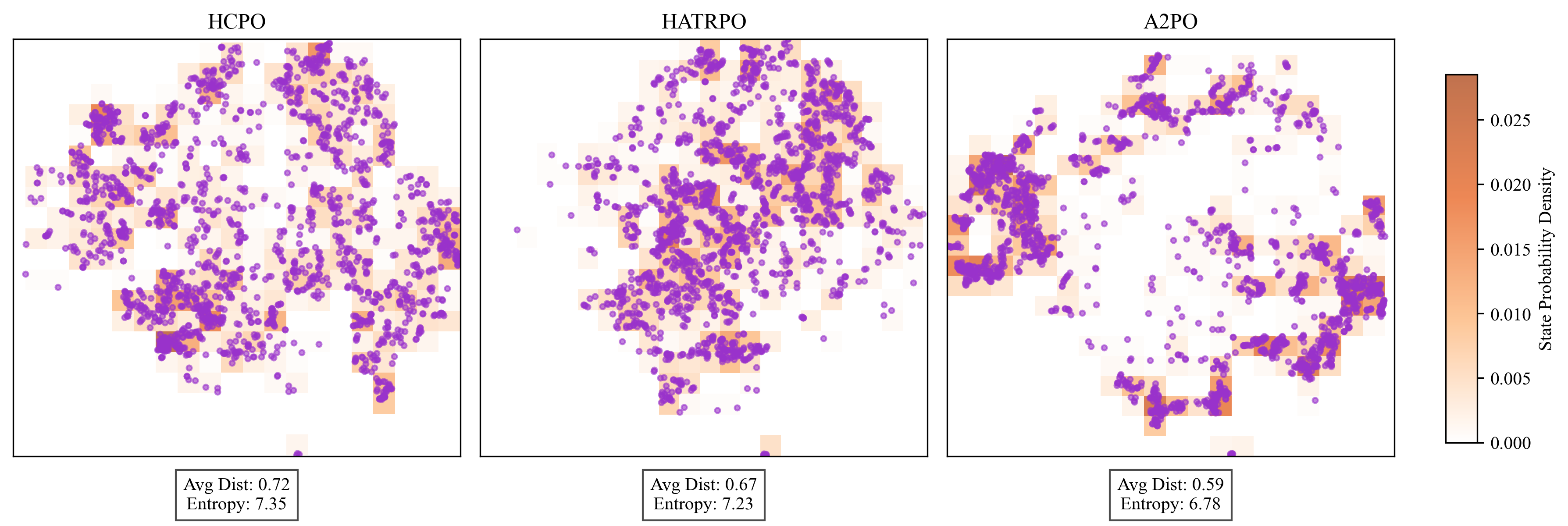"}
	\caption{Exploration comparison: t-SNE visualization and entropy analysis in \textit{Walker2d-v2-6\(\times\)1}.}\label{HCPOentropy1}
\end{figure*}
\textbf{MA-MuJoCo:} We compare HCPO with four advanced on-policy MARL algorithms using three different random seeds. As shown in Figure~\ref{mujocofigs}, HCPO not only achieves significantly higher final returns but also exhibits a lower standard deviation, indicating superior exploration and stability. Specifically, in \textit{HalfCheetah-v2-2\(\times\)3}, HCPO achieves around 23.42\% higher final returns than the next best algorithm HAA2C. In Figure~\ref{HCPOentropy1}, we employ t-SNE (t-Distributed Stochastic Neighbor Embedding) technique to project the states explored by HCPO, HATRPO, and A2PO algorithms during the early stage of training in the \textit{Walker2d-v2-6\(\times\)1} task onto a 2-D plane. Through entropy analysis and calculation of the average nearest neighbor distance, we can observe that HCPO demonstrates superior exploration. Detailed analysis is presented in~Appendix~C.2.

\textbf{MPE:} The results on MPE under three random seeds are shown in Figure~\ref{mpefigs}. HCPO exhibits rapid policy improvement in the early stage (0-2 million steps) of training, indicating that HCPO has a high cooperative efficiency and sufficient exploration. Furthermore, compared with HATRPO and A2PO, HCPO shows significant stability and~robustness. More experiments are illustrated in~Appendix C.3.

\begin{figure}[h] 
	\centering
	\begin{subfigure}{0.46\columnwidth}
		\includegraphics[width=\linewidth]{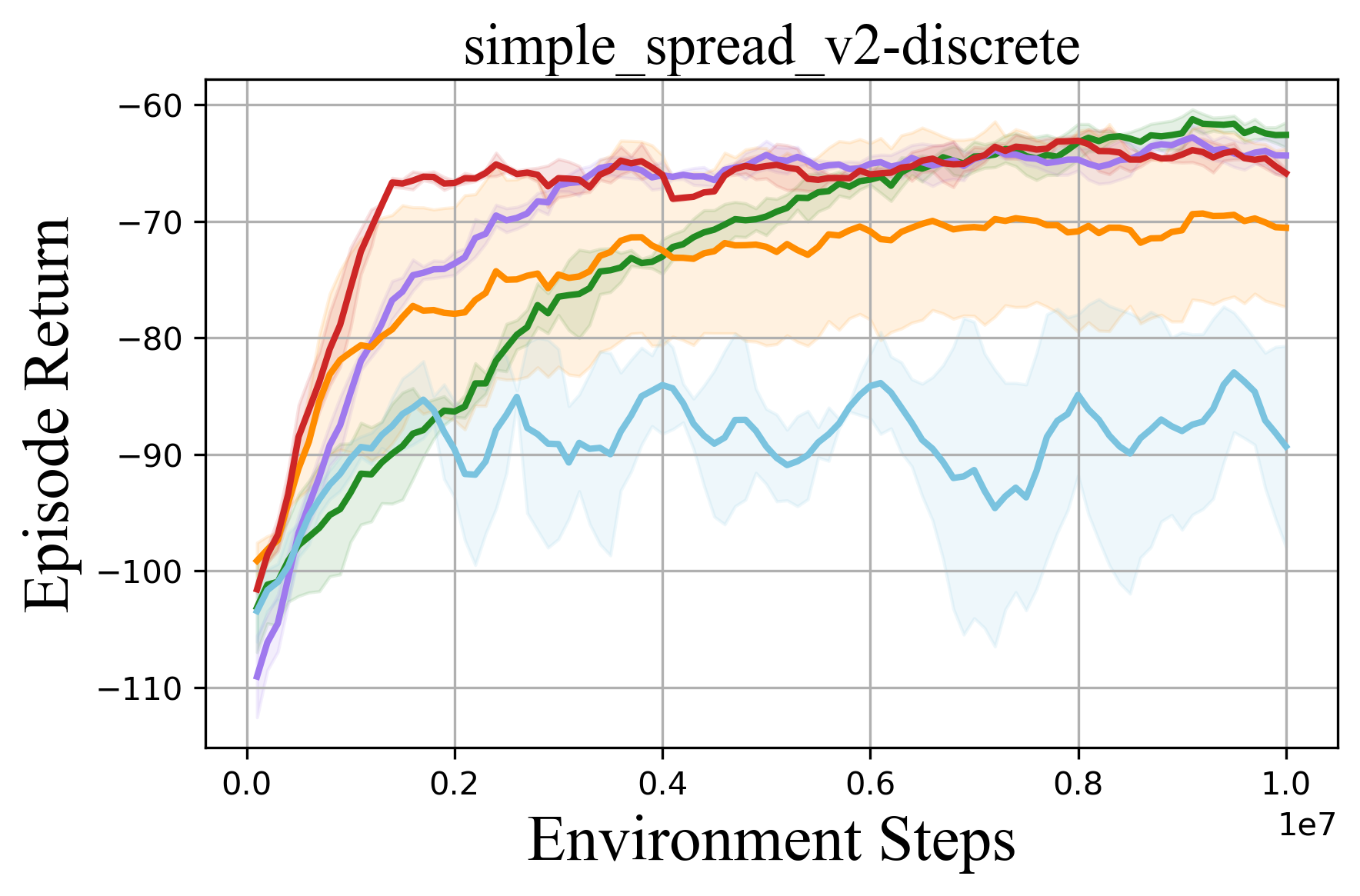}
		%	\caption{Task 1}
	\end{subfigure}
	\begin{subfigure}{0.46\columnwidth}
		\includegraphics[width=\linewidth]{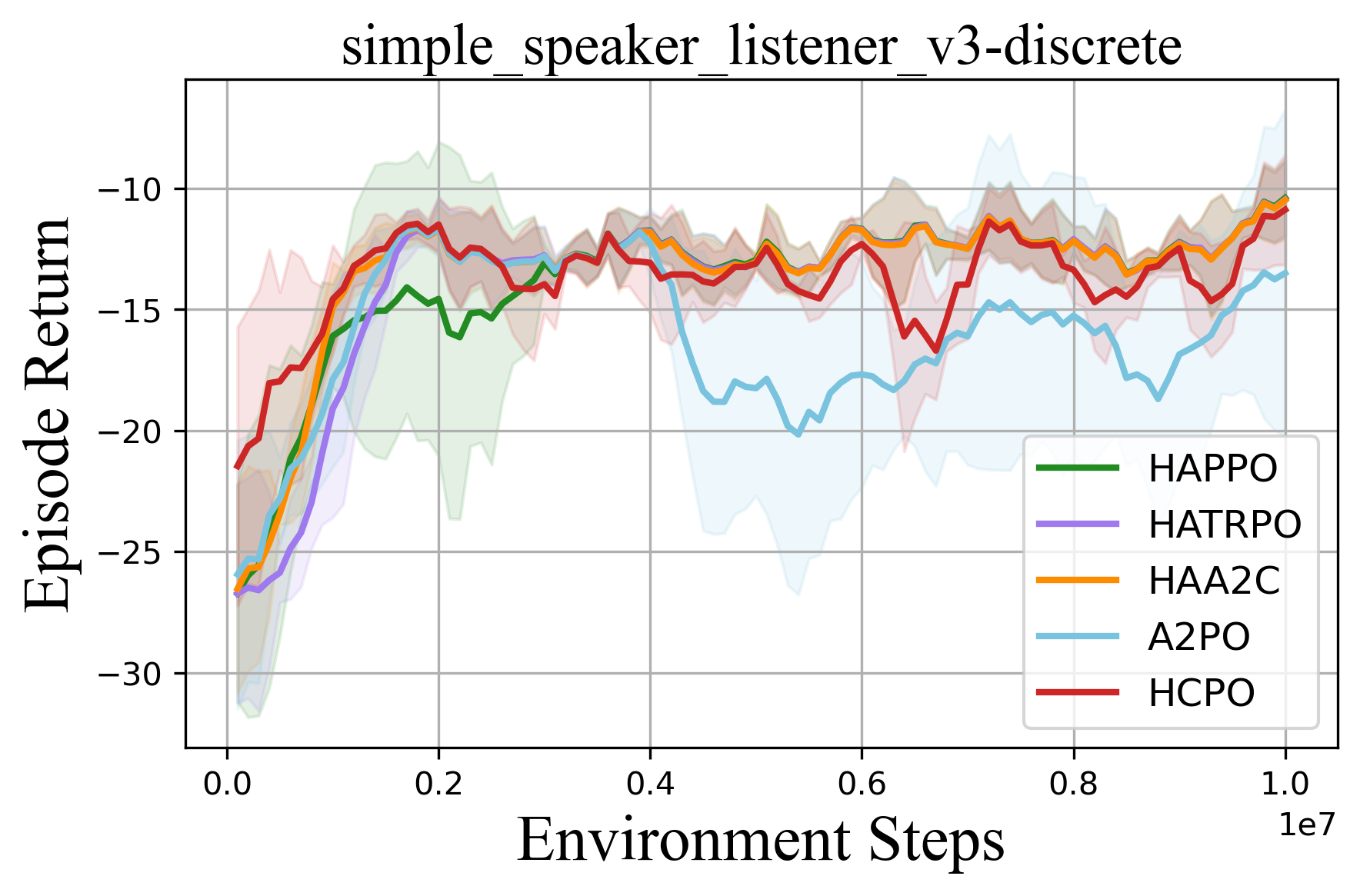}
		%	\caption{Task 3}
	\end{subfigure}
	\caption{Performance comparison of HCPO and other strong MARL algorithms across different MPE tasks.}
	\label{mpefigs}
\end{figure}

\subsection{Ablation Studies}
For several key components of HCPO, we conduct ablation studies with results shown in Figure~\ref{ablfigs}. Figure~\ref{ablfigs}(a) examines the impact of the conductor and varying the number of instructions (hyperparameter \(K\)) on performance. HCPO with the conductor shows a faster increase in winning rate and a higher final rate than without any conductor, demonstrating its effectiveness in boosting cooperation efficiency. The performance is also influenced by \( K \). It is important to balance performance and resource consumption when selecting \( K \). In Figure~\ref{ablfigs}(b), we examine the hyperparameter \(\delta_1\), which represents the KL-divergence constraint during the conductor's policy update. In Figure~\ref{ablfigs}(c), we evaluate HCPO under four conductor configurations: a centralized conductor, a random conductor (non-learning baseline), no conductor and local conductors (the core of our HCPO algorithm). Figure~\ref{ablfigs}(d) presents the final episode returns in a boxplot format. The results show that HCPO with local conductors achieves a median return comparable to that of a centralized conductor, while substantially outperforming the variant without any conductor. Furthermore, replacing the learned instruction preference distribution with a non-learning conductor that outputs uniformly random instructions leads to inferior performance. These findings validate the effectiveness of our proposed update mechanisms. We present detailed analysis in~Appendix~C.4.%AAAI简化版本
\begin{figure}[h]
	\centering
	\begin{subfigure}{0.46\columnwidth}
		\includegraphics[width=\linewidth]{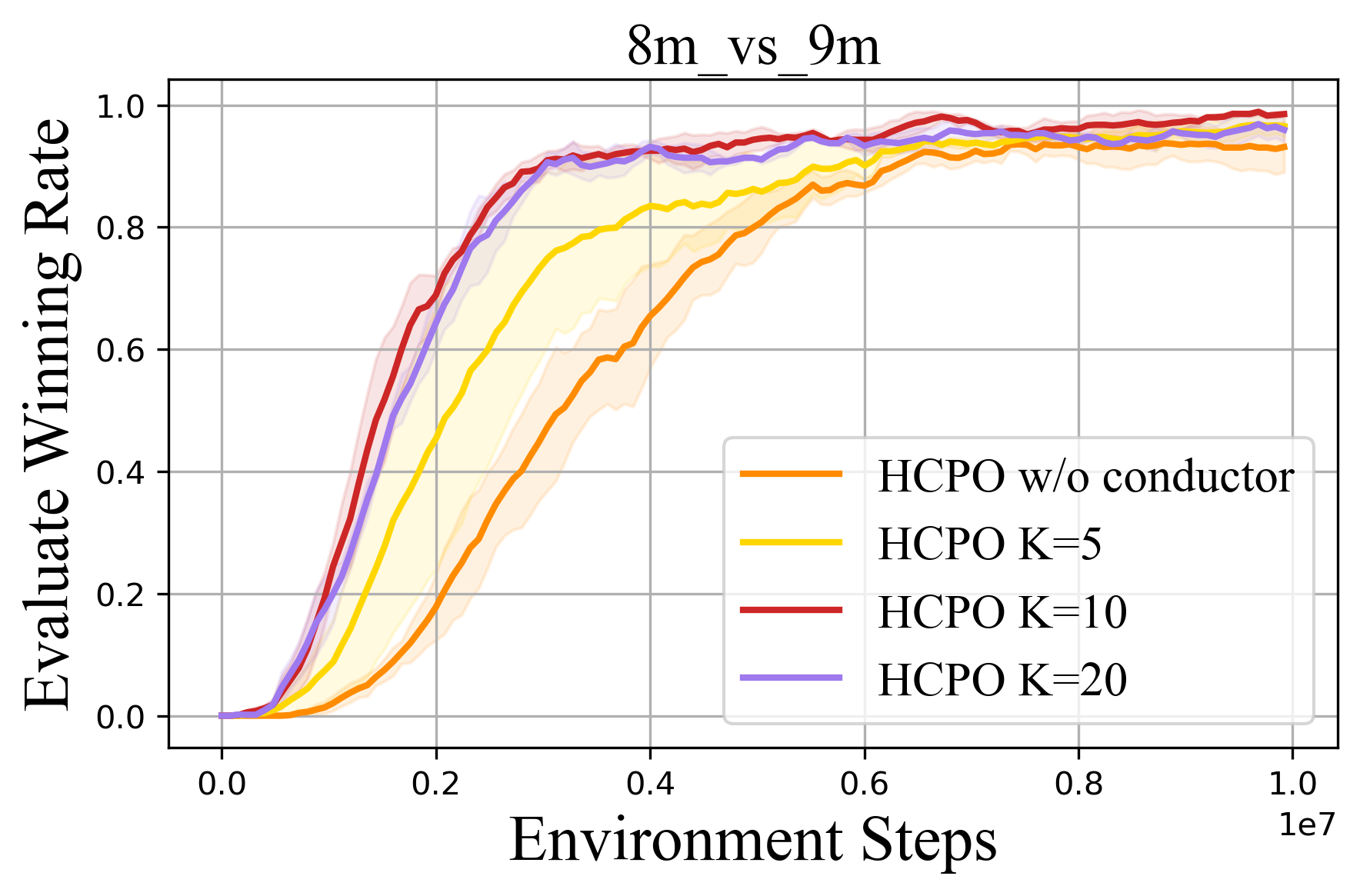}
		\caption{}
		\label{subfig6a}
	\end{subfigure}%
	\begin{subfigure}{0.46\columnwidth}
		\includegraphics[width=\linewidth]{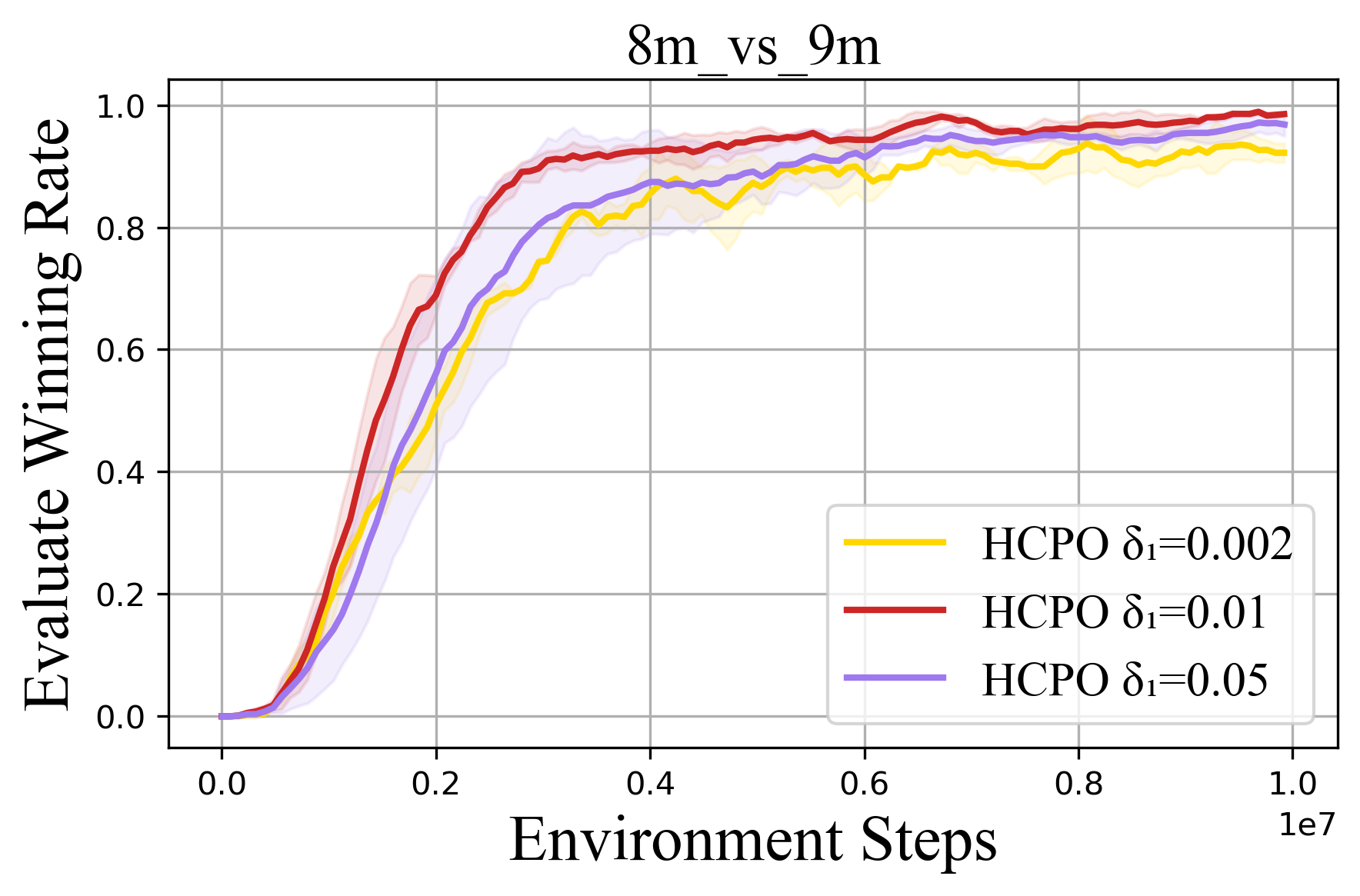}
		\caption{}
		\label{subfig6b}
	\end{subfigure}
	\begin{subfigure}{0.46\columnwidth}
		\includegraphics[width=\linewidth]{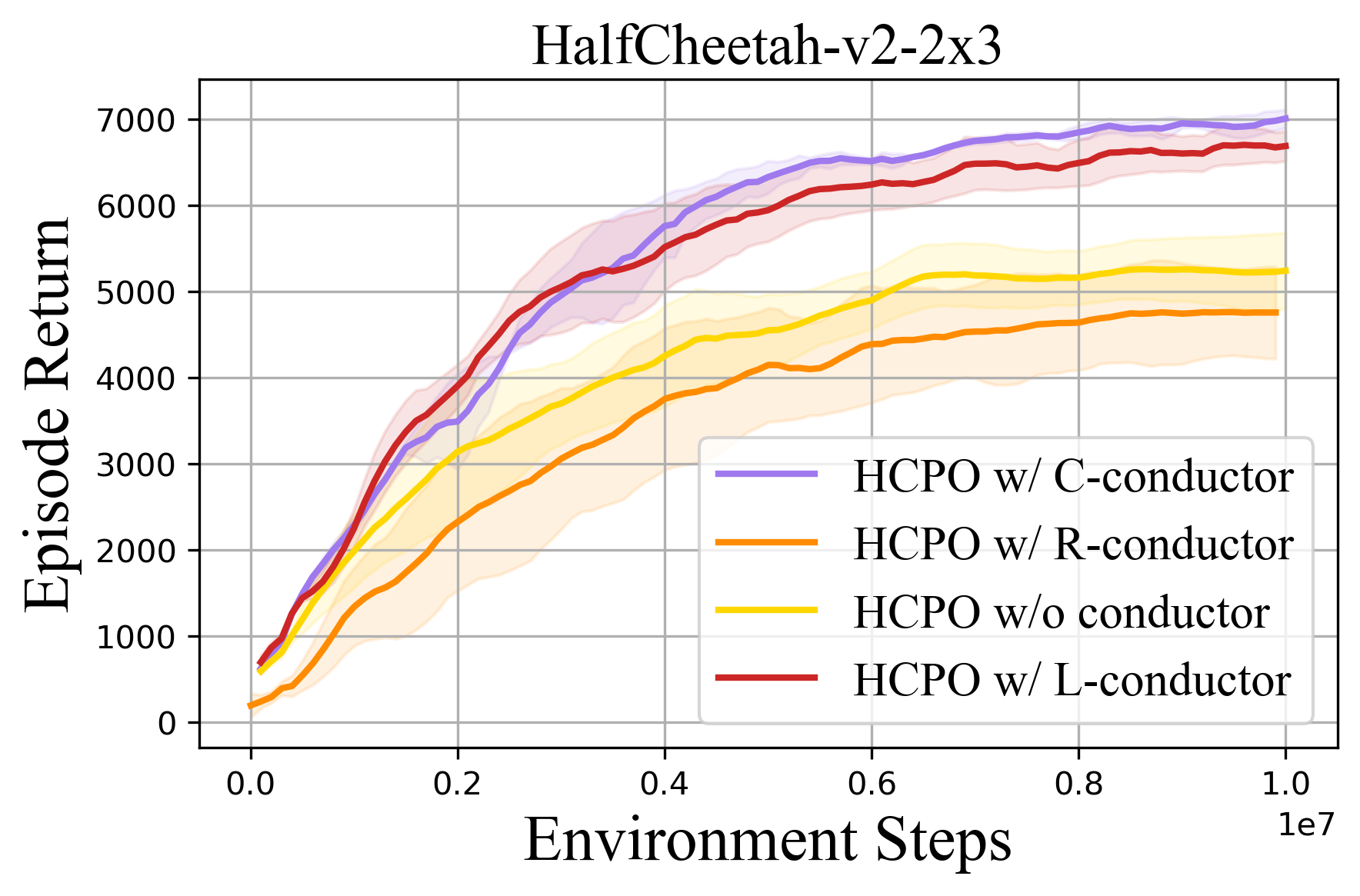}
		\caption{}
		\label{subfig6c}
	\end{subfigure}
	\begin{subfigure}{0.46\columnwidth}
		\includegraphics[width=\linewidth]{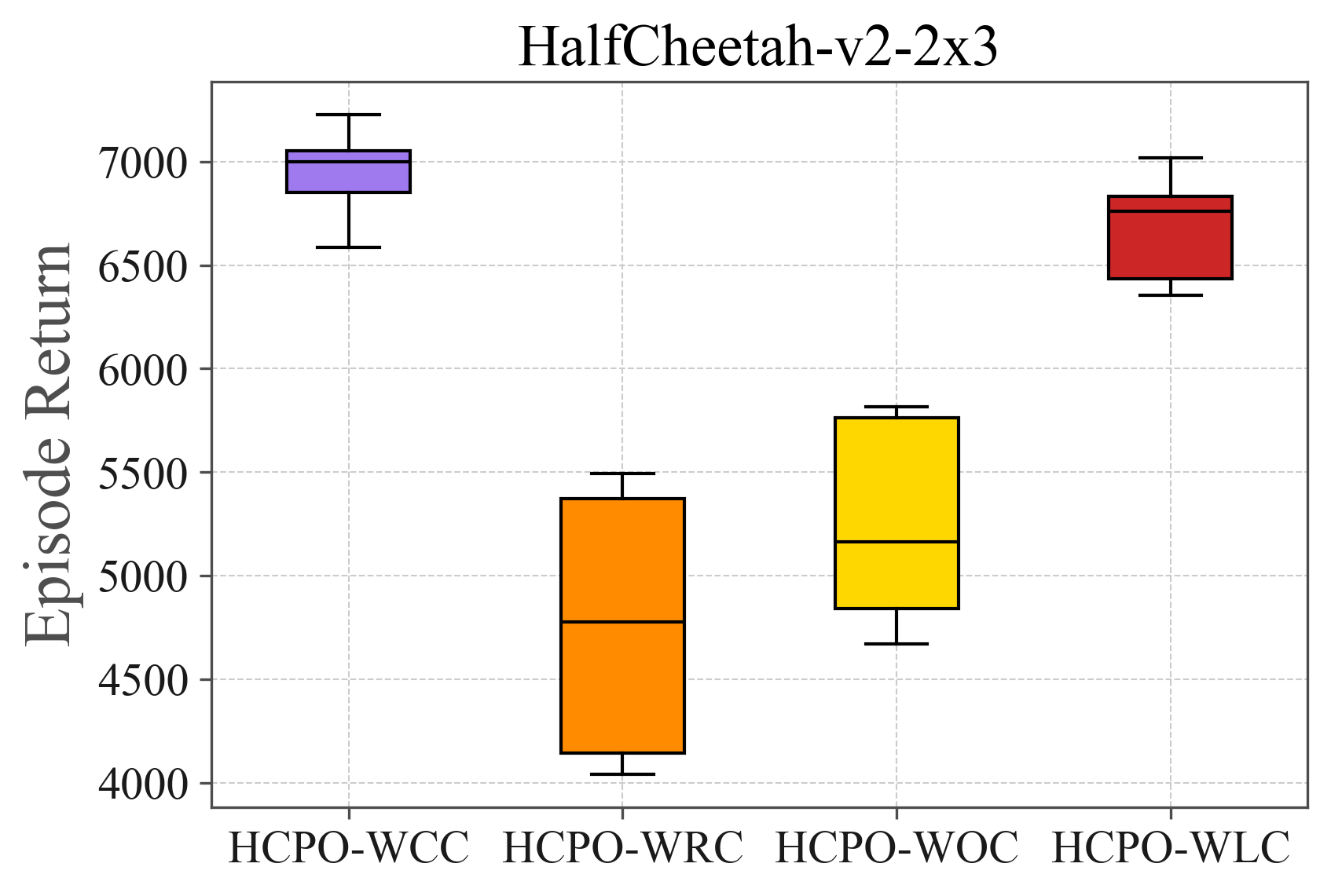}
		\caption{}
		\label{subfig6d}
	\end{subfigure}
	\caption{Ablation studies.}\label{ablfigs}
\end{figure}
\section{Conclusion and Future Work}\label{conclusionsec}
In this paper, we introduce HCPO, a hierarchical MARL algorithm designed to enhance the expressive capacity of joint policies and improve exploration. By leveraging specific advantage functions, we propose a two-level policy update mechanism with monotonic improvement guarantees without a monotonicity hypothesis in QMIX. Based on the cross-entropy method, each agent is equipped with a local conductor. This not only improves cooperation but also avoids the limitations imposed by communication constraints. Through comprehensive experiments on SMAC, MA-MuJoCo, and MPE, HCPO demonstrates superior performance compared to competitive algorithms. The main limitation of this work is that we design an on-policy algorithm for our hierarchical framework. In the future, we plan to integrate the conductor-based mechanism of HCPO into off-policy algorithms to improve sample efficiency.

\section{Acknowledgments}

This work was supported by National Natural Science Foundation of China under Grants 62233005, U2441245, 62573198, 62173142, and in part by the Shanghai Institute for Mathematics and Interdisciplinary Sciences (SIMIS) under Grant SIMIS-ID-2025-SP.

\bibliography{aaai2026}

	\onecolumn

% 附录部分
\appendix    
	\renewcommand{\thesection}{\Alph{section}}   % 把 1→A, 2→B...
	\setcounter{section}{0}                      % 从 A 重新开始
	
\section*{\centering \huge \bfseries Appendix}
\vspace{0.5cm}
	\section{A. Additional Details for HCPO Framework}\label{apxdetailpro}
		\subsection{A.1 Useful Lemmas}\label{uselema}
	\renewcommand{\thelem}{\ref{prop1}}
	\begin{lem}
		\textcolor{black}{~For any instruction \(M^j, j\in \left\{ 1,2,...,K \right\} \) chosen by the conductor, the conditional \(Q\)-function for agents \textcolor{black}{\(i_{1:l}\)} satisfies:}
		\begin{equation}
			Q_{\boldsymbol{\pi }_{\textup{mar}}}^{i_{1:l}}(\boldsymbol{a}^{i_{1:l}}|s,M^j)=\mathbb{E} _{\boldsymbol{a}^{-i_{1:l}}\sim \boldsymbol{\pi }^{-i_{1:l}}\left( \cdot |s,M^j \right)}\left[ Q_{\boldsymbol{\pi }_{\textup{mar}}}^{}\left( s,\boldsymbol{a} \right) \right], 
		\end{equation}
		where \(\boldsymbol{a}=\left( \boldsymbol{a}^{i_{1:l}},\boldsymbol{a}^{-i_{1:l}} \right)\).   
		%\end{proposition}
	\end{lem}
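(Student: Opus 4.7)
The plan is to unfold the defining expectation of $Q^{i_{1:l}}_{\boldsymbol{\pi}_{\textup{mar}}}(\boldsymbol{a}^{i_{1:l}}|s,M^j)$ and apply the tower property of conditional expectation. Since the right-hand side is literally an expectation of $Q_{\boldsymbol{\pi}_{\textup{mar}}}(s,\boldsymbol{a})$ over $\boldsymbol{a}^{-i_{1:l}}\sim \boldsymbol{\pi}^{-i_{1:l}}(\cdot|s,M^j)$, the strategy is to isolate the time-$0$ randomness over $\boldsymbol{a}_0^{-i_{1:l}}$ and show that, once a specific realization of $\boldsymbol{a}_0^{-i_{1:l}}$ is fixed, the remaining inner expectation collapses exactly to the joint state-action value function.

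First, I would rewrite the definition of $Q^{i_{1:l}}_{\boldsymbol{\pi}_{\textup{mar}}}(\boldsymbol{a}^{i_{1:l}}|s,M^j)$ by peeling off the outer expectation over $\boldsymbol{a}_0^{-i_{1:l}} \sim \boldsymbol{\pi}_0^{-i_{1:l}}(\cdot|s,M^j)$ from the remaining trajectory randomness $(\mathfrak{s}^{1:\infty}_{\rho_{\boldsymbol{\pi}_{\textup{mar}}}}, \mathfrak{M}^{1:\infty}_{w}, \mathfrak{a}^{1:\infty}_{\boldsymbol{\pi}})$. This splitting is legitimate by Fubini, because the time-$0$ action of the complementary agents is sampled independently of the strictly-future randomness once $s_0$ and $M_0$ are given, and because the product-form assumption $\boldsymbol{\pi}(\cdot|s,M^j)=\prod_{i=1}^N \pi^i(\cdot|s,M^j)$ guarantees that the marginal over $-i_{1:l}$ is precisely $\boldsymbol{\pi}^{-i_{1:l}}(\cdot|s,M^j)$.

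Second, I would identify the inner conditional expectation with $Q_{\boldsymbol{\pi}_{\textup{mar}}}(s,\boldsymbol{a})$ for $\boldsymbol{a}=(\boldsymbol{a}^{i_{1:l}},\boldsymbol{a}^{-i_{1:l}})$. The only apparent discrepancy with the definition in Equation (2) (the state-action value) is the extra conditioning on $M_0=M^j$. I would argue this is vacuous: the time-$0$ reward $r(s_0,\boldsymbol{a}_0)$ depends only on $(s_0,\boldsymbol{a}_0)$, the next state is drawn from $\mathcal{P}(\cdot|s_0,\boldsymbol{a}_0)$, and for all $t\ge 1$ the instructions are resampled as $M_t\sim w(\cdot|s_t)$ and joint actions as $\boldsymbol{a}_t\sim \boldsymbol{\pi}(\cdot|s_t,M_t)$; none of these involve $M_0$. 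Hence conditioning on $M_0=M^j$ drops out, and the inner expectation is exactly $Q_{\boldsymbol{\pi}_{\textup{mar}}}(s,\boldsymbol{a})$.

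Substituting this identification back into the outer expectation over $\boldsymbol{a}_0^{-i_{1:l}}$ directly yields the claimed identity. I do not foresee a genuine obstacle; the argument is essentially a bookkeeping exercise with the tower property. The only subtlety that deserves care is notational—verifying that ``$M_0=M^j$'' is inert in the post-$t=0$ dynamics, and that the marginalization of $\boldsymbol{\pi}(\cdot|s,M^j)$ over the agents in $-i_{1:l}$ indeed coincides with the product $\prod_{i\in -i_{1:l}} \pi^i(\cdot|s,M^j)$ used implicitly in the statement.
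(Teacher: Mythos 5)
Your proposal is correct and follows essentially the same route as the paper's proof: split off the expectation over $\boldsymbol{a}_0^{-i_{1:l}}$, observe that the conditioning on $M_0=M^j$ becomes vacuous once the full joint action $\boldsymbol{a}_0$ is fixed, and identify the inner expectation with $Q_{\boldsymbol{\pi}_{\textup{mar}}}(s,\boldsymbol{a})$. You in fact make explicit the one step the paper leaves implicit (why $M_0$ drops out between its first and second lines), so nothing is missing.
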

	\begin{proof}
		\begin{align}
			Q_{\boldsymbol{\pi }_{\textup{mar}}}^{i_{1:l}}(\boldsymbol{a}^{i_{1:l}}|s,M^j)&=\mathbb{E} _{\boldsymbol{a}_{0}^{-i_{1:l}}\sim\boldsymbol{\pi }_{0}^{-i_{1:l}}\left( \cdot |s,M^j \right)} \Bigl[ \mathbb{E} _{\mathfrak{s}_{\rho_{\boldsymbol{\pi }_{\textup{mar}}}}^{1:\infty}, \mathfrak{M}_{w}^{1:\infty},\mathfrak{a}_{\boldsymbol{\pi }}^{1:\infty}}  [\sum_{t=0}^{\infty}{\gamma ^t}r_t|s_0=s,M_0=M^j, \boldsymbol{a}_{0}^{i_{1:l}}=\boldsymbol{a}_{}^{i_{1:l}} ,\boldsymbol{a}_{0}^{-i_{1:l}}] \Bigr] \notag \\
			&=\mathbb{E} _{\boldsymbol{a}_{0}^{-i_{1:l}}\sim\boldsymbol{\pi }_{0}^{-i_{1:l}}\left( \cdot |s,M^j \right)}\Bigl[ \mathbb{E} _{\mathfrak{s}_{\rho_{\boldsymbol{\pi }_{\textup{mar}}}}^{1:\infty}, \mathfrak{M}_{w}^{1:\infty},\mathfrak{a}_{\boldsymbol{\pi }}^{1:\infty}}   \left.[\sum_{t=0}^{\infty}{\gamma ^t}r_t|s_0=s, \boldsymbol{a}_{0}^{i_{1:l}}=\boldsymbol{a}_{}^{i_{1:l}} ,\boldsymbol{a}_{0}^{-i_{1:l}}\right.]  \Bigr] \notag \\
			&=\mathbb{E} _{\boldsymbol{a}^{-i_{1:l}}\sim \boldsymbol{\pi }^{-i_{1:l}}\left( \cdot |s,M^j \right)}\left[ Q_{\boldsymbol{\pi }_{\textup{mar}}}^{}\left( s,\boldsymbol{a} \right) \right].
		\end{align}
	\end{proof}

	\renewcommand{\thelem}{\ref{lemma1}}
	\begin{lem}
		(Conditional Advantage Decomposition) Consider a cooperative Markov game with a joint policy \(\boldsymbol{\pi }_{\textup{mar}}\). For any state \(s\), any instruction \(M^j\), and any subset of agents \(i_{1:n}=\left\{ i_1, i_2, \cdots , i_n \right\}\subseteq \mathcal{N}\), the following equation holds for all states \(s\), joint actions \(\boldsymbol{a}^{i_{1:n}}\), and \(M^j\sim w\):
		\begin{equation}\label{eq14apx}
			A_{\boldsymbol{\pi }_{\textup{mar}}}^{i_{1:n}}\left( \boldsymbol{a}^{i_{1:n}}|s,M^j \right) =\sum_{l=1}^n{A_{\boldsymbol{\pi }_{\textup{mar}}}^{i_l}\left( \boldsymbol{a}^{i_{1:l-1}},a^{i_l}|s,M^j \right)}. 
		\end{equation}
		\begin{proof}
			According to the definition of the advantage function as given in \eqref{defn_adv2}, we proceed with the right-hand side as follows:
			\begin{align}
				\sum_{l=1}^n{A_{\boldsymbol{\pi }_{\textup{mar}}}^{i_l}\left( \boldsymbol{a}^{i_{1:l-1}},a^{i_l}|s,M^j \right)} &=\sum_{l=1}^n{\left[ Q_{\boldsymbol{\pi }_{\textup{mar}}}^{i_{1:l}}(\boldsymbol{a}^{i_{1:l}}|s,M^j)-Q_{\boldsymbol{\pi }_{\textup{mar}}}^{i_{1:l-1}}(\boldsymbol{a}^{i_{1:l-1}}|s,M^j) \right]} \notag  \\&=Q_{\boldsymbol{\pi }_{\textup{mar}}}^{i_{1:n}}(\boldsymbol{a}^{i_{1:n}}|s,M^j)-Q_{\boldsymbol{\pi }_{\textup{mar}}}^{}(M^j|s)  \notag \\ &=A_{\boldsymbol{\pi }_{\textup{mar}}}^{i_{1:n}}(\boldsymbol{a}^{i_{1:n}}|s,M^j).
			\end{align}
		\end{proof}
	\end{lem}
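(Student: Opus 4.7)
The plan is to treat this as a purely algebraic telescoping identity, with essentially no probabilistic content beyond what is already encoded in the definitions of the conditional $Q$-functions. First I would expand each summand on the right-hand side using the per-agent advantage definition \eqref{defn_adv2}, rewriting
\[
A_{\boldsymbol{\pi }_{\textup{mar}}}^{i_l}(\boldsymbol{a}^{i_{1:l-1}},a^{i_l}|s,M^j) = Q_{\boldsymbol{\pi }_{\textup{mar}}}^{i_{1:l}}(\boldsymbol{a}^{i_{1:l}}|s,M^j) - Q_{\boldsymbol{\pi }_{\textup{mar}}}^{i_{1:l-1}}(\boldsymbol{a}^{i_{1:l-1}}|s,M^j).
\]
Summing these differences from $l=1$ to $l=n$ produces a classical telescope in which the intermediate $Q^{i_{1:l}}$ terms cancel in pairs, leaving only the two endpoint terms $Q_{\boldsymbol{\pi }_{\textup{mar}}}^{i_{1:n}}(\boldsymbol{a}^{i_{1:n}}|s,M^j)$ and $Q_{\boldsymbol{\pi }_{\textup{mar}}}^{i_{1:0}}(\cdot|s,M^j)$.

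The second step is to identify the boundary term at $l=0$. The paper fixes the convention $A_{\boldsymbol{\pi }_{\textup{mar}}}^{i_{1:l}}=0$ when $l=0$, immediately after \eqref{defn_adv1}, which by that same definition forces $Q_{\boldsymbol{\pi }_{\textup{mar}}}^{i_{1:0}}(\cdot|s,M^j)=Q_{\boldsymbol{\pi }_{\textup{mar}}}(M^j|s)$. For a reader who wants independent justification, I would invoke Lemma~\ref{prop1} at $l=0$: the empty-prefix complement is the full set $\mathcal{N}$, so the right-hand side reduces to $\mathbb{E}_{\boldsymbol{a}\sim\boldsymbol{\pi}(\cdot|s,M^j)}[Q_{\boldsymbol{\pi }_{\textup{mar}}}(s,\boldsymbol{a})]$, and unrolling the definition of $Q_{\boldsymbol{\pi }_{\textup{mar}}}(M^j|s)$ one step in the first action confirms this equals $Q_{\boldsymbol{\pi }_{\textup{mar}}}(M^j|s)$. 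Substituting this identification into the telescoped expression yields $Q_{\boldsymbol{\pi }_{\textup{mar}}}^{i_{1:n}}(\boldsymbol{a}^{i_{1:n}}|s,M^j)-Q_{\boldsymbol{\pi }_{\textup{mar}}}(M^j|s)$, which is exactly the left-hand side by the definition \eqref{defn_adv1} of $A_{\boldsymbol{\pi }_{\textup{mar}}}^{i_{1:n}}$.

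The main obstacle here is bookkeeping rather than mathematics: one has to keep track of the shrinking prefix index $i_{1:l-1}$ and correctly interpret the $l=1$ summand, for which $\boldsymbol{a}^{i_{1:0}}$ is the empty tuple and the corresponding $Q$-value must be read as $Q_{\boldsymbol{\pi }_{\textup{mar}}}(M^j|s)$. Once the boundary convention is handled, no Markov property, no exchange of sums and expectations, and no appeal to the dynamics $\mathcal{P}$ is needed, so the whole proof should fit in a two- or three-line display: one line to substitute \eqref{defn_adv2}, one line to collapse the telescope using the $l=0$ boundary, and one line to recognise the result via \eqref{defn_adv1}.
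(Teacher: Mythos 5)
Your proposal is correct and follows essentially the same route as the paper's proof: substitute the definition \eqref{defn_adv2} into each summand, telescope the differences of conditional $Q$-functions, identify the $l=0$ boundary term with $Q_{\boldsymbol{\pi }_{\textup{mar}}}(M^j|s)$, and recognise the result via \eqref{defn_adv1}. Your extra justification of the boundary convention via Lemma~\ref{prop1} is a welcome clarification but does not change the argument.
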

	
	\renewcommand{\theprop}{\ref{prop22}}
	\begin{prop}
		As defined in \eqref{maxJ}, the relationship between the expected return of the new policy \(\bar{\boldsymbol{\pi}}_{\textup{mar}}\) and the old policy \(\boldsymbol{\pi}_{\textup{mar}}\) is expressed as:
		\begin{equation}~\label{eq15apx}
			J(\bar{\boldsymbol{\pi}}_{\textup{mar}})=J(\boldsymbol{\pi }_{\textup{mar}})+\mathbb{E} _{\tau \sim \bar{\boldsymbol{\pi}}_{\textup{mar}}}\left[ \sum_{t=0}^{\infty}{\gamma ^t}A_{\boldsymbol{\pi }_{\textup{mar}}}(s_t,\boldsymbol{a}_t) \right]\textcolor{black}{,}
		\end{equation}
		\textcolor{black}{where \(\tau:=(s_0, M_0, \boldsymbol{a}_0, s_1, M_1, \boldsymbol{a}_1, \cdots)\).}
	\end{prop}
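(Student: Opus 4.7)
The plan is to adapt the classical performance-difference lemma of Kakade and Langford to the conductor-based setting. The key observation is that even though a trajectory $\tau$ drawn from $\bar{\boldsymbol{\pi}}_{\textup{mar}}$ contains both instructions $M_t\sim\bar{w}(\cdot|s_t)$ and joint actions $\boldsymbol{a}_t\sim\bar{\boldsymbol{\pi}}(\cdot|s_t,M_t)$, the functions $V_{\boldsymbol{\pi}_{\textup{mar}}}(s)$, $Q_{\boldsymbol{\pi}_{\textup{mar}}}(s,\boldsymbol{a})$, and $A_{\boldsymbol{\pi}_{\textup{mar}}}(s,\boldsymbol{a})$ defined in \eqref{Vtot} and the surrounding definitions already marginalize over the old conductor $w$, so they depend only on $(s,\boldsymbol{a})$. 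Hence the instructions $M_t$ on the sampled trajectory play the role of auxiliary latent variables when we take expectations.

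First, I would rewrite the summand on the right-hand side via $A_{\boldsymbol{\pi}_{\textup{mar}}}(s_t,\boldsymbol{a}_t)=Q_{\boldsymbol{\pi}_{\textup{mar}}}(s_t,\boldsymbol{a}_t)-V_{\boldsymbol{\pi}_{\textup{mar}}}(s_t)$, and then invoke the Bellman relation
\[
Q_{\boldsymbol{\pi}_{\textup{mar}}}(s_t,\boldsymbol{a}_t)=r(s_t,\boldsymbol{a}_t)+\gamma\,\mathbb{E}_{s_{t+1}\sim\mathcal{P}(\cdot\mid s_t,\boldsymbol{a}_t)}\!\left[V_{\boldsymbol{\pi}_{\textup{mar}}}(s_{t+1})\right].
\]
This relation involves only the reward function and the transition kernel, both of which are policy-independent, so it holds verbatim in our conductor-based framework. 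Substituting turns the summand into the telescoping expression $r(s_t,\boldsymbol{a}_t)+\gamma V_{\boldsymbol{\pi}_{\textup{mar}}}(s_{t+1})-V_{\boldsymbol{\pi}_{\textup{mar}}}(s_t)$.

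Second, I would take the expectation over $\tau\sim\bar{\boldsymbol{\pi}}_{\textup{mar}}$ and collapse the telescope. The discounted $V$-terms cancel in adjacent indices, leaving only the boundary term $-V_{\boldsymbol{\pi}_{\textup{mar}}}(s_0)$. The remaining reward series, in expectation, is exactly $J(\bar{\boldsymbol{\pi}}_{\textup{mar}})$ by \eqref{maxJ}; since $s_0$ is drawn from the initial state distribution independent of the policy, $\mathbb{E}_{s_0}[V_{\boldsymbol{\pi}_{\textup{mar}}}(s_0)]=J(\boldsymbol{\pi}_{\textup{mar}})$. Rearranging gives \eqref{eq15}.

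The main obstacle will be handling the expectation over the conductor cleanly. Concretely, when inserting the Bellman identity inside $\mathbb{E}_{\tau\sim\bar{\boldsymbol{\pi}}_{\textup{mar}}}$, the inner expectation of $V_{\boldsymbol{\pi}_{\textup{mar}}}(s_{t+1})$ appears to be taken under $\bar{w}$ and $\bar{\boldsymbol{\pi}}$, whereas $V_{\boldsymbol{\pi}_{\textup{mar}}}$ itself is defined under $w$ and $\boldsymbol{\pi}$. This mismatch is resolved by noting that the Bellman step only requires integrating $s_{t+1}\sim\mathcal{P}(\cdot\mid s_t,\boldsymbol{a}_t)$, which does not depend on either conductor or joint policy; the instructions $M_t$ thus drop out of the verification, and no swap between $w$ and $\bar{w}$ is needed. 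Once this subtlety is carefully written out via the tower property, the telescoping argument proceeds exactly as in the single-agent TRPO case and the proposition follows.
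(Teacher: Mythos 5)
Your proposal is correct and follows essentially the same route as the paper's own proof: rewrite the advantage as $Q-V$, apply the policy-independent Bellman identity $Q_{\boldsymbol{\pi}_{\textup{mar}}}(s_t,\boldsymbol{a}_t)=r_t+\gamma\,\mathbb{E}[V_{\boldsymbol{\pi}_{\textup{mar}}}(s_{t+1})]$ under the trajectory expectation, telescope, and identify the boundary terms with $J(\bar{\boldsymbol{\pi}}_{\textup{mar}})$ and $\mathbb{E}_{s_0}[V_{\boldsymbol{\pi}_{\textup{mar}}}(s_0)]=J(\boldsymbol{\pi}_{\textup{mar}})$. Your explicit handling of why the instructions $M_t$ drop out of the Bellman step is a point the paper leaves implicit, but it is the same argument.
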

	\begin{proof}
		We begin by expanding the expectation term involving the advantage function:
		\begin{align}\label{eq19}
			\mathbb{E} _{\tau \sim \bar{\boldsymbol{\pi}}_{\textup{mar}}}\left[ \sum_{t=0}^{\infty}{\gamma ^t}A_{\boldsymbol{\pi }_{\textup{mar}}}(s_t,\boldsymbol{a}_t) \right] &=\mathbb{E} _{\tau \sim \bar{\boldsymbol{\pi}}_{\textup{mar}}}\left[ \sum_{t=0}^{\infty}{\gamma ^t}\left[ Q_{\boldsymbol{\pi }_{\textup{mar}}}(s_t,\boldsymbol{a}_t)-V_{\boldsymbol{\pi }_{\textup{mar}}}(s_t) \right] \right] \notag \\&=\mathbb{E} _{\tau \sim \bar{\boldsymbol{\pi}}_{\textup{mar}}}\left[ \sum_{t=0}^{\infty}{\gamma ^t}\left[ r_t+\gamma V_{\boldsymbol{\pi }_{\textup{mar}}}(s_{t+1})-V_{\boldsymbol{\pi }_{\textup{mar}}}(s_t) \right] \right] \notag \\&=\mathbb{E} _{\tau \sim \bar{\boldsymbol{\pi}}_{\textup{mar}}}\left[ -V_{\boldsymbol{\pi }_{\textup{mar}}}(s_0)+\sum_{t=0}^{\infty}{\gamma ^t}r_t \right].
		\end{align}
		By the definition of~\eqref{maxJ} and~\eqref{Vtot}, we can obtain that:
		\begin{equation}
			J(\boldsymbol{\pi }_{\textup{mar}})=\mathbb{E} _{s_0,M_0,\boldsymbol{a}_0,s_1,M_1,\boldsymbol{a}_1,\cdots}\left( \sum_{t=0}^{\infty}{\gamma ^tr_t} \right),
		\end{equation}
		\begin{equation}
			V_{\boldsymbol{\pi }_{\textup{mar}}}(s_0)=\mathbb{E} _{M_0,\boldsymbol{a}_0,s_1,M_1,\boldsymbol{a}_1,\cdots}[\sum_{t=0}^{\infty}{\gamma ^t}r_t|s_0=s],
		\end{equation}
		\begin{equation}
			J(\boldsymbol{\pi }_{\textup{mar}})=\mathbb{E} _{s_0}\left[ V_{\boldsymbol{\pi }_{\textup{mar}}}(s_0) \right].
		\end{equation}
		Therefore,~\eqref{eq19} can be further simplified into:
		%\vspace{-4em}
		\begin{equation}
			\begin{split}
				\mathbb{E} _{\tau \sim \bar{\boldsymbol{\pi}}_{\textup{mar}}}\left[ \sum_{t=0}^{\infty}{\gamma ^t}A_{\boldsymbol{\pi }_{\textup{mar}}}(s_t,\boldsymbol{a}_t) \right] &=\mathbb{E} _{\tau \sim \bar{\boldsymbol{\pi}}_{\textup{mar}}}\left[ -V_{\boldsymbol{\pi }_{\textup{mar}}}(s_0)+\sum_{t=0}^{\infty}{\gamma ^t}r_t \right] \notag \\
				&=-J(\boldsymbol{\pi }_{\textup{mar}})+J(\bar{\boldsymbol{\pi}}_{\textup{mar}}).
			\end{split}
		\end{equation}
	\end{proof}

	\subsection{A.2 Proof of Theorem 1}\label{prothm}
\renewcommand{\thethm}{\ref{thm11}}
	\begin{thm}
		Under the proposed conductor-based framework, a significant policy improvement inequality holds for the joint policy \(\boldsymbol{\pi }_{\textup{mar}}\):
		%\begin{equation}
			\begin{align}\label{eq27apx}
				J(\bar{\boldsymbol{\pi}}_{\textup{mar}}) &\geqslant J(\boldsymbol{\pi }_{\textup{mar}}) + \mathbb{E} _{\mathrm{s}\thicksim \rho _{ \boldsymbol{\pi }_{\textup{mar}}}}\left.\Big[ 
				\mathbb{E} _{M\sim \bar{w}\left( \cdot |s \right)}A_{ \boldsymbol{\pi }_{\textup{mar}}}(M|s) \right. - C\textcolor{black}{ \mathrm{D}_{\mathrm{KL}}^{\max}\left( w ,\overline{w} \right) \notag }  \\&
				\left. + \mathbb{E} _{M\sim \bar{w}\left( \cdot |s \right) ,\boldsymbol{a}\thicksim \bar{\boldsymbol{\pi}}\left( \cdot |s,M \right)}A_{ \boldsymbol{\pi }_{\textup{mar}}}(\boldsymbol{a}|s,M) - \textcolor{black}{\max C\sum_{j=1}^K{w\left( M^j|s \right) \mathrm{D}_{\mathrm{KL}}\left( \boldsymbol{\pi }(\cdot |s,M^j),\overline{\boldsymbol{\pi }}(\cdot |s,M^j) \right)}} \right.\Big],
			\end{align}
		%\end{equation}
		where \(C = \frac{4\gamma \max_{s,a} |A_{\boldsymbol{\pi }_{\textup{mar}}}(s,\boldsymbol{a})|}{(1-\gamma)^2}\),~\(\mathrm{D}_{\mathrm{KL}}^{\max} \left( w_k, \bar{w} \right)=\max_s\mathrm{D}_{\mathrm{KL}}\left( w_k\left( \cdot |s \right), \bar{w}\left( \cdot |s \right) \right)\).
	\end{thm}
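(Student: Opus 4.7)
The plan is to adapt the classical TRPO-style improvement bound (Schulman et al.\ 2015) to the conductor-based hierarchical setting, using Proposition~\ref{prop22} as the starting identity and the advantage decomposition $A_{\boldsymbol{\pi}_{\textup{mar}}}(s,\boldsymbol{a}) = A_{\boldsymbol{\pi}_{\textup{mar}}}(M|s) + A_{\boldsymbol{\pi}_{\textup{mar}}}(\boldsymbol{a}|s,M)$ established in the Value Functions section. First I would rewrite the right-hand side of Proposition~\ref{prop22} as $J(\boldsymbol{\pi}_{\textup{mar}}) + \tfrac{1}{1-\gamma}\mathbb{E}_{s\sim \rho_{\bar{\boldsymbol{\pi}}_{\textup{mar}}},\, M\sim \bar{w},\, \boldsymbol{a}\sim \bar{\boldsymbol{\pi}}}[A_{\boldsymbol{\pi}_{\textup{mar}}}(s,\boldsymbol{a})]$, and then swap the intractable $\rho_{\bar{\boldsymbol{\pi}}_{\textup{mar}}}$ for $\rho_{\boldsymbol{\pi}_{\textup{mar}}}$, paying an error controlled by the total-variation distance between the two discounted state-visitation measures, exactly as in the single-agent TRPO derivation.

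Next I would split the surrogate gain, via the additive advantage decomposition, into $\mathbb{E}_{M\sim \bar{w}}[A_{\boldsymbol{\pi}_{\textup{mar}}}(M|s)]$ and $\mathbb{E}_{M\sim \bar{w},\,\boldsymbol{a}\sim\bar{\boldsymbol{\pi}}}[A_{\boldsymbol{\pi}_{\textup{mar}}}(\boldsymbol{a}|s,M)]$, which matches the two positive contributions in Equation~\eqref{eq27}. The penalty terms arise from bounding the total-variation gap between the trajectory distributions under $\boldsymbol{\pi}_{\textup{mar}}$ and $\bar{\boldsymbol{\pi}}_{\textup{mar}}$. The key new ingredient beyond scalar TRPO is the chain rule for the hierarchical joint distribution $p(M,\boldsymbol{a}|s)=w(M|s)\boldsymbol{\pi}(\boldsymbol{a}|s,M)$, namely
\begin{equation*}
\mathrm{D}_{\mathrm{KL}}\bigl(p\,\Vert\,\bar{p}\bigr) = \mathrm{D}_{\mathrm{KL}}\bigl(w(\cdot|s)\,\Vert\,\bar{w}(\cdot|s)\bigr) + \sum_{j=1}^{K} w(M^j|s)\,\mathrm{D}_{\mathrm{KL}}\bigl(\boldsymbol{\pi}(\cdot|s,M^j)\,\Vert\,\bar{\boldsymbol{\pi}}(\cdot|s,M^j)\bigr).
\end{equation*}
Combining this with Pinsker's inequality and propagating through the standard discounted coupling argument (bounding $\bigl|\mathbb{E}_{\rho_{\bar{\boldsymbol{\pi}}}}[\,\cdot\,] - \mathbb{E}_{\rho_{\boldsymbol{\pi}}}[\,\cdot\,]\bigr|$ by a geometric series involving the per-step KL) produces exactly the two penalty terms $C\,\mathrm{D}_{\mathrm{KL}}^{\max}(w,\bar{w})$ and $\max C\sum_j w(M^j|s)\mathrm{D}_{\mathrm{KL}}(\boldsymbol{\pi}(\cdot|s,M^j),\bar{\boldsymbol{\pi}}(\cdot|s,M^j))$ with the TRPO constant $C=\tfrac{4\gamma\max_{s,\boldsymbol{a}}|A_{\boldsymbol{\pi}_{\textup{mar}}}(s,\boldsymbol{a})|}{(1-\gamma)^2}$.

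The main obstacle will be tracking carefully which level of the hierarchy absorbs the worst-case $\max_s$ and which retains only a state-weighted expectation. The conductor's KL appears with $\max_s$ because a shift in $w$ propagates uniformly across every state reached in the trajectory coupling, whereas the instruction-conditional KL is naturally weighted by $w(M^j|s)$ since, inside the coupling, instructions are still drawn from the old conductor at the point of comparison. A second subtle point is verifying that the \emph{same} universal constant $C$ governs both layers and does not accumulate multiplicatively: this is achieved by running Schulman's coupling argument once on the product random variable $(M,\boldsymbol{a})$ and distributing the KL via the chain rule only at the final step, rather than applying two separate TRPO bounds and summing their constants. Finally, discharging the $\tfrac{1}{1-\gamma}$ factor against the TV-to-KL conversion constants yields the specific form of $C$ claimed in the theorem.
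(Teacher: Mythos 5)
Your plan is correct and rests on the same two pillars as the paper's proof---the additive decomposition \(A_{\boldsymbol{\pi}_{\textup{mar}}}(s,\boldsymbol{a})=A_{\boldsymbol{\pi}_{\textup{mar}}}(M|s)+A_{\boldsymbol{\pi}_{\textup{mar}}}(\boldsymbol{a}|s,M)\) and a conductor/agent splitting of the KL penalty---but it reaches the splitting by a genuinely different route. The paper does not re-run the coupling argument at all: it invokes the HATRPO surrogate bound (Theorem 1 of \citet{kuba2022trust}) as a black box on the \emph{action-marginal} policy \(\boldsymbol{\pi}_{\textup{mar}}(\cdot|s)=\sum_{j}w(M^j|s)\boldsymbol{\pi}(\cdot|s,M^j)\), giving \(J(\bar{\boldsymbol{\pi}}_{\textup{mar}})\geqslant L_{\boldsymbol{\pi}_{\textup{mar}}}(\bar{\boldsymbol{\pi}}_{\textup{mar}})-C\,\mathrm{D}_{\mathrm{KL}}^{\max}(\boldsymbol{\pi}_{\textup{mar}},\bar{\boldsymbol{\pi}}_{\textup{mar}})\), and then upper-bounds the KL between the two \emph{mixtures} via the log-sum inequality, which produces exactly your chain-rule expression \(\mathrm{D}_{\mathrm{KL}}(w,\bar{w})+\sum_j w(M^j|s)\mathrm{D}_{\mathrm{KL}}(\boldsymbol{\pi}(\cdot|s,M^j),\bar{\boldsymbol{\pi}}(\cdot|s,M^j))\) as in Equation~\eqref{eq26}. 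Your route---running Schulman's coupling once on the pair \((M,\boldsymbol{a})\) and applying the exact chain rule for the joint KL at the end---is sound and arguably more self-contained, but it needs one step made explicit: only the action marginal enters the reward and dynamics, so you must note that the joint divergence over \((M,\boldsymbol{a})\) dominates the marginal divergence over \(\boldsymbol{a}\) (data processing); that is precisely the inequality the paper proves by log-sum. What each approach buys: the paper's version is shorter because it reuses an existing theorem verbatim, while yours makes transparent why a single constant \(C\) governs both layers rather than two constants summing. One correction to your narrative: \emph{both} penalty terms in the theorem carry a \(\max_s\) (the agents' term is \(\max_s C\sum_j w(M^j|s)\mathrm{D}_{\mathrm{KL}}(\cdots)\)); the asymmetry you describe is really between the max over states and the weighting over \emph{instructions} by the old \(w\), and the latter comes directly from the chain rule, not from the timing of instruction draws inside the coupling.
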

	\begin{proof}
		
		First, from the alternative objective function~\eqref{alterfun}, we can obtain that:
		\begin{align}
			L_{\boldsymbol{\pi }_{\textup{mar}}}(\bar{\boldsymbol{\pi}}_{\textup{mar}})
			\notag &=J(\boldsymbol{\pi }_{\textup{mar}})+\mathbb{E} _{s\thicksim \rho _{ \boldsymbol{\pi }_{\textup{mar}}},M\sim\bar{w}\left( \cdot |s \right) ,\boldsymbol{a}\thicksim \bar{\boldsymbol{\pi}}\left( \cdot |s,M \right)} \left[ A_{\boldsymbol{\pi }_{\textup{mar}}}(s,\boldsymbol{a}) \right] \notag \\ &=J(\boldsymbol{\pi }_{\textup{mar}}) +\mathbb{E} _{s\thicksim \rho _{\boldsymbol{\pi }_{\textup{mar}}},M\sim\bar{w}\left( \cdot |s \right) ,\boldsymbol{a}\thicksim \bar{\boldsymbol{\pi}}\left( \cdot |s,M \right)}\left.[ A_{ \boldsymbol{\pi }_{\textup{mar}}}(M|s) +A_{ \boldsymbol{\pi }_{\textup{mar}}}(\boldsymbol{a}|s,M) \right] \notag  \\&=J(\boldsymbol{\pi }_{\textup{mar}})+\mathbb{E} _{s\thicksim \rho _{\boldsymbol{\pi }_{\textup{mar}}}}\left.[ \mathbb{E} _{M\sim\bar{w}\left( \cdot |s \right)}A_{\boldsymbol{\pi }_{\textup{mar}}}(M|s)  +\mathbb{E} _{M\sim\bar{w}\left( \cdot |s \right) ,\boldsymbol{a}\thicksim \bar{\boldsymbol{\pi}}\left( \cdot |s,M \right)}A_{\boldsymbol{\pi }_{\textup{mar}}}(\boldsymbol{a}|s,M) \right].
		\end{align}
		Therefore, combining Theorem 1 in~\citet{kuba2022trust}, we can derive the following inequality:
		\begin{align}\label{eq25}
			J(\bar{\boldsymbol{\pi}}_{\textup{mar}}) &\geqslant L_{\boldsymbol{\pi }_{\textup{mar}}}(\bar{\boldsymbol{\pi}}_{\textup{mar}})-C\mathrm{D}_{\mathrm{KL}}^{\max}(\boldsymbol{\pi }_{\textup{mar}},\bar{\boldsymbol{\pi}}_{\textup{mar}}) \notag \\&=J(\boldsymbol{\pi }_{\textup{mar}})+\mathbb{E} _{\mathrm{s}\thicksim \rho _{\boldsymbol{\pi }_{\textup{mar}}}}\Bigl[ \mathbb{E} _{M\sim\bar{w}\left( \cdot |s \right)}A_{ \boldsymbol{\pi }_{\textup{mar}}}(M|s)  +\mathbb{E} _{M\sim\bar{w}\left( \cdot |s \right) ,\boldsymbol{a}\thicksim \bar{\boldsymbol{\pi}}\left( \cdot |s,M \right)}A_{\boldsymbol{\pi }_{\textup{mar}}}(\boldsymbol{a}|s,M) \Bigr]   -C\mathrm{D}_{\mathrm{KL}}^{\max}(\boldsymbol{\pi }_{\textup{mar}},\bar{\boldsymbol{\pi}}_{\textup{mar}}),
		\end{align}
		where \(C = \frac{4\gamma \max_{s,a} |A_{\boldsymbol{\pi }_{\textup{mar}}}(s,\boldsymbol{a})|}{(1-\gamma)^2}\)~\citet{kuba2022trust}. Next, we simplify \(\mathrm{D}_{\mathrm{KL}}^{\max}(\boldsymbol{\pi }_{\textup{mar}},\bar{\boldsymbol{\pi}}_{\textup{mar}})\) by expressing it in terms of the instruction preference distribution \(w\left( \cdot |s \right) \) and the instruction-conditional joint policy \(\boldsymbol{\pi }(\boldsymbol{a}|s,M)\):
		\begin{equation}\label{eq26}
			\mathrm{D}_{\mathrm{KL}}\left( \boldsymbol{\pi }_{\textup{mar}}(\cdot |s),\bar{\boldsymbol{\pi}}_{\textup{mar}}(\cdot |s) \right)\textcolor{black}{\leqslant} \mathrm{D}_{\mathrm{KL}}\left( w\left(\cdot|s \right) ,\overline{w}\left(\cdot|s \right) \right)+\sum_{j=1}^K{w\left( M^j|s \right) \mathrm{D}_{\mathrm{KL}}\left( \boldsymbol{\pi }(\cdot |s,M^j),\overline{\boldsymbol{\pi }}(\cdot |s,M^j) \right), } 
		\end{equation}
		where the detail derivation of~\eqref{eq26} can be found in~section A.3. \textcolor{black}{By taking maximum over state \(s\),} we have:
		\begin{align}
			\textcolor{black}{\mathrm{D}_{\mathrm{KL}}^{\max}\left( \boldsymbol{\pi }_{\textup{mar}},\bar{\boldsymbol{\pi}}_{\textup{mar}} \right)} \textcolor{black}{\leqslant \mathrm{D}_{\mathrm{KL}}^{\max}\left( w ,\overline{w} \right) +\max \sum_{j=1}^K{w\left( M^j|s \right) \mathrm{D}_{\mathrm{KL}}\left( \boldsymbol{\pi }(\cdot |s,M^j),\overline{\boldsymbol{\pi }}(\cdot |s,M^j) \right)} }.
		\end{align}
		In summary,~\eqref{eq25} can be further decomposed into:
		\begin{equation}%\label{eq27}
			\begin{split}
				J(\bar{\boldsymbol{\pi}}_{\textup{mar}}) &\geqslant J(\boldsymbol{\pi }_{\textup{mar}}) + \mathbb{E} _{\mathrm{s}\thicksim \rho _{ \boldsymbol{\pi }_{\textup{mar}}}}\left.\Big[ 
				\mathbb{E} _{M\sim \bar{w}\left( \cdot |s \right)}A_{ \boldsymbol{\pi }_{\textup{mar}}}(M|s) \right. - C\textcolor{black}{ \mathrm{D}_{\mathrm{KL}}^{\max}\left( w ,\overline{w} \right) \notag } \\&
				\left. + \mathbb{E} _{M\sim \bar{w}\left( \cdot |s \right) ,\boldsymbol{a}\thicksim \bar{\boldsymbol{\pi}}\left( \cdot |s,M \right)}A_{ \boldsymbol{\pi }_{\textup{mar}}}(\boldsymbol{a}|s,M) -\textcolor{black}{\max C\sum_{j=1}^K{w\left( M^j|s \right) \mathrm{D}_{\mathrm{KL}}\left( \boldsymbol{\pi }(\cdot |s,M^j),\overline{\boldsymbol{\pi }}(\cdot |s,M^j) \right)}} \right.\Big].
			\end{split}
		\end{equation}
		
	\end{proof}

	\subsection{A.3 Derivation of the Equation~\eqref{eq26} for KL-divergence}~\label{deri26}
	\begin{align}
		\mathrm{D}_{\mathrm{KL}}\left( \boldsymbol{\pi }_{\textup{mar}}(\cdot |s),\bar{\boldsymbol{\pi}}_{\textup{mar}}(\cdot |s) \right)&=\mathbb{E} _{\boldsymbol{a}\sim \boldsymbol{\pi }_{\textup{mar}}}\left[ \log \boldsymbol{\pi }_{\textup{mar}}(\boldsymbol{a}|s)-\log \bar{\boldsymbol{\pi}}_{\textup{mar}}(\boldsymbol{a}|s) \right] \notag \\	&=\sum_{\boldsymbol{a}}^{}{\boldsymbol{\pi }_{\textup{mar}}(\boldsymbol{a}|s)\log \frac{\boldsymbol{\pi }_{\textup{mar}}(\boldsymbol{a}|s)}{\bar{\boldsymbol{\pi}}_{\textup{mar}}(\boldsymbol{a}|s)}} \notag \\	&=\sum_{\boldsymbol{a}}^{}{ \boldsymbol{\pi }_{\textup{mar}}(\boldsymbol{a}|s) \log \frac{\sum_{j=1}^K{w\left( M^j|s \right)}\boldsymbol{\pi }(\boldsymbol{a}|s,M^j)}{\sum_{j=1}^K{\overline{w}\left( M^j|s \right)}\overline{\boldsymbol{\pi }}(\boldsymbol{a}|s,M^j)}} \notag \\	&=\sum_{\boldsymbol{a}}^{}{\sum_{j=1}^K{w\left( M^j|s \right)}\boldsymbol{\pi }(\boldsymbol{a}|s,M^j)\log \frac{\sum_{j=1}^K{w\left( M^j|s \right)}\boldsymbol{\pi }(\boldsymbol{a}|s,M^j)}{\sum_{j=1}^K{\overline{w}\left( M^j|s \right)}\overline{\boldsymbol{\pi }}(\boldsymbol{a}|s,M^j)}} \notag \\	 &\leqslant \sum_{\boldsymbol{a}}^{}{\sum_{j=1}^K{\left[ w\left( M^j|s \right) \boldsymbol{\pi }(\boldsymbol{a}|s,M^j)\log \frac{w\left( M^j|s \right) \boldsymbol{\pi }(\boldsymbol{a}|s,M^j)}{\overline{w}\left( M^j|s \right) \overline{\boldsymbol{\pi }}(\boldsymbol{a}|s,M^j)} \right]}} \textcolor{black}{\textup{(by log sum inequality)}}\notag \\	&=\sum_{\boldsymbol{a}}^{}{\sum_{j=1}^K{w\left( M^j|s \right)}\boldsymbol{\pi }(\boldsymbol{a}|s,M^j)\left.[ \log \frac{w\left( M^j|s \right)}{\overline{w}\left( M^j|s \right)}+ \log \frac{\boldsymbol{\pi }(\boldsymbol{a}|s,M^j)}{\overline{\boldsymbol{\pi }}(\boldsymbol{a}|s,M^j)} \right.]} \notag \\	&=\sum_{j=1}^K{w\left( M^j|s \right)}\log \frac{w\left( M^j|s \right)}{\overline{w}\left( M^j|s \right)} \sum_{\boldsymbol{a}}^{}{\boldsymbol{\pi }(\boldsymbol{a}|s,M^j)+\sum_{\boldsymbol{a}}^{}{\sum_{j=1}^K{w\left( M^j|s \right)}\boldsymbol{\pi }(\boldsymbol{a}|s,M^j) \log \frac{\boldsymbol{\pi }(\boldsymbol{a}|s,M^j)}{\overline{\boldsymbol{\pi }}(\boldsymbol{a}|s,M^j)}}} \notag \\	&= \mathrm{D}_{\mathrm{KL}}\left( w\left(\cdot|s \right) ,\overline{w}\left(\cdot|s \right) \right)+\sum_{j=1}^K{w\left( M^j|s \right) \mathrm{D}_{\mathrm{KL}}\left( \boldsymbol{\pi }(\cdot |s,M^j),\overline{\boldsymbol{\pi }}(\cdot |s,M^j) \right) }.
	\end{align}

	\subsection{A.4 Proof of Guaranteed Monotonic Joint Policy Optimization for HCPO}\label{proHCPO}
	
	Before proof of monotonic improvement guarantee, we simplify certain in inequality~\eqref{eq27apx} to prepare for the subsequent steps. We begin by dealing with \(\mathbb{E} _{M\sim \bar{w}\left( \cdot |s \right) ,\boldsymbol{a}\thicksim \bar{\boldsymbol{\pi}}\left( \cdot |s,M \right)}A_{ \boldsymbol{\pi }_{\textup{mar}}}(\boldsymbol{a}|s,M)\), which is vital for assessing the influence of policy updates on the system's overall performance.
	\begin{align}~\label{eq28}
		\mathbb{E} _{M\sim \bar{w}\left( \cdot |s \right) ,\boldsymbol{a}\thicksim \bar{\boldsymbol{\pi}}\left( \cdot |s,M \right)}A_{\boldsymbol{\pi }_{\textup{mar}}}(\boldsymbol{a}|s,M)  &=\mathbb{E} _{M\sim \bar{w}\left( \cdot |s \right)}\left[ \mathbb{E} _{\boldsymbol{a}\thicksim \bar{\boldsymbol{\pi}}\left( \cdot |s,M \right)}A_{ \boldsymbol{\pi }_{\textup{mar}}}(\boldsymbol{a}|s,M) \right] \notag \\&=\sum_{j=1}^K{\bar{w}\left( M^j|s \right)}\left[ \mathbb{E} _{\boldsymbol{a}\thicksim \bar{\boldsymbol{\pi}}(\cdot |s,M^j)}A_{ \boldsymbol{\pi }_{\textup{mar}}}(\boldsymbol{a}|s,M^j) \right].
	\end{align}
	To facilitate subsequent representations, define:
	\begin{align}\label{eq29}
		L_{ \boldsymbol{\pi }_{\textup{mar}}}^{i_{1:l}}\left( \bar{\boldsymbol{\pi}}^{i_{1:l-1}},\hat{\pi}^{i_l}|s,M \right) \triangleq \mathbb{E} _{\boldsymbol{a}^{i_{1:l-1}}\thicksim \bar{\boldsymbol{\pi}}^{i_{1:l-1}}\left( \cdot |s,M \right) ,a^{i_l}\thicksim \hat{\pi}^{i_l}\left( \cdot |s,M \right)}  \left[ A_{ \boldsymbol{\pi }_{\textup{mar}}}^{i_l}\left( \boldsymbol{a}^{i_{1:l-1}},a^{i_l}|s,M \right) \right].
	\end{align}
	It is easy to obtain that for any \(\bar{\boldsymbol{\pi}}^{i_{1:l-1}}\), there is:
	\begin{align}\label{eq30}
		L_{ \boldsymbol{\pi }_{\textup{mar}}}^{i_{1:l}}\left( \bar{\boldsymbol{\pi}}^{i_{1:l-1}},\pi ^{i_l}|s,M \right) & =\mathbb{E} _{\boldsymbol{a}^{i_{1:l-1}}\thicksim \bar{\boldsymbol{\pi}}^{i_{1:l-1}}\left( \cdot |s,M \right) ,a^{i_l}\thicksim \pi ^{i_l}\left( \cdot |s,M \right)}\left[ A_{ \boldsymbol{\pi }_{\textup{mar}}}^{i_l}\left( \boldsymbol{a}^{i_{1:l-1}},a^{i_l}|s,M \right) \right] \notag \\&=\mathbb{E} _{\boldsymbol{a}^{i_{1:l-1}}\thicksim \bar{\boldsymbol{\pi}}^{i_{1:l-1}}\left( \cdot |s,M \right)}\left[ \mathbb{E} _{a^{i_l}\thicksim \pi ^{i_l}\left( \cdot |s,M \right)}\left[ A_{ \boldsymbol{\pi }_{\textup{mar}}}^{i_l}\left( \boldsymbol{a}^{i_{1:l-1}},a^{i_l}|s,M \right) \right] \right] \notag \\&=0.
	\end{align}
	Consider agents updated in order \(i_{1:N}\), \(\forall M^j, j\in \left\{ 1,2,...,K \right\} \), we have: 
	\begin{align}\label{eq31}
		\mathbb{E} _{\boldsymbol{a}\thicksim \bar{\boldsymbol{\pi}}(\cdot |s,M^j)}A_{\boldsymbol{\pi }_{\textup{mar}}}(\boldsymbol{a}|s,M^j) &=\mathbb{E} _{\boldsymbol{a}^{i_{1:N}}\thicksim \bar{\boldsymbol{\pi}}^{i_{1:N}}(\cdot |s,M^j)}A_{\boldsymbol{\pi }_{\textup{mar}}}^{i_{1:N}}(\boldsymbol{a}^{i_{1:N}}|s,M^j) \notag \\ \text{(by}~\eqref{eq14apx} \text{)}&=\mathbb{E} _{\boldsymbol{a}^{i_{1:N}}\thicksim \bar{\boldsymbol{\pi}}^{i_{1:N}}(\cdot |s,M^j)}\sum_{l=1}^N{A_{ \boldsymbol{\pi }_{\textup{mar}}}^{i_l}\left( \boldsymbol{a}^{i_{1:l-1}},a^{i_l}|s,M^j \right)} \notag \\ \text{(by}~\eqref{eq29} \text{)}&=\sum_{l=1}^N{L_{\boldsymbol{\pi }_{\textup{mar}}}^{i_{1:l}}\left( \bar{\boldsymbol{\pi}}^{i_{1:l-1}},\bar{\pi}^{i_l}|s,M^j \right)}.
	\end{align}
	In addition, for \(\mathrm{D}_{\mathrm{KL}}\) in the last item in~\eqref{eq26}, there is:
	\begin{align}\label{eq33new}
		 \mathrm{D}_{\mathrm{KL}}\left( \boldsymbol{\pi }(\cdot |s,M^j),\overline{\boldsymbol{\pi }}(\cdot |s,M^j) \right)&=\mathbb{E} _{\boldsymbol{a}\sim \boldsymbol{\pi }(\cdot |s,M^j)}\left[ \log \boldsymbol{\pi }(\boldsymbol{a}|s,M^j)-\log \bar{\boldsymbol{\pi}}(\boldsymbol{a}|s,M^j) \right] \notag \\&=\mathbb{E} _{\boldsymbol{a}\sim \boldsymbol{\pi }(\cdot |s,M^j)}\left.\Big[ \log \left.( \prod_{l=1}^N{\pi ^{i_l}}(a^{i_l}|s,M^j) \right.)  -\log \left.( \prod_{l=1}^N{\bar{\pi}^{i_l}}(a^{i_l}|s,M^j) \right.) \right. \Big] \notag \\&=\mathbb{E} _{\boldsymbol{a}\sim \boldsymbol{\pi }(\cdot |s,M^j)}\left[ \sum_{l=1}^N{\log}\pi ^{i_l}(a^{i_l}|s,M^j)-\sum_{l=1}^N{\log}\bar{\pi}^{i_l}(a^{i_l}|s,M^j) \right] \notag \\&=\sum_{l=1}^N{\mathbb{E} _{a^{i_l}\thicksim \pi ^{i_l}(\cdot |s,M^j),\boldsymbol{a}^{-i_l}\thicksim \boldsymbol{\pi }^{-i_l}(\cdot |s,M^j)}}\left.[ \log \pi ^{i_l}(a^{i_l}|s,M^j) -\log \bar{\pi}^{i_l}(a^{i_l}|s,M^j) \right.] \notag \\&=\sum_{l=1}^N{\mathrm{D}_{\mathrm{KL}}}\left( \pi ^{i_l}(\cdot |s,M^j),\bar{\pi}^{i_l}(\cdot |s,M^j) \right).
	\end{align}
	Then, we can prove the monotonic improvement guarantee for the performance of the joint policy \textcolor{black}{\(\boldsymbol{\pi}_{\textup{mar}}\)}, under the following two-level policy update mechanisms:
	
	(i)~The conductor's policy \(w\left(\cdot|s \right) =w_k\left(\cdot|s \right) \) is updated first according to the rule:
	\begin{equation}\label{eq34apx}
		\textcolor{black}{w_{k+1} = \underset{\bar{w}}{\mathrm{arg\,max}}~\Bigg[
			\mathbb{E}_{s\sim \rho_{\boldsymbol{\pi}_{\textup{mar},k}}, M \sim \bar{w}} A_{\boldsymbol{\pi}_{\textup{mar},k}}(M | s) - C \mathrm{D}_{\mathrm{KL}}^{\max} \left( w_k, \bar{w} \right) \Bigg].}
	\end{equation}
	(ii)~For each \( M^j\) where \(j\in \left\{ 1,2,...,K \right\} \), the agents update their policies sequentially according to the order \(i_{1:N}\), with the following update rule:
    \begin{align}\label{eq35apx}
    \pi _{k+1}^{i_l}(\cdot |s,M^j)=\underset{\bar{\pi}^{i_l}\left( \cdot |s,M^j \right)}{arg\max}\left[ w_{k+1}\left( M^j|s \right)  L_{ \boldsymbol{\pi }_{\textup{mar}, k}}^{i_{1:l}}\left( \boldsymbol{\pi }_{k+1}^{i_{1:l-1}},\bar{\pi}^{i_l}|s,M^j \right) \right. \notag \\     \left.-\textcolor{black}{\max_s C w_k\left( M^j|s \right) \mathrm{D}_{\mathrm{KL}}\left( \pi _{k}^{i_l}(\cdot |s,M^j),\bar{\pi}^{i_l}(\cdot |s,M^j) \right)  }\right].
\end{align}
	% \end{prop}

\begin{proof}

	Starting from~\eqref{eq27apx}, combining~\eqref{eq34apx} and~\eqref{eq35apx}, we can obtain: 
	%footnotesize{
		\begin{equation}
			\begin{split}\label{eq36}
				J(\boldsymbol{\pi }_{\textup{mar},k+1})
				&\geqslant J(\boldsymbol{\pi }_{\textup{mar},k})+\mathbb{E} _{\mathrm{s}\thicksim \rho _{\boldsymbol{\pi }_{\textup{mar},k}}}\left[ \mathbb{E} _{M\sim w_{k+1}\left( \cdot |s \right)}A_{\boldsymbol{\pi }_{\textup{mar},k}}(M|s) -C\textcolor{black}{ \mathrm{D}_{\mathrm{KL}}^{\max}\left( w_k ,w_{k+1} \right) } \right] \\ &   +\mathbb{E} _{\mathrm{s}\thicksim \rho _{\boldsymbol{\pi }_{\textup{mar},k}}}\Bigl[ \mathbb{E} _{M\sim w_{k+1}\left( \cdot |s \right) ,\boldsymbol{a}\thicksim \boldsymbol{\pi }_{k+1}^{}(\cdot |s,M)}A_{\boldsymbol{\pi }_{\textup{mar},k}}(\boldsymbol{a}|s,M)  \\ & \hspace{15em} - \textcolor{black}{\max_s C\sum_{j=1}^K{w_k\left( M^j|s \right) \mathrm{D}_{\mathrm{KL}}\left( \boldsymbol{\pi }_{k}^{}(\cdot |s,M^j),\boldsymbol{\pi }_{k+1}^{}(\cdot |s,M^j) \right)} } \Bigr] \\
				&=J(\boldsymbol{\pi }_{\textup{mar},k})+\mathbb{E} _{\mathrm{s}\thicksim \rho _{\boldsymbol{\pi }_{\textup{mar},k}}}\left[ \mathbb{E} _{M\sim w_{k+1}\left( \cdot |s \right)}A_{\boldsymbol{\pi }_{\textup{mar},k}}(M|s)-C\textcolor{black}{ \mathrm{D}_{\mathrm{KL}}^{\max}\left( w_k ,w_{k+1} \right) }  \right]  \\ & +\mathbb{E} _{\mathrm{s}\thicksim \rho _{\boldsymbol{\pi }_{\textup{mar},k}}}\Bigl[ \sum_{j=1}^K{w_{k+1}\left( M^j|s \right)}\bigl[ \mathbb{E} _{\boldsymbol{a}\thicksim \boldsymbol{\pi }_{k+1}^{}(\cdot |s,M^j)}A_{\boldsymbol{\pi }_{\textup{mar},k}}(\boldsymbol{a}|s,M^j) \bigr]  \Bigr.\\ & \hspace{9em} \Bigl.-\textcolor{black}{\max_s C\sum_{j=1}^K{w_k\left( M^j|s \right) \mathrm{D}_{\mathrm{KL}}\left( \boldsymbol{\pi }_{k}^{}(\cdot |s,M^j),\boldsymbol{\pi }_{k+1}^{}(\cdot |s,M^j) \right)} } \Bigr]~\left(\text{combine}~\eqref{eq28} \right) 
				\\ & \textcolor{black}{=} J(\boldsymbol{\pi }_{\textup{mar},k})+\mathbb{E} _{\mathrm{s}\thicksim \rho _{\boldsymbol{\pi }_{\textup{mar},k}}}\left[ \mathbb{E} _{M\sim w_{k+1}\left( \cdot |s \right)}A_{\boldsymbol{\pi }_{\textup{mar},k}}(M|s)-C\textcolor{black}{ \mathrm{D}_{\mathrm{KL}}^{\max}\left( w_k ,w_{k+1} \right) } \right] \\&+\mathbb{E} _{\mathrm{s}\thicksim \rho _{\boldsymbol{\pi }_{\textup{mar},k}}}\Bigl[ \sum_{j=1}^K{w_{k+1}\left( M^j|s \right)} \sum_{l=1}^N{L_{\boldsymbol{\pi }_{\textup{mar}, k}}^{i_{1:l}}\left( \boldsymbol{\pi }_{k+1}^{i_{1:l-1}},\pi _{k+1}^{i_l}|s,M^j \right)} \Bigr. \\&  \hspace{6em} \Bigl.-\textcolor{black}{\max_s C\sum_{j=1}^K{w_k\left( M^j|s \right) \sum_{l=1}^N{\mathrm{D}_{\mathrm{KL}}}\left( \pi _{k}^{i_l}(\cdot |s,M^j),\pi _{k+1}^{i_l}(\cdot |s,M^j) \right) }} \Bigr]~\left( \text{combine}~\eqref{eq31}\eqref{eq33new}\right) 
				\\& \geqslant J(\boldsymbol{\pi }_{\textup{mar},k})+\mathbb{E} _{\mathrm{s}\thicksim \rho _{\boldsymbol{\pi }_{\textup{mar},k}}}\left[ \mathbb{E} _{M\sim w_{k+1}\left( \cdot |s \right)}A_{\boldsymbol{\pi }_{\textup{mar},k}}(M|s)-C\textcolor{black}{ \mathrm{D}_{\mathrm{KL}}^{\max}\left( w_k ,w_{k+1} \right) } \right] \\&+\mathbb{E} _{\mathrm{s}\thicksim \rho _{\boldsymbol{\pi }_{\textup{mar},k}}} \Bigl[  \sum_{l=1}^N \sum_{j=1}^K \bigl[ w_{k+1}\left( M^j|s \right)  L_{\boldsymbol{\pi }_{\textup{mar}, k}}^{i_{1:l}}\left( \boldsymbol{\pi }_{k+1}^{i_{1:l-1}},\pi _{k+1}^{i_l}|s,M^j \right)  \bigr. \bigr. \\
				& \hspace{17em} \bigr. \bigr. -\textcolor{black}{\max_s C w_k\left( M^j|s \right) \mathrm{D}_{\mathrm{KL}}\left( \pi _{k}^{i_l}(\cdot |s,M^j),\pi _{k+1}^{i_l}(\cdot |s,M^j) \right)} \bigr] \Bigr]
				\\ &\geqslant J(\boldsymbol{\pi }_{\textup{mar},k})+\mathbb{E} _{\mathrm{s}\thicksim \rho _{\boldsymbol{\pi }_{\textup{mar},k}}}\left[ \mathbb{E} _{M\sim w_k\left( \cdot |s \right)}A_{\boldsymbol{\pi }_{\textup{mar},k}}(M|s)-C\mathrm{D}_{\mathrm{KL}}^{\max}\textcolor{black}{\left( w_k, w_k  \right)} \right] \\&+\mathbb{E} _{\mathrm{s}\thicksim \rho _{\boldsymbol{\pi }_{\textup{mar},k}}}
				\biggl[ \sum_{l=1}^N \sum_{j=1}^K \Bigl[ w_{k+1}\left( M^j|s \right) L_{\boldsymbol{\pi }_{\textup{mar}, k}}^{i_{1:l}}\left( \boldsymbol{\pi }_{k+1}^{i_{1:l-1}},\pi_{k}^{i_l} |s,M^j \right) \Bigr. \\ & \hspace{10em} \Bigl. -\textcolor{black}{\max_s C w_k\left( M^j|s \right) \mathrm{D}_{\mathrm{KL}}\left( \pi _{k}^{i_l}(\cdot |s,M^j), \pi _{k}^{i_l} (\cdot |s,M^j) \right) } \Bigr] \biggr]~\text{(combine}~\eqref{eq34apx}\eqref{eq35apx}\text{)} \\&=J(\boldsymbol{\pi }_{\textup{mar},k})+\mathbb{E} _{\mathrm{s}\thicksim \rho _{\boldsymbol{\pi }_{\textup{mar},k}}}\left[ \mathbb{E} _{M\sim w_k \left( \cdot |s \right)}A_{\boldsymbol{\pi }_{\textup{mar},k}}(M|s)-0 \right] \\& \hspace{3em} +\mathbb{E} _{\mathrm{s}\thicksim \rho _{\boldsymbol{\pi }_{\textup{mar},k}}}\biggl[ \sum_{l=1}^N{\sum_{j=1}^K{\Bigl[ w_{k+1}\left( M^j|s \right) L_{\boldsymbol{\pi }_{\textup{mar}, k}}^{i_{1:l}}\left( \boldsymbol{\pi }_{k+1}^{i_{1:l-1}}, \pi _{k}^{i_l} |s,M^j \right) -0 \Bigr]}}\biggr]. 
			\end{split}
		\end{equation}%footnote的}
	\normalsize{By the meaning of the advantage function and the equation~\eqref{eq30}, there is:}
	\begin{equation*}
		\mathbb{E} _{M\sim w_k\left( \cdot |s \right)}A_{\boldsymbol{\pi }_{\textup{mar},k}}(M|s)=0,~L_{\boldsymbol{\pi }_{\textup{mar}, k}}^{i_{1:l}}\left( \boldsymbol{\pi }_{k+1}^{i_{1:l-1}},\pi _{k}^{i_l}|s,M^j \right) =0.
	\end{equation*}
	Therefore,~\eqref{eq36} can be reduced to:
	\begin{equation}
		J\left(\boldsymbol{\pi }_{\textup{mar},k+1}\right)\geqslant J(\boldsymbol{\pi }_{\textup{mar},k})+0=J(\boldsymbol{\pi }_{\textup{mar},k}).
	\end{equation}

\end{proof}

\newpage
\section{B. Algorithms}

%\begin{breakablealgorithm}
	\begin{algorithm}
		\caption{Pseudocode of HCPO}\label{algorithm1}
		\textbf{Input}: Minibatch size \(B_1\), number of agents \(N\), training times \(\Lambda\), steps per episode \(T\)
		\begin{algorithmic}[1]
			\STATE {\textbf{Initialization:} Global V-value network \(\left\{ \phi _{0}^{} \right\} \), centralized conductor's actor network \(\{\Psi _0\}\), local conductors' actor networks \(\{\psi_{0}^{i}, \forall i\in \mathcal{N} \}\), agents' actor networks \(\{\theta _{0}^{i}, \forall i\in \mathcal{N} \}\), replay buffer~\(\mathcal{B}_1 \)}
			\FOR {\(k=0,1,\cdots,\Lambda-1\)} %\label{Tfan}
			\STATE {Collect a set of trajectories by running \(\boldsymbol{\pi }_{\textup{mar}, k}(\boldsymbol{a}_t|s_t)=\mathbb{E} _{M_t\sim w^{\Psi _k}(\cdot | s_t)}\boldsymbol{\pi }_{\boldsymbol{\theta }_k}(\boldsymbol{a}_t|s_t,M_t)\) and save transitions \(\{(s_{t}^{},M_{t}^{},\boldsymbol{a}_{t}^{},s_{t+1}^{},r_t),\forall t\in T\}\) into \(\mathcal{B}_1\)}
			\STATE {Sample a random minibatch of \(B_1\) episodes from replay buffer \(\mathcal{B}_1\)}
			\STATE {Compute \(A_{\boldsymbol{\pi }_{\textup{mar}, k}}(M_{b,t}|s_{b,t})\) and \(A_{\boldsymbol{\pi }_{\textup{mar}, k}}(\boldsymbol{a}_{b,t}|s_{b,t},M_{b,t})\) based on global V-value network with generalized advantage estimation (GAE)}
			\STATE {Update \(\Psi _{k+1}=\Psi _k+\alpha ^{j_1}\hat{\beta}_{conductor,k}\hat{x}_{conductor, k}^{}\) as designed in Algorithm~\ref{algorithm2}}
            \STATE {Draw a random permutation of agents \(i_{1:N}\)}
			\STATE {Set \(\varTheta ^{i_1}(\boldsymbol{a}_{b,t}|s_{b,t},M_{b,t})=A_{\boldsymbol{\pi }_{\textup{mar}, k}}(\boldsymbol{a}_{b,t}|s_{b,t},M_{b,t})\)}
			\FOR {agent \(i_l=i_1,\cdots,i_N\)} 
            \STATE{Update \(\psi^{i_l}_{k+1}\) by minimizing the cross-entropy between \(\psi^{i_l}_{k}\) and \(\Psi_{k+1}\): \(\mathcal{L}_{ce}^{i_l}=-\mathbb{E}_{w^{\Psi_{k+1}}}[\log \psi^{i_l}_{k}]\)}
			\STATE {Update agent \(i_l\)'s policy by \(\theta_{k+1}^{i_l}=\theta_{k}^{i_l}+\alpha^{j_2}\hat{\beta}_{agent, k}^{i_{l}}\hat{x}_{agent, k}^{i_{l}}\)  as designed in Algorithm~\ref{algorithm3}}
			\STATE {Compute 
				\begin{equation*}
					\varTheta ^{i_{1:l+1}}(\boldsymbol{a}_{ b,t}|s_{ b,t},M_{ b,t})=\frac{\pi _{\theta _{k+1}^{i_l}}^{i_l}\left( a_{b,t}^{i_l}\mid o_{b,t}^{i_l},M_{b,t} \right)}{\pi _{\theta _{k}^{i_l}}^{i_l}\left( a_{b,t}^{i_l}\mid o_{b,t}^{i_l},M_{b,t} \right)}\varTheta ^{i_{1:l}}(\boldsymbol{a}_{b,t}|s_{b,t},M_{b,t}) 
				\end{equation*}
				unless \(l=N\)
			}
			\ENDFOR
			\STATE {Update V-value network by following formula:
				\begin{equation*}
					\phi _{k+1}=\mathrm{arg}\min_{\phi} \frac{1}{B_1T}\sum_{b=1}^{B_1}{\sum_{t=0}^T{\left( V_{\phi}(s_t)-\hat{R}_t \right) ^2}}
			\end{equation*}}
			\STATE {Clear the replay buffer \(\mathcal{B}_1 \) and minibatch \(B_1\)}
			\ENDFOR
		\end{algorithmic}
	\end{algorithm}

%\begin{breakablealgorithm}
	\begin{algorithm}
		\caption{Conjugate Gradient Approach for Updating Centralized Conductor's Policy Parameters}\label{algorithm2}
		\textbf{Input}: {Minibatch size \(B_1\), episode number \(k\), steps per episode \(T\)}\\
		\textbf{Output}: {New centralized conductor's policy parameter \(\Psi_{k+1}\)} 
		\begin{algorithmic}[1]
			\STATE {Estimate the gradient of the conductor’s maximization objective 
				\begin{equation*}
					\hat{g}_{conductor, k}^{}=\frac{1}{B_1}\sum_{b=1}^{B_1}{\sum_{t=1}^T{\nabla _{\Psi _k}}}\log w^{\Psi _k}(M_{b,t} | s_{b,t})A_{\boldsymbol{\pi} _{\textup{mar},k}}(M_{b,t} | s_{b,t})
				\end{equation*}
				Use the conjugate gradient algorithm to compute the update direction
				\begin{equation*}
					\hat{x}_{conductor, k}^{}\approx {\hat{\boldsymbol{H}}_{conductor, k}}^{-1}\hat{g}_{conductor, k}^{}
				\end{equation*}
				where \(\hat{\boldsymbol{H}}_{conductor, k}^{}\) is the Hessian of the average KL-divergence
				\begin{equation*}
					\frac{1}{B_1T}\sum_{b=1}^{B_1}{\sum_{t=1}^T{\mathrm{D}_{\mathrm{KL}}}}\left(  w^{\Psi _k}(\cdot | s_{b,t}), w^{\Psi} (\cdot | s_{b,t}) \right) 
			\end{equation*}}
			\STATE {Estimate the maximal step size allowing for meeting the KL-constraint
				\begin{equation*}
					\hat{\beta}_{conductor,k}\approx \sqrt{\frac{2\delta _1}{(\hat{x}_{conductor,k})^{\top}\hat{\boldsymbol{H}}_{conductor, k}^{}(\hat{x}_{conductor,k})}} 
			\end{equation*}}
			\STATE {Update centralized conductor’s policy by \(\Psi _{k+1}=\Psi _k+\alpha ^{j_1}\hat{\beta}_{conductor,k}\hat{x}_{conductor, k}^{}\) }
		\end{algorithmic}
	\end{algorithm}
%\end{breakablealgorithm}

%\begin{breakablealgorithm}
	\begin{algorithm}
		\caption{Conjugate Gradient Approach for Updating Local Agent's Policy Parameters}\label{algorithm3}
		\textbf{Input}: {Minibatch size \(B_1\), episode number \(k\), steps per episode \(T\), agent index \(i_l\)}\\
		\textbf{Output}:{New Agent's policy parameter \(\theta_{k+1}^{i_l}\)} 
		\begin{algorithmic}[1]

			\STATE {Estimate the gradient of the agent’s maximization objective
				\begin{equation*}
					\hat{g}_{agent, k}^{i_{l}}=\frac{1}{B_1}\sum_{b=1}^{B_1}{\sum_{t=1}^Tw^{\Psi_{k+1}}(M_{b,t}|s_{b,t}){\nabla _{\theta _{k}^{i_l}}}} \log \pi _{\theta _{k}^{i_l}}^{i_l}\left( a_{b,t}^{i_l}\mid o_{b,t}^{i_l},M_{ b,t}^{} \right) \varTheta ^{i_{1:l}}(\boldsymbol{a}_{ b,t}^{}|s_{ b,t}^{},M_{ b,t}^{})
			\end{equation*}}
			Use the conjugate gradient algorithm to compute the update direction
			\begin{equation*}
				\hat{x}_{agent, k}^{i_l}\approx {\hat{\boldsymbol{H}}_{agent, k}^{i_l}}{}^{-1} \hat{g}_{agent, k}^{i_l} 
			\end{equation*}
			where \(\hat{\boldsymbol{H}}_{agent, k}^{i_l}\) is the Hessian of the average KL-divergence
			\begin{equation*}
				\frac{1}{B_1T}\sum_{b=1}^{B_1}{\sum_{t=1}^T{ w^{\psi _k}\left( M_{ b,t}^{}|s_{ b,t}^{} \right) \mathrm{D}_{\mathrm{KL}}}}\left( \pi _{\theta _{k}^{i_l}}^{i_l}(\cdot |o_{ b,t}^{i_l},M_{ b,t}^{}),\pi _{\theta ^{i_l}}^{i_l}(\cdot |o_{ b,t}^{i_l},M_{ b,t}^{}) \right)
			\end{equation*}
			\STATE {Estimate the maximal step size allowing for meeting the KL-constraint
				\begin{equation*}
					\hat{\beta}_{agent, k}^{i_l}\approx \sqrt{\frac{2\delta _2}{(\hat{x}_{agent, k}^{i_{l}})^{\top}\boldsymbol{\hat{H}}_{agent, k}^{i_l}(\hat{x}_{agent, k}^{i_{l}})}}
			\end{equation*}}
			\STATE {Update agent \(i_l\)'s policy by \(\theta_{k+1}^{i_l}=\theta_{k}^{i_l}+\alpha^{j_2}\hat{\beta}_{agent, k}^{i_{l}}\hat{x}_{agent, k}^{i_{l}}\)}
		\end{algorithmic}
	\end{algorithm}
%\end{breakablealgorithm}

\newpage
\section{C. Experimental Details}\label{apxexpdetail}
\subsection{C.1 StarCraftII Multi-agent Challenge}\label{apxsmac}
In the SMAC environment, we basically adhere to the official implements and hyperparameter settings of MAVEN\footnote{https://github.com/starry-sky6688/MARL-Algorithms/}~\citet{mahajan2019maven}, HAPPO and HATRPO\footnote{https://github.com/PKU-MARL/TRPO-PPO-in-MARL}~\citet{kuba2022trust}, A2PO\footnote{https://github.com/xihuai18/A2PO-ICLR2023}~\citet{wang2023order}, HAA2C\footnote{https://github.com/PKU-MARL/HARL}~\citet{zhong2024heterogeneous}, as shown in Table~\ref{smacpara}. We adopt the evaluation method from MAPPO~\citet{yu2022surprising} and compare HCPO against other algorithms on five maps. After each training iteration, 32 evaluation games are played and the winning rate of these 32 games is calculated. Finally, we take the median winning rate of the last ten evaluations as the performance metric for each random seed and report the average median winning rate based on the five random seeds in Table~\ref{winratetableSMAC}. We can observe that HCPO achieves the highest average median winning rate across five maps, reaching an impressive 97.82\%. This result highlights its superior performance compared to other algorithms. Moreover, HCPO also exhibits the lowest standard deviation, demonstrating its high stability. 
\begin{table}[htbp]
	\centering
	\begin{tabular}{llllll}
		\toprule
		hyperparameters & value & hyperparameters & value & hyperparameters & value \\
		\cmidrule(r){1-2}%\midrule
		\cmidrule(r){3-4}
		\cmidrule(r){5-6}
		critic lr & 5e-4 & optimizer & Adam & stacked-frames & 1 \\
		gamma & 0.95 &  gamma in corridor & 0.99 & optim eps & 1e-5 \\  batch size & 3200 & gain & 0.01 & hidden layer & 1 \\
		training threads & 32 & actor network & mlp & num mini-batch & 1 \\
		rollout threads & 16 & hypernet embed & 64 & max grad norm & 10 \\
		episode length & 200 & activation & ReLU & hidden layer dim & 64 \\
		use huber loss & True & conductor kl-threshold  &  0.01 & kl-threshold & 0.06\\
		accept-ratio & 0.5 & K  &  10 &  &  \\
		\bottomrule
	\end{tabular}
    \caption{Common hyperparameters in SMAC}
    \label{smacpara}
\end{table}

\begin{table}[htbp]
	\centering
	\begin{tabularx}{\textwidth}{lXXXXXXX}%\begin{tabular}{l|l|llllll}
		\toprule
		Task     & Difficulty     & HCPO  &   HAPPO   &  HATRPO  &  HAA2C  &  A2PO & MAVEN\\
		\midrule
		3s5z     & hard   & \textbf{100}\scriptsize(0.0) &   89.1\scriptsize(2.3)  &  \textbf{100}\scriptsize(1.4) &  67.2\scriptsize(11.8)  &  98.4\scriptsize(1.3)& 0.0\scriptsize(0.0)\\
		5m\_vs\_6m     & hard   & \textbf{93.8}\scriptsize(2.9) &   90.6\scriptsize(5.4)  &  70.3\scriptsize(8.7) &  0.0\scriptsize(0.0)  &  92.2\scriptsize(5.9) &43.8\scriptsize(1.4)\\
		8m\_vs\_9m    & hard   & \textbf{100}\scriptsize(0.7) &   81.2\scriptsize(6.9)  &  90.6\scriptsize(2.4) &  18.8\scriptsize(2.7)  &  93.8\scriptsize(4.5)& 18.8\scriptsize(2.9)\\
		10m\_vs\_11m  & hard   & \textbf{98.4}\scriptsize(1.6) &   85.9\scriptsize(8.4)  &  93.8\scriptsize(2.9) &  20.3\scriptsize(11.1)  &  96.9\scriptsize(2.2)&18.8\scriptsize(2.6)\\
		corridor  & super & \textbf{96.9}\scriptsize(1.6) &   \textbf{96.9}\scriptsize(0.7)  &  90.6\scriptsize(2.3) &  85.9\scriptsize(5.7)  &  93.8\scriptsize(2.4)&40.6\scriptsize(4.1)\\
		\midrule
		Overall  & /   & \textbf{97.82\scriptsize(1.36)} &   88.74\scriptsize(4.74)  &  89.06\scriptsize(3.54) &  38.44\scriptsize(6.26)  &  95.02\scriptsize(3.26) &24.4\scriptsize(2.2)\\
		\bottomrule
	\end{tabularx}%\end{tabular}
    \caption{Average evaluation median winning rate and standard deviation (across five seeds) within SMAC scenarios for distinct methods}
    \label{winratetableSMAC}
\end{table}
\textcolor{black}{Taking the \textit{3s5z} map as an example, after the completion of the training process (10 million steps) for all agents' local conductors and local actors, we visualize the gameplay scenarios involving our allies and the enemies in Figure~\ref{SMACvisfigs}. Figure~\ref{subfig9a} depicts the initial phase of the game, with both teams launching their attacks. The Figure~\ref{subfig9b} captures a moment where one of the stalkers is intentionally drawing the enemy fire, acting as a decoy to distract and absorb the enemy's attention and attacks. This instruction allows the other two stalkers to approach from behind the zealots, positioning them for a surprise attack on the enemy's flank. The green circles highlight the units under our control, with the health and energy bars visible above each unit, indicating their current status. The blue line pointing towards the enemy unit signifies the attack direction of our stalker, demonstrating the coordination between our units. Figure~\ref{subfig9c} illustrates a strategic (instruction-driven) moment where our forces are effectively split into two groups. The green arrows represent the movement direction of our stalkers. These two stalkers have identified allies in the yellow circle on the right with low health and automatically move to provide support. This self-directed action, triggered by the agent's local assessment of the situation (observation and instruction), exemplifies the autonomous decision-making capabilities of our agents under the guidance of HCPO's latent instructions. These instructions underscore the significance of actions and strategic positioning in achieving tactical advantage in the game.} The experiments are performed on a server with 3 RTX 4090 GPUs, and 3 Xeon(R) Gold 6430 16-core CPUs, and 128GB Ram.

\begin{figure*}[htbp]
	\centering
	\begin{subfigure}{0.31\textwidth}
		\includegraphics[width=\linewidth]{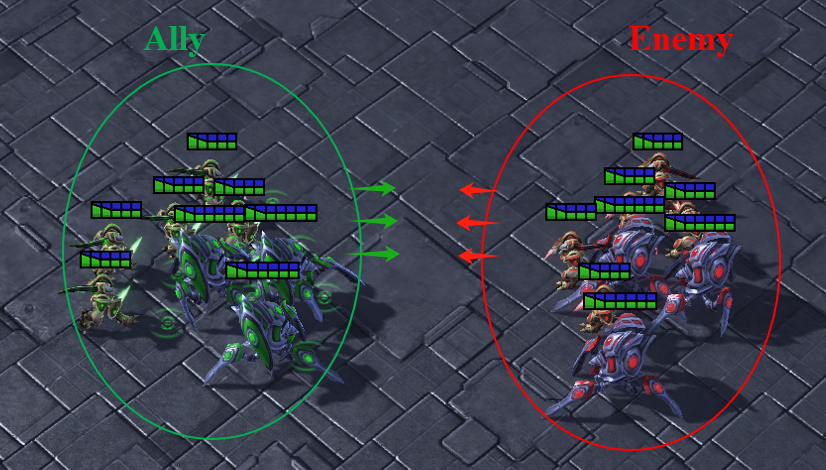}
		\caption{}
		\label{subfig9a}
	\end{subfigure}
	\begin{subfigure}{0.31\textwidth}
		\includegraphics[width=\linewidth]{visualfig2.png}
		\caption{}
		\label{subfig9b}
	\end{subfigure}
	\begin{subfigure}{0.31\textwidth}
		\includegraphics[width=\linewidth]{visualfig3.png}
		\caption{}
		\label{subfig9c}
	\end{subfigure}
	\caption{Effective coordination in SMAC on the \textit{3s5z} map: A visual analysis of agent strategies.}\label{SMACvisfigs}
\end{figure*}

\subsection{C.2 Multi-agent MuJoCo}
The detailed experimental hyperparameters are listed in Table~\ref{mujocopara}. We present a comparison of the return performance of different baselines across four tasks in Figure~\ref{boxfigsapx}. The median return values of HCPO algorithm reach the highest levels, demonstrating its superior performance in these tasks. Additionally, the return values exhibit a narrow interquartile range, indicating high stability across multiple experimental runs. Less outliers further imply the robustness of HCPO during training, as it generates returns without being compromised by occasional suboptimal runs. In summary, these findings suggest that HCPO maintains efficiency and reliability across a variety of tasks.

\textcolor{black}{In Figure~\ref{HCPOentropy1apx}, we employ the t-SNE (t-Distributed Stochastic Neighbor Embedding) technique to visualize the states explored by the HCPO, HATRPO, and A2PO during the early stages of training in the \textit{Walker2d-v2-6\(\times\)1}. Each dot in the plots represents a state explored by the algorithms. We use orange shading to indicate state probability density within each grid cell, defined as the fraction of samples falling in that cell relative to the total number of samples, with darker colors representing higher visitation frequencies. By examining the color distribution across the plots, we can visually assess the areas and densities explored by each algorithm. The figure also provides the average nearest neighbor distance (Avg. Dist.) and entropy values for each algorithm. A larger average nearest neighbor distance (e.g., HCPO’s 0.72) implies greater separation among visited states, reflecting better coverage diversity, while entropy measures exploration breadth. In summary, these metrics show that HCPO exhibits well-balanced exploration characteristics.}
\begin{table}[htbp]
	\centering
	\begin{tabular}{llllll}
		\toprule
		hyperparameters & value & hyperparameters & value & hyperparameters & value \\
		\cmidrule(r){1-2}%\midrule
		\cmidrule(r){3-4}
		\cmidrule(r){5-6}
		critic lr & 5e-3 & optimizer & Adam & num mini-batch & 1 \\
		gamma & 0.99 &  optim eps & 1e-5 & batch size & 4000 \\  gain & 0.01 & hidden layer & 1 & training threads & 8 \\
		std y coef & 0.5 & actor network & mlp & rollout threads & 20 \\
		std x coef & 1 & max grad norm & 10 & episode length & 200 \\
		activation & ReLU & hidden layer dim & 128 & eval episode & 40 \\
		conductor kl-threshold  &  0.01 & kl-threshold & 0.005&accept-ratio & 0.5 \\
		K  &  10 &    &    &  &  \\
		\bottomrule
	\end{tabular}
    \caption{Common hyperparameters in MA-MuJoCo}
    \label{mujocopara}
\end{table}
\begin{figure}[htbp]
	\centering
	\begin{subfigure}{0.23\textwidth}
		\includegraphics[width=\linewidth]{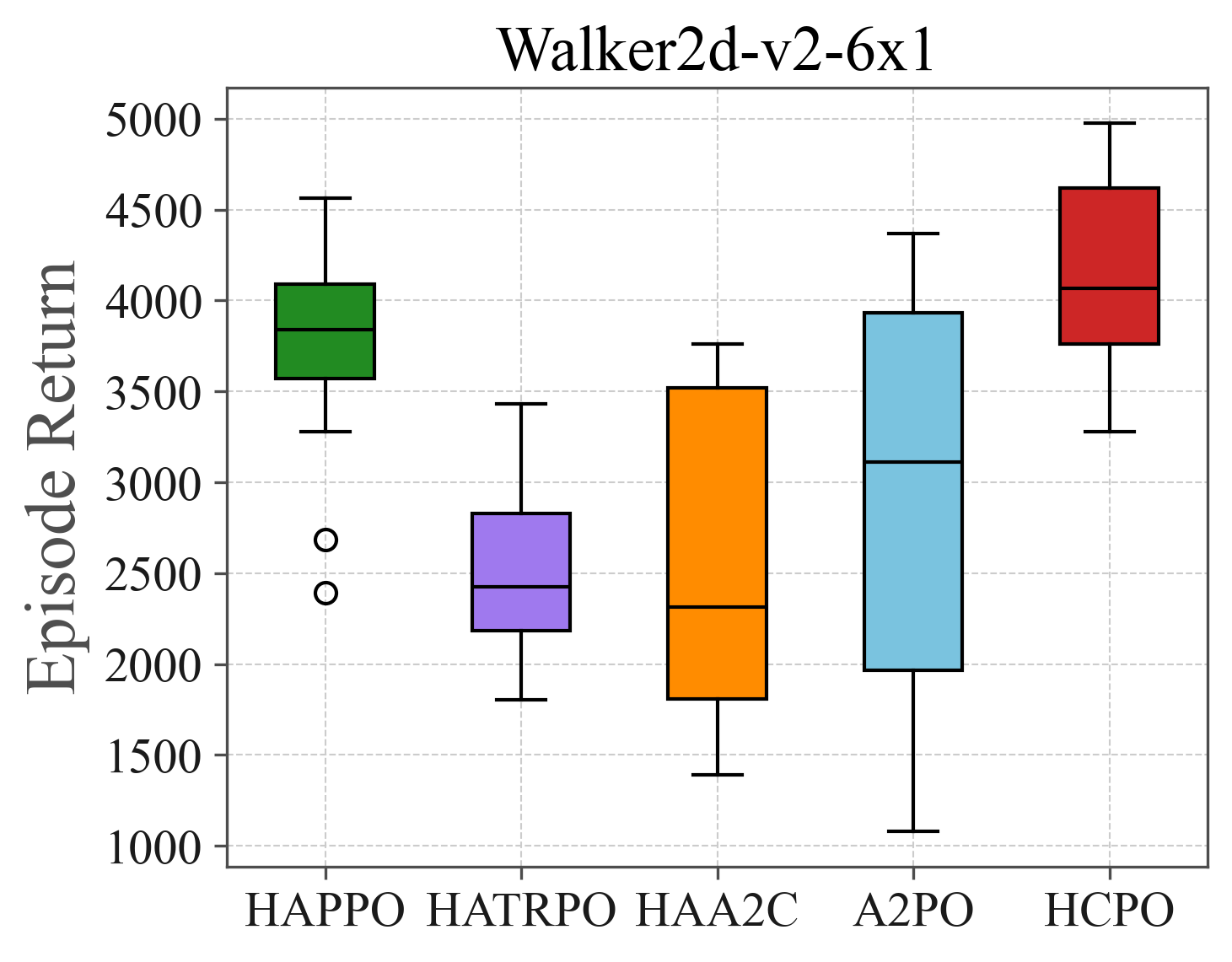}
	\end{subfigure}%
	\begin{subfigure}{0.23\textwidth}
		\includegraphics[width=\linewidth]{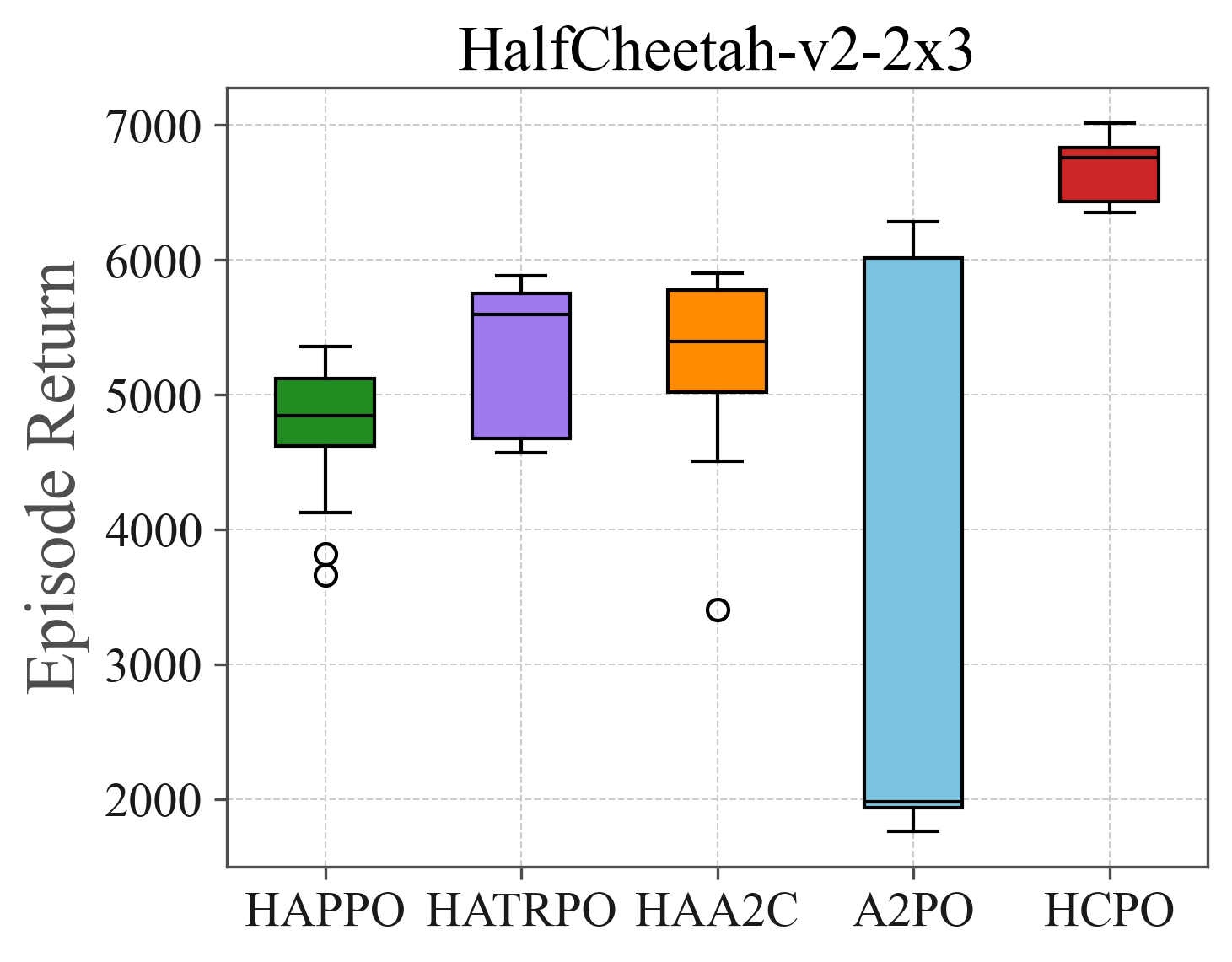}
	\end{subfigure}%
	\begin{subfigure}{0.23\textwidth}
		\includegraphics[width=\linewidth]{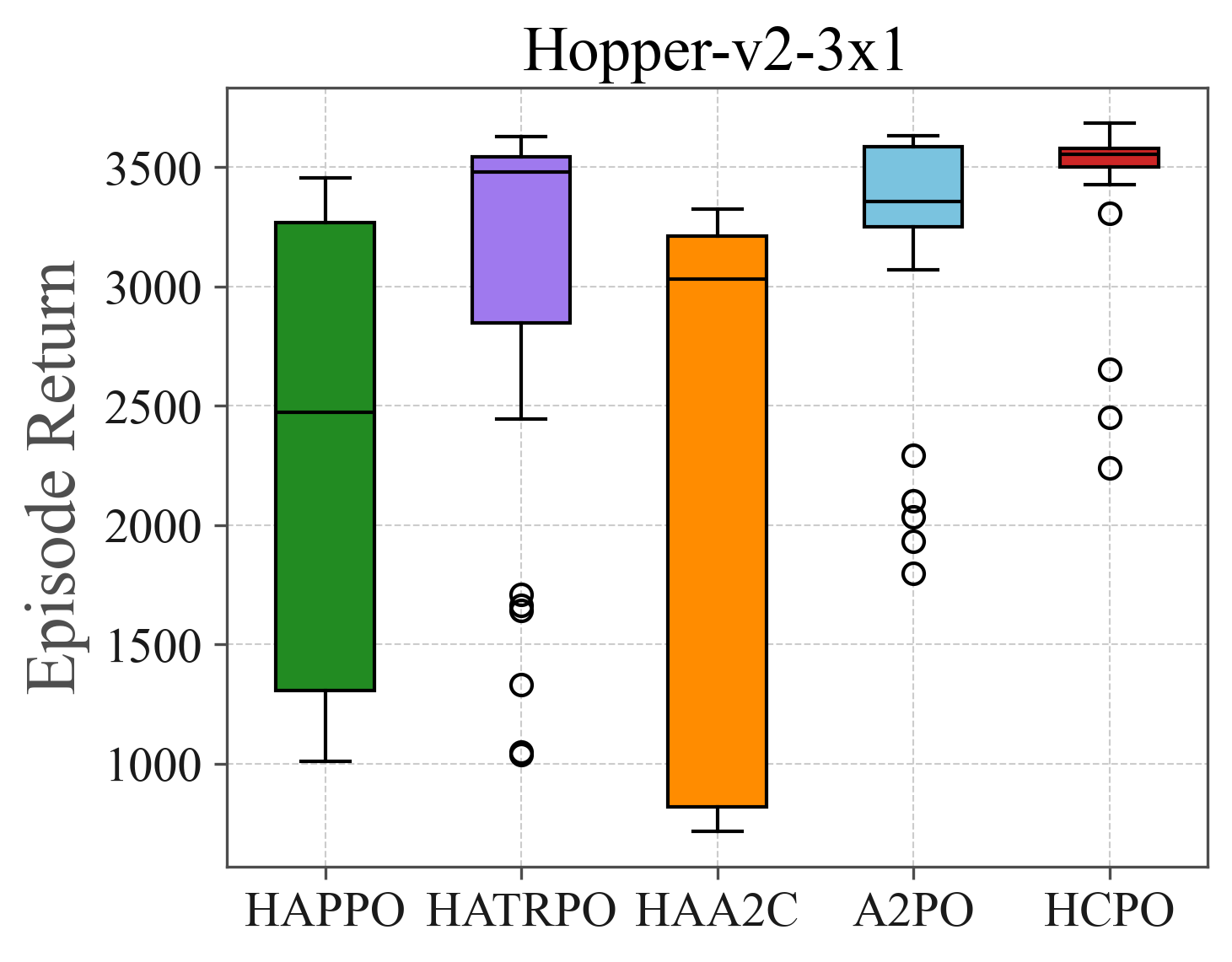}
	\end{subfigure}
	\begin{subfigure}{0.23\textwidth}
		\includegraphics[width=\linewidth]{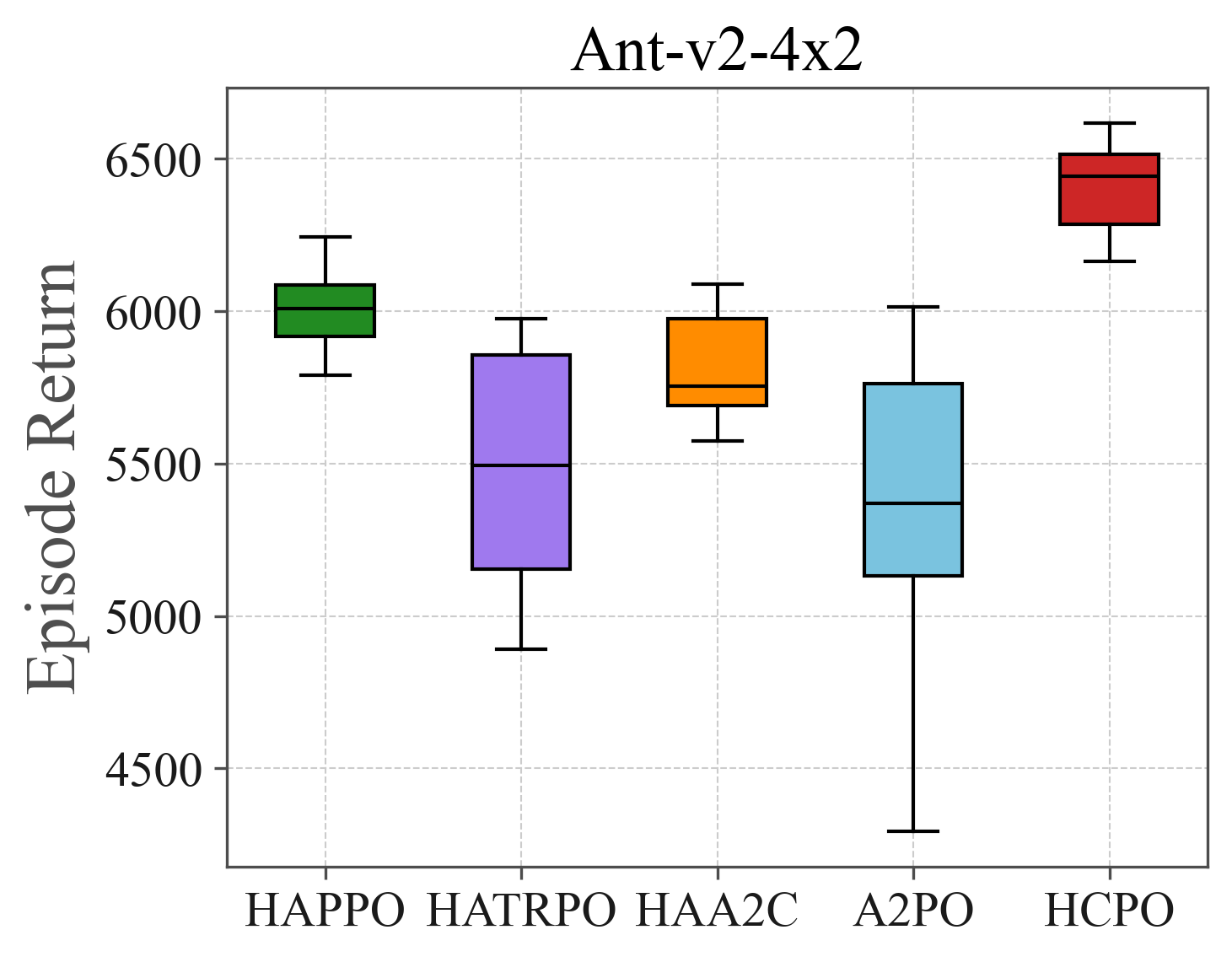}
	\end{subfigure}
	\caption{Comparison of the return performance of different baselines in the MuJoCo task.}\label{boxfigsapx}
\end{figure}
\begin{figure*}[t]%%熵的图，占2栏
	\centering
	\includegraphics[width=0.9\textwidth]{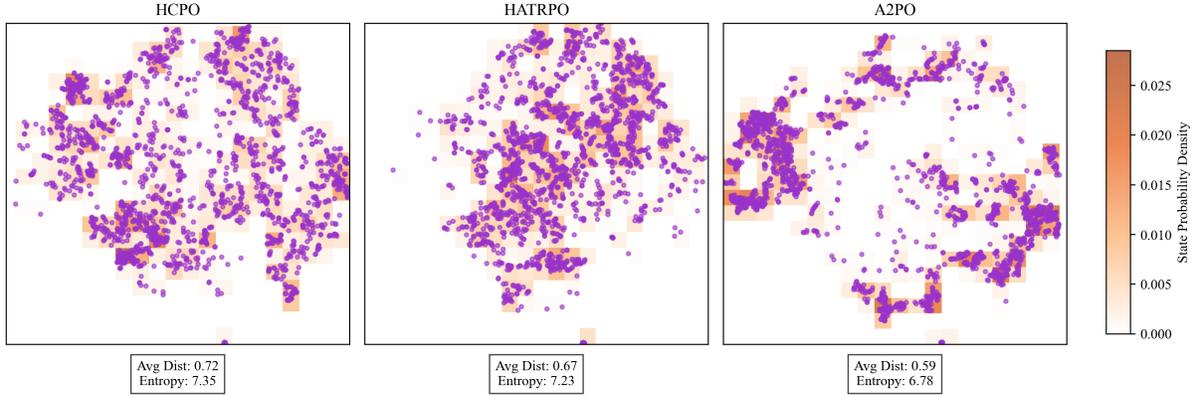}%单栏0.9\columnwidth
	\caption{Exploration comparison: t-SNE visualization and entropy analysis in \textit{Walker2d-v2-6\(\times\)1}.}\label{HCPOentropy1apx}
\end{figure*}

\subsection{C.3 Multi-agent Particle Environment}
We present our detailed experimental hyperparameters in Table~\ref{mpepara}. The results of each algorithm under three random seeds on MPE benchmark are shown in Figure~\ref{mpefigsapx}. In general, HCPO demonstrates comparable performance across all three tasks, highlighting its adaptability and effectiveness in multi-agent settings. Specifically, in \textit{simple\_spread\_v2-discrete} and \textit{simple\_speaker\_listener\_v3-discrete} tasks, HCPO exhibits rapid policy improvement in the early stage of training (0-2 million steps). This indicates that HCPO algorithm has a high cooperative efficiency and sufficient exploration. Although HCPO shows slower initial cooperation efficiency in the \textit{simple\_reference\_v2-discrete} task compared to A2PO, it ultimately outperforms A2PO in final convergence performance, confirming its long-term learning advantage. Furthermore, compared with HATRPO and A2PO, HCPO algorithm shows significant stability and robustness. In summary, HCPO not only maintains competitive convergence speed but also matches or exceeds the performance of the strong baselines through its hierarchical mechanism. 
\begin{table}[htbp]
	\centering
	\begin{tabular}{llllll}
		\toprule
		hyperparameters & value & hyperparameters & value & hyperparameters & value \\
		\cmidrule(r){1-2}%\midrule
		\cmidrule(r){3-4}
		\cmidrule(r){5-6}
		actor lr & 5e-4 & critic lr & 5e-4 &  actor mini batch & 1 \\
		critic mini batch & 1 & gamma & 0.99 &   batch size & 4000 \\
		network & mlp & linear lr decay & False & critic epoch	& 5 \\
		clip param & 0.2 & entropy coef & 0.01 & backtrack coef & 0.8 \\
		conductor kl-threshold  &  0.01 & kl-threshold & 0.005&accept-ratio & 0.5 \\
		K  &  10 &  eval episode  &  40  &  &  \\
		\bottomrule
	\end{tabular}
    \caption{Common hyperparameters in MPE}
    \label{mpepara}
\end{table}
\begin{figure}[htbp]  % 或 [b]，表示顶部或底部  这是占据单栏宽度的代码
	\centering
	\begin{subfigure}{0.32\columnwidth}
		\includegraphics[width=\linewidth]{mpefig1.png}
		%	\caption{Task 1}
	\end{subfigure}
	\begin{subfigure}{0.32\columnwidth}
		\includegraphics[width=\linewidth]{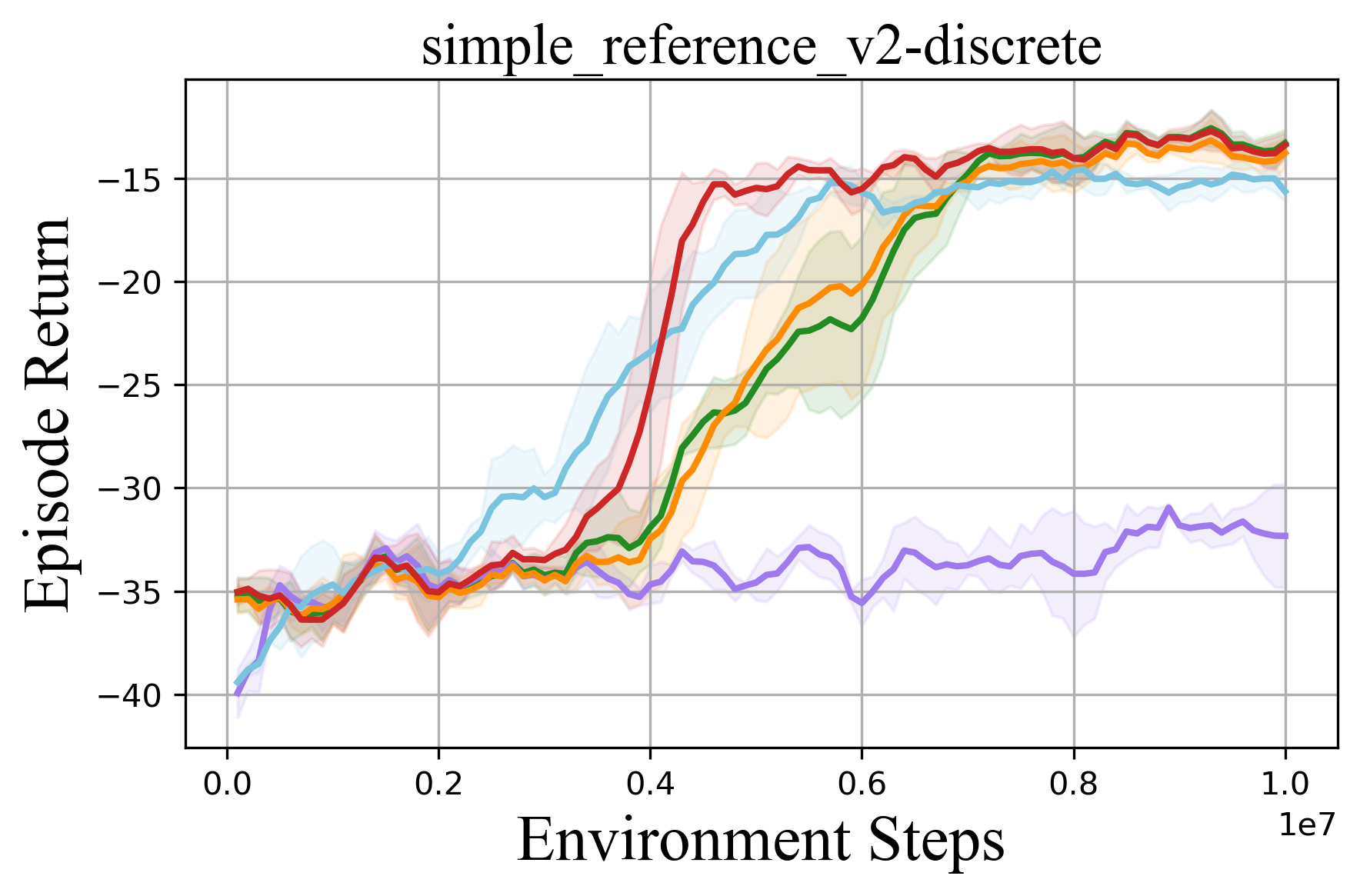}
	%	\caption{Task 2}
	\end{subfigure}
	\begin{subfigure}{0.32\columnwidth}
		\includegraphics[width=\linewidth]{mpefig3.png}
		%	\caption{Task 3}
	\end{subfigure}
	\caption{Performance comparison of HCPO and other strong MARL algorithms across different MPE tasks.}
	\label{mpefigsapx}
\end{figure}

\subsection{C.4 Ablation Studies}
For several key components of HCPO, we conduct ablation studies with results shown in Figure~\ref{ablfigsapx}. In Figure~\ref{subfig13a}, we explore the impact of removing the conductor and varying the number of instructions (hyperparameter \(K\)) on performance. \textcolor{black}{HCPO with the conductor shows a faster increase in winning rate and a higher final rate than without a conductor, demonstrating its effectiveness in boosting cooperation efficiency.} The performance is also influenced by \( K \). Although increasing \( K \) enhances adaptability to the environment, beyond a certain threshold, further increments in \( K \) introduce unnecessary complexity without substantial performance gains. Therefore, it is important to balance performance and resource consumption when selecting \( K \). In Figure~\ref{subfig13b}, we examine the hyperparameter \(\delta_1\), which represents the KL-divergence constraint of centralized conductor's policy. Results show that HCPO is relatively insensitive to \(\delta_1\), but the configuration with \(\delta_1 = 0.01\) demonstrates better performance compared to the other settings. \textcolor{black}{In Figure~\ref{subfig13c}, we evaluate HCPO under four conductor configurations: a centralized conductor with global information, a random conductor (non-learning baseline), no conductor and local conductors only based on local observations (trained via cross-entropy learning, constituting the core of our HCPO algorithm). Figure~\ref{subfig13d} presents the final episode returns in a boxplot format. The results show that HCPO with local conductors achieves a median return comparable to that of a centralized conductor, while substantially outperforming the variant without any conductor. Furthermore, replacing the learned instruction preference distribution with a non-learning conductor that outputs uniformly random instructions leads to inferior performance. These findings collectively validate the effectiveness of our proposed update mechanisms.}
\begin{figure}[htbp]
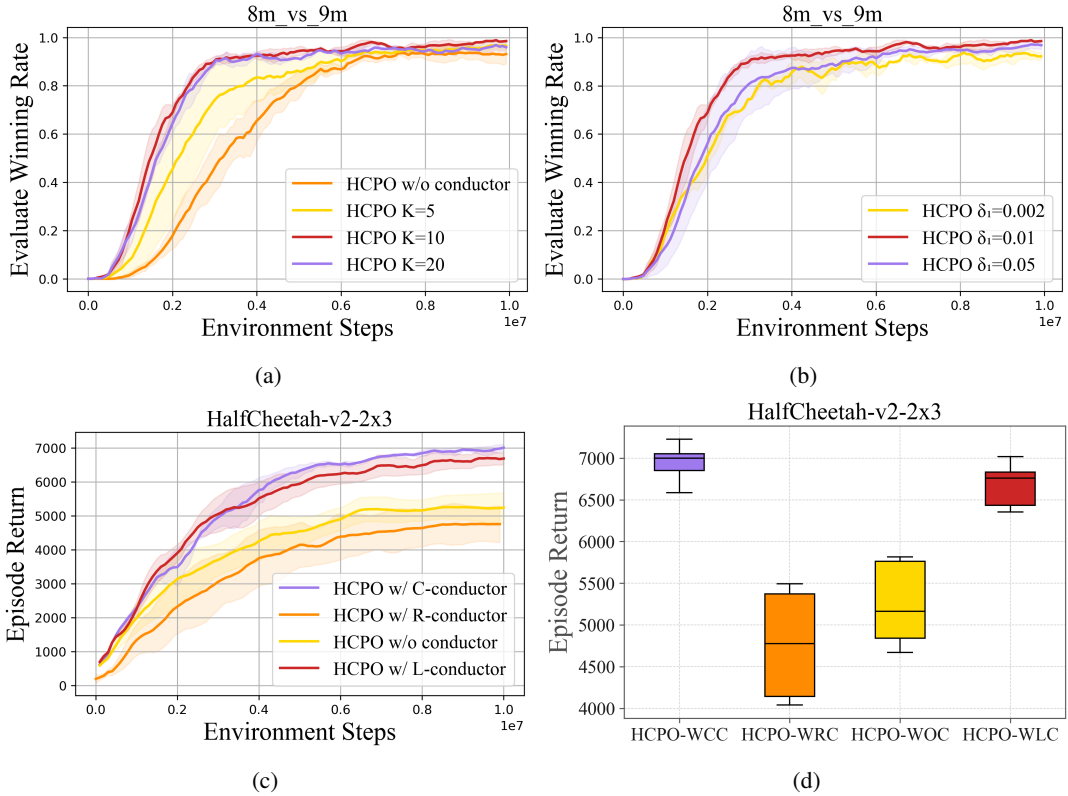

	\centering
	\begin{subfigure}{0.4\columnwidth}
		\includegraphics[width=\linewidth]{ablfig1.png}
		\caption{}
		\label{subfig13a}
	\end{subfigure}%
	\begin{subfigure}{0.4\columnwidth}
		\includegraphics[width=\linewidth]{ablfig2.png}
		\caption{}
		\label{subfig13b}
	\end{subfigure}
	\begin{subfigure}{0.4\columnwidth}
		\includegraphics[width=\linewidth]{ablfig3.png}
		\caption{}
		\label{subfig13c}
	\end{subfigure}
		\begin{subfigure}{0.4\columnwidth}
		\includegraphics[width=\linewidth]{ablfig4.png}
		\caption{}
		\label{subfig13d}
	\end{subfigure}
	\caption{Ablation studies.}\label{ablfigsapx}
\end{figure}

\end{document}